\RequirePackage{booktabs}

\documentclass[sn-mathphys-num]{sn-jnl}

\usepackage{graphicx}%
\usepackage{multirow}%
\usepackage{amsmath,amssymb,amsfonts}%
\usepackage{amsthm}%
\usepackage{mathrsfs}%
\usepackage[title]{appendix}%
\usepackage{xcolor}%
\usepackage{textcomp}%
\usepackage{manyfoot}%
\usepackage{booktabs}%
\usepackage{algorithm}%
\usepackage{algorithmicx}%
\usepackage{algpseudocode}%
\usepackage{listings}%
 
\usepackage{soulpos}

\theoremstyle{thmstyleone}%
\newtheorem{theorem}{Theorem}
\newtheorem{proposition}[theorem]{Proposition}%

\theoremstyle{thmstyletwo}%
\newtheorem{remark}{Remark}%

\theoremstyle{thmstylethree}%
\newtheorem{definition}{Definition}%

\usepackage{my_package}

\begin{document}

\title[Article Title]{Low Variance Trust Region Optimization with Independent Actors and Sequential Updates in Cooperative Multi-agent Reinforcement Learning}


\author[1]{\fnm{Bang Giang} \sur{Le}}\email{giangbang@vnu.edu.vn}

\author*[1]{\fnm{Viet Cuong} \sur{Ta}}\email{cuongtv@vnu.edu.vn}

\affil[1]{\orgdiv{Human Machine Interaction Laboratory}, \orgname{VNU University of Engineering and Technology}, \orgaddress{\street{144 Xuan Thuy, Cau Giay}, \postcode{100000}, \city{Hanoi}, \country{Vietnam}}}


\abstract{
Cooperative multi-agent reinforcement learning assumes each agent shares the same reward function and can be trained effectively using the Trust Region framework of single-agent.
Instead of relying on other agents' actions, the independent actors setting considers each agent to act based only on its local information, thus having more flexible applications. 
However, {in the sequential update framework,} it is required to re-estimate the joint advantage function after each individual agent's policy step. 
Despite the practical success of importance sampling, the updated advantage function suffers from exponentially high variance problems, which likely results in unstable convergence.
In this work, we first analyze the high variance advantage both empirically and theoretically.
To overcome this limitation, we introduce a clipping objective to control the upper bounds of the advantage fluctuation in sequential updates. 
With the proposed objective, we provide a monotonic bound with sub-linear convergence to $\varepsilon$-Nash Equilibria.
We further derive two new practical algorithms using our clipping objective.
{The experiment results on three popular multi-agent reinforcement learning benchmarks show that our proposed method outperforms the tested baselines in most environments.}
By carefully analyzing different training settings, our proposed method is highlighted with both stable convergence properties and the desired low advantage variance estimation.
{For reproducibility purposes, our source code is publicity available at}
\url{https://github.com/giangbang/Low-Variance-Trust-Region-MARL}
}

\keywords{Multi-agent Reinforcement Learning,
Trust Region Policy Optimization,
Low Variance,
Independent Actors}



\maketitle

\section{Introduction}
Traditional methods in reinforcement learning (RL) focus on decision-making problems with single-agent environments and achieve impressive success in a wide range of domains \cite{schulman2015trust, schulman2017proximal}.
Recently, there has been growing interest in expanding reinforcement learning algorithms to multi-agent domains, especially in the fully cooperative setting where all agents coordinate to achieve common objectives due to its close connection to the single-agent ones \cite{ kuba2021trust, rashid2020monotonic}.
Multi-agent reinforcement learning (MARL) approaches can be categorized into value-based \cite{rashid2020monotonic} and policy-based \cite{lowe2017multi, de2020independent}.

Most of the practical algorithms for MARL model agents as independent policies \citep{kuba2021trust, rashid2020monotonic, yu2022surprising} that act myopically based on local information about the agents' states and actions while improving the rewards globally. The benefit of independent actors is paramount, including no explicit communication or modeling of behaviors of other agents
and ease of optimization. As a result, a growing interest in learning independent policies has arisen recently \citep{de2020independent,  daskalakis2020independent, leonardos2021global, ding2022independent}. Independent actors with the aid of global information during training time form the popular \textit{Centralized Training with Decentralized Execution (CTDE)} paradigm that prevails in the multi-agent RL literature \cite{foerster2018counterfactual}. 

{In single-agent RL, since the introduction of trust region policy optimization (TRPO) \cite{schulman2015trust}, Trust Region methods are among the most successful frameworks for optimizing the policy due to its well-supported theory and good convergence property.}
{Later, proximal policy optimization (PPO) \cite{schulman2017proximal} is built upon TRPO by introducing a surrogate clipping mechanism to policy updates and achieves competitive performances in various domains \cite{engstrom2019implementation}}.
Inspired by the success of the Trust Region framework for single-agent RL, policy-based approaches attempt to extend the framework into cooperative multi-agent domains. Recently, \cite{yu2022surprising} show that MAPPO, a direct application of the trust region optimization for single-agent PPO, can achieve superior performance on multi-agent benchmarks.
However, its simultaneous update scheme disregards other agents' actions, potentially leading to miscoordinations \cite{kuba2021trust} and non-stationarity problems \cite{hernandez2017survey}. 
Addressing this limitation, \cite{kuba2021trust} propose a theoretically grounded sequential-update framework,
where agents consider the changes of previous agents before updating their policies that takes into account the changes of previous agents with a theoretical guarantee of monotonic improvement. 
Practical algorithms of the sequential update with trust region optimization are Heterogeneous-Agent Trust Region Policy Optimisation (HATPRO) and Heterogeneous-Agent Proximal Policy Optimisation (HAPPO) algorithms, both of which model agents as independent actors.
Recent works further explore the potential of sequential updates beyond independent actors. For example, \cite{wen2022multi} utilizes a Transformer-based architecture as the policy while \cite{zhao2023local} models agents as conditional dependent actors. These approaches highlight the merits of sequential updates, especially when agents can observe and respond to the other agent's actions.

In this work, we revisit the sequential update Trust Region methods with independent actors. 
The advantage estimate of the decomposition, when coupled with independent actors, can potentially suffer from high variance problems that grow exponentially with the number of agents.   
While the sequential update is a proper approach in training heterogeneous multi-agents, it relies on a joint advantage estimator for tackling the credit assignment.
Naive importance sampling of this estimator allows zero bias estimator, but its variance can increase indefinitely.
To address this issue, we propose a novel surrogate objective function that can guarantee a low variance estimate. We further analyze the Trust Region optimization with our objective and show that it converges non-asymptotically to approximate Nash equilibria at a sublinear rate. For practical implementations of our method, we derive two new algorithms clip-HAPPO and clip-HATRPO, and demonstrate their competitive performance on three popular MARL benchmarks. The ablation study further proves that our method reduces the effects of high-variance updates, resulting in more effective learning across the agents. Our contributions are summarized below
\begin{itemize}
    \item We give a systematic analysis on the high variance problem of sequential updates with independent actors in MARL {and demonstrate, both empirically and theoretically, that the variance of the advantage estimate can grow exponentially with the number of agents.}
    \item We propose a novel trust region method that incorporates the clipping mechanism to the advantage estimate {with a low variance guarantee}; We further establish a theoretical analysis of the monotonic bound and the non-asymptotic $O(1/\sqrt{K})$ convergence rate of our algorithm to Nash equilibria. 
    \item {We conduct extensive experiments on three popular MARL benchmarks and the results demonstrate that our method can improve the training progress in general.
    By reducing the estimated advantage variance, our proposed method has better convergence properties and is more stable than other baselines.
    Furthermore, the low variance property makes our Trust Region method able to learn in a variety of settings, which emphasizes the superiority of the clipped mechanism.}
\end{itemize}

{The remaining of this paper is organized as follows: Section \ref{sec:motivation} presents the motivation of the work; our main method is described in Section \ref{sec:method}; the experiments and results are illustrated in the next Section \ref{sec:experiments}; 
we present the related works are presented in Section \ref{sec:relatedwork}; the conclusion is discussed in Section \ref{sec:conclusion}.
Furthermore, the proofs and experiment details are presented in the Appendix Section \ref{appendix:proofs} and Appendix Section \ref{appendix:experimentdetails}, respectively.}

\section{Motivation}
\label{sec:motivation}
\subsection{Background and Preliminaries}
\textbf{Problem formulation.}
We consider the fully cooperative MARL, formulated as a Markov game defined by a tuple $\langle \mathcal N, \mathcal S, {{\mathcal A}}, P, r, \gamma, \rho \rangle$. We denote $\mathcal N = \{1, 2, \dots, N\}$ is the set of all $N$ agents, $\mathcal S$ is the finite state space, ${{\mathcal A}}$ is the joint action space of each $i$ agent's action space $\mathcal A_i$, i.e ${{\mathcal A}} = (\mathcal A_1 \times \dots \times \mathcal A_{N})$, we assume that all action spaces are finite and of equal size, 
$\Pi^i$ is the space of stochastic policies of the agent $i$,
$\rho$ is the initial state distribution,
and $\gamma \in [0, 1)$ is the discount factor. 
Since we consider the fully cooperative setting, all the agents share a single global bounded reward function $r:\mathcal{S}\times \mathbf{\mathcal A} \rightarrow \mathbb R$. 
At each timestep $t$, all agents simultaneously take actions $\mathbf a_t = (a^1_t, a_t^2, \dots, a^{N}_t)$ according to their respective policy $\pi^i \in \Pi^i$ that are \textit{independent} of each other, then the environment transitions to the next state $s_{t+1}$ according to the dynamic kernel $P$. 
Additionally, let $d^\pi_\rho (s) = \mathbb E_{s_0 \sim \rho}(1-\gamma)\sum_{t=0}^\infty \gamma^t \mathbb P(s_t = s)$ be the discounted marginal state distribution induced by the joint policy $\boldsymbol{\pi}$ and $Q_{\boldsymbol{\pi}}(s, \boldsymbol{a})=\mathbb E_{\boldsymbol{\pi}}\big[\sum_{t=0}\gamma^t r_t|s_0=s, \boldsymbol{a}_0=\boldsymbol{a}\big]$ the joint Q function of policy $\boldsymbol{\pi}$. 
The goal of fully cooperative MARL is to maximize the total expected cumulative sum of all the agents' reward, $J(\boldsymbol  {\pi}) = \mathbb E_{\boldsymbol  {\pi}} \sum_t \gamma^t r_t$.

\textbf{Notations. }
Throughout the paper, we use bold notations to denote the joint action $\boldsymbol{a}$ and joint policy $\boldsymbol{\pi}$, while the unbolded letters are used for individual agents. 
We define $\boldsymbol{\pi}^{{1:m}}$ and $\boldsymbol{a}^{{1:m}}$ the joint policy and action of a subset of the first $m$ agents respectively, while $\boldsymbol{\cdot}^{-(1:m)}$ be their complement. 
We also denote $A=|\mathcal A_i|, \forall i\in \mathcal N$ the cardinality of an agent's action space. 
In this paper, we regularly use the notations of the "partially-joint" Q function and advantage function extended to the multi-agent setting, defined below
\begin{definition} \label{def:partialQA}
Let ${1:m}$ be a subset of the first $m$ agents from $\mathcal N$,
and $-{1:m}$ its complement. The multi-agent state-action value function is defined as
\begin{equation*}
    Q_{\boldsymbol{\pi}}^{{1:m}}(s, \boldsymbol{a}^{{1:m}}) \triangleq \mathbb E_{\boldsymbol{a}^{-{1:m}}\sim \boldsymbol{\pi}^{-{1:m}}} \big[ Q_{\boldsymbol{\pi}}\big(   
        s, \boldsymbol{a}^{{1:m}}, \boldsymbol{a}^{-{1:m}}
    \big) \big],
\end{equation*}
and the multiagent advantage function of the $m$-th agent is 
\begin{flalign*}
\begin{multlined}[t]
    A_{\boldsymbol{\pi}}^m\left(s, \boldsymbol{a}^{1:m-1}, {a}^{m}\right) \triangleq Q_{\boldsymbol{\pi}}^{1:m}\left( s, \boldsymbol{a}^{{1:m-1}} , a^m\right) 
    - Q_{\boldsymbol{\pi}}^{1:m-1}\left( s, \boldsymbol{a}^{{1:m-1}} \right).
\end{multlined}
\end{flalign*}
\end{definition}
These definitions are narrower than those provided in \cite{kuba2021trust, zhao2023local}, and are sufficient for our analysis. 

We also denote the regular "fully-joint" advantage function as $A_{\boldsymbol{\pi}}(s, \boldsymbol{a})=Q_{\boldsymbol{\pi}}(s, \boldsymbol{a})-\mathbb E_{\boldsymbol{a} \sim \boldsymbol{\pi}}Q_{\boldsymbol{\pi}}(s, \boldsymbol{a})$. Finally, we omit the arguments of functions when it is clear from the context, for example, $A_{\boldsymbol{\pi}}$ instead of $A_{\boldsymbol{\pi}}(s, \boldsymbol{a})$.


\begin{definition}
    {For any two (individual/joint) policy $\pi$ and $\widetilde{\pi}$ and any state $s \in \mathcal S$, the Kullback–Leibler (KL) divergence between the two policies is defined as}
    \[D_\text{KL}(\pi(\cdot | s)\| \widetilde{\pi}(\cdot|s)) = \sum_a \pi(a|s)\log\left ( \frac{\pi(a|s)}{\widetilde{\pi}(a|s)} \right),\]
    {where $a$ is the individual/joint actions depending on the type of the policies in the context.}
\end{definition}

{\textbf{Trust Region Policy Optimization.} 
TRPO \cite{schulman2015trust} is a single-agent RL algorithm designed to optimize a stochastic policy while ensuring monotonic improvement with each iteration 
by leveraging the lower bound of performance between the new policy $\widetilde\pi$ and the old policy $\pi$:
\begin{equation}
J(\widetilde\pi) - J(\pi) \geq \mathbb E_{s\sim d^{\pi}_\rho, a\sim \widetilde\pi}A_{\pi}(s, a) - C\text{D}_\text{KL}^{\max}(\pi, \widetilde\pi),    
\label{eq:trpo}
\end{equation}
with $C=\frac{4\gamma\max_{s, a}|A_\pi(s, a)|}{(1-\gamma)^2}$ and $\text{D}_\text{KL}^{\max}(\pi, \widetilde\pi)$ is the maximal KL divergence of $\pi$ and $\widetilde\pi$ over all states. TRPO directly transforms the above RHS into a constrained optimization problem, whereas PPO \cite{schulman2017proximal} achieves similar objectives more efficiently by employing a simple clipping trick.
}

\textbf{Nash Policy.} We introduce the solution concept in multi-agent games with the definition of Nash Equilibrium (NE).
\begin{definition}
    In a fully-cooperative game, a joint policy $\boldsymbol{\pi} = (\pi^1, \pi^2, \dots, \pi^N)$ is an $\varepsilon$-Nash equilibrium if for every $i\in \mathcal N, \widetilde{\pi}^i \in \Pi^i$ implies $J(\boldsymbol{\pi})\geq J(\boldsymbol{\pi}^{-i}, \widetilde\pi^i)-\varepsilon$. 
    Every NE is a 0-equilibrium.
\end{definition}


{NE plays an important role in multi-agent games, representing stable equilibriums where no agent benefits from unilaterally deviating from current strategies. 
From the optimization perspective, convergence to NE is the common practice in theoretical multi-agent RL literature \cite{leonardos2021global, ding2022independent}}.


\textbf{Sequential Update Trust Region Methods}.
\cite{kuba2021trust} proposed a trust region method that guarantees monotonic improvement in theory. 
Similar to the TRPO, the main results can be summarized in the following
\begin{lemma} [Kuba et al. \cite{kuba2021trust}, Lemma 2] \label{lem:trustbound}
Let $\boldsymbol{\pi}$ be a joint policy and for any other joint policies $\boldsymbol{\widetilde{\pi}}$ and $\boldsymbol{\hat{\pi}}$, 
define the surrogate loss of multiagent as $L_{\boldsymbol{\pi}}^{1:m}(\boldsymbol{\widetilde{\pi}}^{{1:m-1}}, \hat{\pi}^{m}) \triangleq \mathbb E_{s\sim d^{\boldsymbol{\pi}}_{\rho},a^{{1:m- 1}}\sim\boldsymbol{\widetilde{\pi}}^{{1:m-1}}, a^{m}\sim \hat{\pi}^{m}}[A^{m}_{\boldsymbol{\pi}}(s, \boldsymbol{a}^{{1:m-1}}, a^{m})]$,
then 
\begin{flalign} \label{eq:harlbound}
\begin{multlined}
    J(\boldsymbol{\widetilde{\pi}}) \geq J(\boldsymbol{\pi}) + \frac{1}{1-\gamma}\sum_{m=1}^N \Big[ L^{{1:m}}_{\boldsymbol{\pi}} (\boldsymbol{\widetilde{\pi}}^{{1:m-1}}, \widetilde{\pi}^{m} ) 
    - C
    \mathrm{D}_{\mathrm{KL}}^{\max}
    (\pi^{m}, \widetilde{\pi}^{m})\Big],
\end{multlined}
\end{flalign}
where $C=\frac{4\gamma \epsilon} {1-\gamma}$, $\epsilon=\max_{s, \boldsymbol{a}}|A_{\boldsymbol{\pi}}(s, \boldsymbol{a})|$ and $\mathrm{D}_{\mathrm{KL}}^{\max} (\pi, \widetilde{\pi}) = \max_s {D}_{\mathrm{KL}}\left(\pi(\cdot| s)\| \widetilde{\pi}(\cdot| s)\right)$.
\end{lemma}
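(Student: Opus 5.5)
Our plan is to reduce the multi-agent statement to the single-agent TRPO bound \eqref{eq:trpo} applied to the joint policy, and then split both the advantage term and the KL term agent by agent.

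\textbf{Step 1 (TRPO bound for the joint policy).} We view $\boldsymbol{\pi}$ and $\boldsymbol{\widetilde\pi}$ as single-agent policies on the joint action space $\mathcal A$. We then invoke the standard bound behind \eqref{eq:trpo}. Its proof uses the performance-difference identity together with a coupling argument, and yields
\[
J(\boldsymbol{\widetilde\pi}) \geq J(\boldsymbol{\pi}) + \frac{1}{1-\gamma}\mathbb E_{s\sim d^{\boldsymbol\pi}_\rho,\boldsymbol a\sim\boldsymbol{\widetilde\pi}}A_{\boldsymbol\pi}(s,\boldsymbol a) - \frac{4\gamma\epsilon}{(1-\gamma)^2}\mathrm{D}_{\mathrm{KL}}^{\max}(\boldsymbol\pi,\boldsymbol{\widetilde\pi}).
\]
The factor $1/(1-\gamma)$ comes from the normalization of $d^{\boldsymbol\pi}_\rho$ used in this paper. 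We must check that this constant matches $\frac{1}{1-\gamma}C$ with $C=\frac{4\gamma\epsilon}{1-\gamma}$, which it does.

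\textbf{Step 2 (advantage decomposition).} Using Definition~\ref{def:partialQA}, we telescope:
\[
\sum_{m=1}^N A^m_{\boldsymbol\pi}(s,\boldsymbol a^{1:m-1},a^m)=Q^{1:N}_{\boldsymbol\pi}(s,\boldsymbol a)-Q^{1:0}_{\boldsymbol\pi}(s)=Q_{\boldsymbol\pi}(s,\boldsymbol a)-V_{\boldsymbol\pi}(s)=A_{\boldsymbol\pi}(s,\boldsymbol a).
\]
Taking the expectation over $\boldsymbol a\sim\boldsymbol{\widetilde\pi}$, independence of the actors gives $\boldsymbol{\widetilde\pi}=\prod_i\widetilde\pi^i$. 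The $m$-th term depends only on $\boldsymbol a^{1:m}$, so its expectation is exactly $L^{1:m}_{\boldsymbol\pi}(\boldsymbol{\widetilde\pi}^{1:m-1},\widetilde\pi^m)$. Hence the first-order term equals $\sum_m L^{1:m}_{\boldsymbol\pi}$.

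\textbf{Step 3 (KL split).} For product policies, KL is additive at each state:
\[
D_{\mathrm{KL}}(\boldsymbol\pi(\cdot|s)\|\boldsymbol{\widetilde\pi}(\cdot|s))=\sum_m D_{\mathrm{KL}}(\pi^m(\cdot|s)\|\widetilde\pi^m(\cdot|s)).
\]
Taking the maximum over $s$ then gives
\[
\mathrm{D}_{\mathrm{KL}}^{\max}(\boldsymbol\pi,\boldsymbol{\widetilde\pi})\le\sum_m\mathrm{D}_{\mathrm{KL}}^{\max}(\pi^m,\widetilde\pi^m),
\]
since a max of a sum is at most the sum of maxes. Substituting into Step 1 and distributing $\frac{1}{1-\gamma}$ yields \eqref{eq:harlbound}.

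\textbf{Main obstacle.} The expected difficulty is Step 1's constant bookkeeping. The TRPO derivation bounds the penalty via total variation, $\alpha^2=\max_s D_{TV}^2\le\mathrm{D}_{\mathrm{KL}}^{\max}$ (Pinsker), which gives $\frac{4\gamma\epsilon\alpha^2}{(1-\gamma)^2}$. If the paper's normalization of $d^{\boldsymbol\pi}_\rho$ or of the KL direction differs, we would redo the coupling argument directly rather than cite it. In that case we would also keep the KL direction $D_{\mathrm{KL}}(\pi\|\widetilde\pi)$, which Pinsker permits because total variation is symmetric.
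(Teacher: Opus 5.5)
Your proposal is correct and matches the standard proof of this result, which the paper imports from Kuba et al.\ without reproducing it: apply the TRPO bound to the joint policy, rewrite the first-order term via the advantage decomposition (Lemma~\ref{lemma:multagentdecomposition}), and split the joint KL using additivity over independent agents together with $\max_s\sum_m \le \sum_m \max_s$. The paper's own proof of Lemma~\ref{lem:improvementboudclip} follows the same skeleton, with $L^1$ distances and Proposition~\ref{prop:triangle} in place of KL additivity. Your $\frac{1}{1-\gamma}$ bookkeeping is also right: (\ref{eq:trpo}) as printed omits this factor on the advantage term, so writing the normalized joint bound out explicitly, as you did, rather than citing it verbatim was the correct choice.
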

Lemma \ref{lem:trustbound} provides a lower bound on the performance of a new joint policy $\boldsymbol{\widetilde{\pi}}$. 
This bound is estimated from the samples collected by the current joint policy $\boldsymbol{\pi}$ and becomes increasingly tight when the two joint policies get closer to each other.
In other words, Lemma \ref{lem:trustbound} is a direct extension of the bound in (\ref{eq:trpo}) into the fully cooperative MARL setting.

An iterative algorithm can be derived where at each iteration $k$, one searches for the next joint policy at the iteration $k+1$ by \textit{sequentially} optimizing the RHS of (\ref{eq:harlbound}) with each local policy $\pi^{m}$
\begin{equation}
    {\pi}_{k+1}^{m} = \argmax_{\pi^{m}}
    \left[ L^{{1:m}}_{\boldsymbol{\pi}_k} (\boldsymbol{{\pi}}_{k+1}^{{1:m-1}}, {\pi}^{m} )- 
    C
    \text{D}_{\text{KL}}^{\max}(\pi_k^{m}, \pi^{m})
    \right] \label{eq:harl_obj}
\end{equation}
The sequential update scheme in (\ref{eq:harl_obj})  monotonically improve after every iteration, i.e. $J(\boldsymbol{\pi}_{k+1}) \geq J(\boldsymbol{\pi}_{k}), \forall k \in \mathbb N$, and converges asymptotically to Nash Equilibria. To solve the optimization problem (\ref{eq:harl_obj}), we need to estimate $L^{{1:m}}_{\boldsymbol{\pi}_k} (\boldsymbol{{\pi}}_{k+1}^{{1:m-1}}, {\pi}^{m} )$ at each update of an agent, or equivalently, we need to estimate the advantage $\mathbb E_{s\sim d^{\boldsymbol{\pi}},a^{{1:m-1}}\sim\boldsymbol{\widetilde{\pi}}^{{1:m-1}}, a^{m}\sim \hat{\pi}^{m}}[A^{m}_{\boldsymbol{\pi}}(s, \boldsymbol{a}^{{1:m-1}}, a^{m})]$.
{Kuba et al.} \cite{kuba2021trust} {estimates this quantity through the following expression}
\begin{multline} \label{eq:advest}
    \mathbb E_{\boldsymbol{a}^{{1:m-1}} \sim \boldsymbol{\widetilde{\pi}}, a^{m}\sim\hat{\pi}^{m}}[A_{\boldsymbol{\pi}}^{m} (s, \boldsymbol{a}^{{i:m-1}}, a^{m})] \\
    = \mathbb E_{\boldsymbol{a} \sim \boldsymbol{\pi}} \Big[  
    \Big(
    \frac{\hat{\pi}^{m}(a^{m}|s)}{\pi^{m}(a^{m}|s)} -1 
    \Big)
    \frac{\boldsymbol{\widetilde{\pi}} ^{{1:m-1}}(\boldsymbol{a}^{{1:m-1}}|s)}{
    \boldsymbol{\pi} ^{{1:m-1}}(\boldsymbol{a}^{{1:m-1}}|s)
    }
    A_{\boldsymbol{\pi}} 
    \Big]. 
\end{multline}
For convenience, we define $\Re^{{1:m-1}}_{\boldsymbol{\widetilde{\pi}}, \boldsymbol{\pi}}(s, \boldsymbol{a}) \triangleq \frac{\boldsymbol{\widetilde{\pi}} ^{{1:m-1}}(\boldsymbol{a}^{{1:m-1}}|s)}{ \boldsymbol{\pi} ^{{1:m-1}}(\boldsymbol{a}^{{1:m-1}}|s)}$ as the shorthand for the ratio between the old and new probabilities after the updates of previous agents, with $\Re^{1:0}_{\boldsymbol{\widetilde{\pi}}, \boldsymbol{\pi}}=1$. The practical implementation of (\ref{eq:advest}) uses Monte-Carlo sampling estimation as a tractable surrogate loss:
\begin{equation} \label{eq:estimate_adv}
    \hat{L}^{{1:m}}_{\boldsymbol{\pi}}(\boldsymbol{\widetilde{\pi}}^{{1:m-1}}, \hat{\pi}^{m}, s, \boldsymbol{a}) =
    \frac{\hat{\pi}^{m}(a^{m}|s)}{\pi^{m}(a^{m}|s)}
    \Re^{{1:m-1}}_{\boldsymbol{\widetilde{\pi}}, \boldsymbol{\pi}} 
    A_{\boldsymbol{\pi}}
    .
\end{equation}


\subsection{Upper bound of the Advantage variance}
The monotonic improvement property in Lemma \ref{lem:trustbound} hinges on the premise that the expected advantage in (\ref{eq:advest}) can be calculated in its closed form. However, its sample approximation involves an importance sampling ratio that, when combined with the assumption that all agents act independently, is the product of importance sampling ratios of individual agents, i.e. $\Re^{{1:m-1}}_{\boldsymbol{\widetilde{\pi}}, \boldsymbol{\pi}}(s, \boldsymbol{a}) = \prod_{k=1}^{m-1} \frac{\widetilde{\pi}^{k}(a^{k}|s)}{\pi^{k}(a^{k}|s)}$ (this is sometimes referred to as the Mean-Field assumption \citep{li2021dealing}).
In this section, we show that {a single agent's advantage estimation} can potentially suffer from high variance which can grow exponentially with the number of agents in worst cases. 
Firstly, we observe that for any state $s \in \mathcal S$ and joint policies $\boldsymbol{\pi}$ and $\boldsymbol{\widetilde\pi}$, the variance of the advantage estimator satisfies 
\begin{align}
        \textbf{Var}_{\boldsymbol{a} \sim\boldsymbol{\pi}}\Re^{{1:m-1}}_{\boldsymbol{\widetilde{\pi}}, \boldsymbol{\pi}}(s, \boldsymbol{a}) 
        A_{\boldsymbol{\pi}}(s, \boldsymbol{a}) &= \mathbb E_{\boldsymbol{a} \sim\boldsymbol{\pi}} \Big[
        \Big(\Re^{{1:m-1}}_{\boldsymbol{\widetilde{\pi}}, \boldsymbol{\pi}}  
        A_{\boldsymbol{\pi}}
        \Big)^2 \Big]
        - 
        \Big(\mathbb E_{\boldsymbol{a} \sim\boldsymbol{\pi}} 
        \Re^{{1:m-1}}_{\boldsymbol{\widetilde{\pi}}, \boldsymbol{\pi}}  
        A_{\boldsymbol{\pi}}
        \Big)^2 \notag \\
        &\leq \epsilon^2
    \mathbb E_{\boldsymbol{a}^{{1:m-1}} \sim \boldsymbol{\widetilde{\pi}}} 
    \big[
    \Re^{{1:m-1}}_{\boldsymbol{\widetilde{\pi}}, \boldsymbol{\pi}}(s, \boldsymbol{a})
    \big] \label{eq:worstcasevar}
\end{align}

As shown in (\ref{eq:worstcasevar}), 
the advantage variance depends on the joint sampling ratio, which is generally difficult to model.
The following illustrates an example where this ratio can grow rapidly.

\begin{restatable}{proposition}{exponentialexample}\label{prop:exponentialexample}
    Consider a fully cooperative environment with one state, an even number of $2N$ agents, and a joint action space $\{0, 1\}^{2N}$. The reward of all agents is 1 when at least $N$ agents take action 1 and -1 otherwise. Let the joint policy $\boldsymbol{\pi}$ of all agents the uniform distribution and $C=3/2$. Then using one sequential update scheme defined in (\ref{eq:harl_obj}) with the loss estimated in (\ref{eq:estimate_adv}), the expectation under the new joint policy $\boldsymbol{\widetilde \pi}$ of the sampling ratio $\Re^{{1:m-1}}_{\boldsymbol{\widetilde{\pi}}, \boldsymbol{\pi}}$ grows exponentially with the number of updated agents
    \begin{equation*}
        \mathbb E_{\boldsymbol{a}^{{1:m-1}} \sim \boldsymbol{\widetilde{\pi}}} 
        \Re^{{1:m-1}}_{\boldsymbol{\widetilde{\pi}}, \boldsymbol{\pi}} \geq \frac{1.5^{m-1}}{2}.
    \end{equation*}
\end{restatable}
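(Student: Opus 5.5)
The plan is to reduce the claim to a per-agent statement about how far each updated policy moves away from uniform. The key observation is that, under the independent-actor (mean-field) factorisation, the expected joint ratio factorises:
\[
\mathbb E_{\boldsymbol{a}^{1:m-1}\sim\boldsymbol{\widetilde\pi}}\Re^{1:m-1}_{\boldsymbol{\widetilde\pi},\boldsymbol{\pi}}
=\prod_{k=1}^{m-1}\sum_{a\in\{0,1\}}\frac{\widetilde\pi^k(a)^2}{\pi^k(a)}
=\prod_{k=1}^{m-1}\Bigl(1+(2p_k-1)^2\Bigr),
\]
where $p_k=\widetilde\pi^k(1)$ and $\pi^k$ is uniform. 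So it suffices to show that the product of these factors is at least $1.5^{m-1}/2$. I would aim to show that every factor except possibly the first few is at least $3/2$, which needs $|2p_k-1|\ge 1/\sqrt2$, i.e.\ $p_k\ge (2+\sqrt2)/4\approx0.854$. I would use the slack $1/2$ to absorb the initial agents, for which the bound may fail.

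The second step is to compute each agent's update explicitly. Because the environment has one state, $D_{\mathrm{KL}}^{\max}$ is just $D_{\mathrm{KL}}(\pi^m\|\hat\pi^m)=-\log2-\tfrac12\log p-\tfrac12\log(1-p)$. The surrogate (\ref{eq:estimate_adv}) is linear in $\hat\pi^m$, so (\ref{eq:harl_obj}) becomes a one-dimensional strictly concave problem in $p$. Its stationarity condition has the form
\[
2\,g_m+\tfrac{C}{2}\Bigl(\tfrac1p-\tfrac1{1-p}\Bigr)=0,
\]
where $g_m$ is the (sampled) value of $\Re^{1:m-1}A_{\boldsymbol\pi}$ attached to action $1$ of agent $m$. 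I would read ``the loss estimated in (\ref{eq:estimate_adv})'' as evaluated on a sampled joint action in which the agents play $1$. Then $A_{\boldsymbol\pi}=1-\mathbb E_{\boldsymbol\pi}r$. Since $\mathbb P(\text{at least }N\text{ ones})>1/2$ under the uniform policy, we get $A_{\boldsymbol\pi}\in(0,1)$, with a lower bound close to $1/2$. Under this reading $g_m=\prod_{k<m}(2p_k)\,A_{\boldsymbol\pi}$. Plugging in $C=3/2$ gives $p_m$ as an explicit increasing function of $R_m:=\prod_{k<m}2p_k$, namely the root in $(1/2,1)$ of $4R_mA\,p(1-p)=\tfrac32(2p-1)$.

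The third step is an induction on $m$. As long as $p_k>1/2$, $R_m$ is nondecreasing, and in fact grows geometrically once some $p_k$ exceeds $1/2$ by a fixed margin. So the roots $p_m$ increase monotonically toward $1$. I would show there is a small index $m_0$ after which $p_m\ge0.854$ holds for every $m\ge m_0$. I would then check that the factors $1+(2p_k-1)^2\ge1$ for $k<m_0$ are together at least $\tfrac12\cdot1.5^{m_0-1}$, or else that the deficit is at most a factor $2$. This would give the stated bound $1.5^{m-1}/2$.

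The main obstacle is exactly this base region. A crude estimate with $R_1=1$ and $A\approx1/2$ gives only $p_1\approx0.65$, whose factor is about $1.09$, well below $1.5$. I would first try to bound $R_m$ from below recursively, $R_{m+1}\ge 2p_m R_m$, and find the first $m$ at which $4R_mA$ is large enough to force $p_m\ge0.854$. If the slack of $1/2$ turns out not to be enough, I would revisit which sampled joint action (or which value of $A_{\boldsymbol\pi}$) the statement intends. I would also consider using the exact-expectation version of the surrogate and checking that the advantage gap in favour of action $1$ is nondecreasing in the previous agents' shift toward $1$.
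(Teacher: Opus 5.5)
Your route differs from the paper's and bounds a different quantity. The paper sets $A_{\boldsymbol\pi}=1$ on $E$, asserting $V_{\boldsymbol\pi}=0$ and $\mathbb P(E)=1/2$, so the first agent's quadratic gives exactly $\widetilde\pi^1(a^1)=3/4$. A comparison argument then gives $\widetilde\pi^m(a^m)\ge 3/4$ for all $m$: $\mathcal L^m(p)-\mathcal L^{m-1}(p)$ is linear in $p$ with positive slope because $\Re^{1:m-1}\ge\Re^{1:m-2}$. Hence the ratio at the sampled action is at least $1.5^{m-1}$, and the $1/2$ in the statement is $\mathbb P(E)$. The paper's per-agent factor is therefore $2p_k\ge 3/2$, evaluated at the action used in the update. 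Yours is the exact factor $\sum_a\widetilde\pi^k(a)^2/\pi^k(a)=1+(2p_k-1)^2$, which is only $5/4$ at $p_k=3/4$. Your quantity is the literal $\mathbb E_{\boldsymbol a^{1:m-1}\sim\boldsymbol{\widetilde\pi}}\Re^{1:m-1}$ for a fixed updated policy, i.e.\ the one entering (\ref{eq:worstcasevar}). Your value $A_{\boldsymbol\pi}=1-\binom{2N}{N}4^{-N}$ also correctly accounts for ties, which the paper's $A_{\boldsymbol\pi}=1$ ignores. The price is that you cannot reuse $p_k\ge 3/4$, and the $1/2$ must be earned as slack rather than arising as a probability. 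Using your explicit root to get monotonicity of $p_m$ in place of the paper's comparison argument is fine.

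The gap is that the step carrying all the content, the base region, is only promised, and you hedge on whether it holds. It does hold, but with little room. Write $c_m=4R_mA_{\boldsymbol\pi}$, so that $2p_m-1=(\sqrt{c_m^2+9}-3)/c_m=1/(\sqrt{1+9/c_m^2}+3/c_m)$, which is increasing in $c_m$. Since $R_{m+1}=2p_mR_m>R_m$, the factors $f_m=1+(2p_m-1)^2$ increase with $m$. So $\sum_{k<m}(\log f_k-\log\tfrac32)$ is minimised at the first $m_0$ with $f_{m_0}\ge\tfrac32$, and a single check suffices. By induction every $f_k$ is increasing in $A_{\boldsymbol\pi}$, which increases with $N$, so for $N\ge2$ you may take $A_{\boldsymbol\pi}=5/8$. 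Then $f_1,\dots,f_5\approx1.131,1.204,1.317,1.468,1.633$, so $m_0=5$ and $\prod_{k\le4}f_k\approx2.633\ge1.5^4/2\approx2.531$, a margin of about $4\%$. For $N=1$ ($A_{\boldsymbol\pi}=1/2$) an $m$-uniform argument would fail, since the product drops below $1.5^{m-1}/2$ at $m=4$. But only $m\le3$ occurs there, and $f_1f_2\approx1.092\cdot1.140\approx1.244\ge1.125$.

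Drop the exact-expectation fallback. There, agent $1$'s advantage gap is twice its pivot probability, $2\binom{2N}{N}4^{-N}=O(1/\sqrt N)$, which gives factors $1+O(1/N)$. The gap stays $O(1/\sqrt N)$ for later agents because all other agents remain near uniform, so no $1.5^{m-1}$ growth arises.
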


Since we are optimizing $\boldsymbol{\widetilde \pi}$ over the advantage $A(s, \boldsymbol{a})$, 
the joint policy is updated to increase the advantage value.
Therefore, the exponential sampling ratio issue can happen if 
each individual policy follows the same direction.
In the proposition \ref{prop:exponentialexample}, we construct a case where the current action probability is increased by a joint positive advantage.
As all the agents use a shared advantage estimate $A_{\boldsymbol{\pi}}(s, \boldsymbol{a})$, 
then they would prefer the same action simultaneously, 
which results in the exponential growth rates of the sampling ratios product. 
This growth rate is not only motivated by theory but can also occur in practice.

{We train HAPPO in two environments \textit{Ant-v2-8x1} and \textit{Humanoid-v2-17x1} of the MaMujoco benchmark \cite{peng2021facmac} and plot the variance of the advantage by agents' order in Figure \ref{fig:exponential_figure}}.
{The benchmark is built upon the continuous control benchmark Mujoco \cite{todorov2012mujoco} with separated joints and each agent controls several joints and attempts to move together.}
{The \textit{Ant-v2-8x1} has 8 agents with position numbers from 0 to 7 and the \textit{Humanoid-v2-17x1} has 17 agents with position numbers from 0 to 16.}
{In each update cycle, the position number corresponds to the policy's update order in the sequential update setting.  For instance, an agent with the number 0 would receive the first policy update and so on.}
{As can be seen from the plot, the growth rate associated with agents' position can increase with the exponential rate in both experiments.}
{In the case of \textit{Humanoid-v2-17x1}, the variance can reach as high as $1e6$ during the training.}
{Such high variance issues likely affect the stability of the joint policy and should be avoided.}
{Note that, the plotted variance is computed along the position of the agents in the update sequence.}
{The practical implementation of HAPPO employs a random update order for each training cycle. Therefore, the effects of high advantage variance are balanced among agents' policies.}




\begin{figure}
\centering\includegraphics[width=.8\columnwidth]{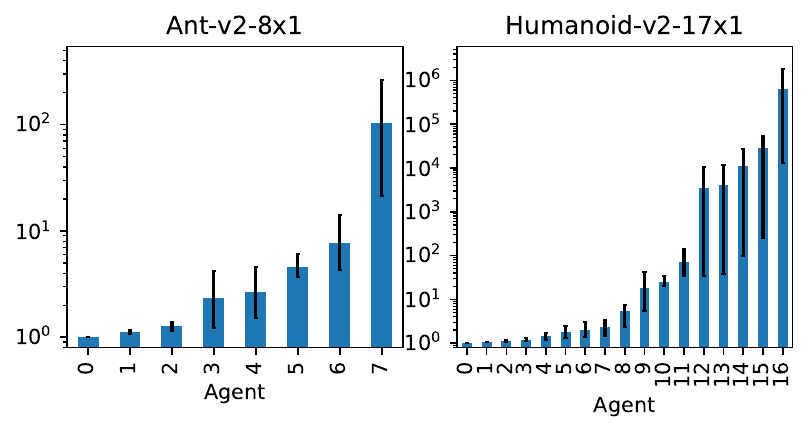}
  \caption{Maximum of the variance of advantage estimate of HAPPO by agents' position in the random permutation setting, showing the exponential growth rate in worst-case as indicated in Proposition \ref{prop:exponentialexample}.
  \label{fig:exponential_figure}}
\end{figure}


\textbf{Can we avoid this explosion of variance with existing methods?} Given that the current theory of trust region method in MARL glosses over this estimation step, it boils down to the question of practical implementations. 
Unfortunately, performing an accurate estimate of the quantity is challenging and incompatible with the on-policy policy gradient framework. 
For example, fully examining all possible actions in (\ref{eq:estimate_adv}) is intractable in continuous action environments, and in discrete ones, it still requires the knowledge of the advantage of arbitrary joint actions $A_{\boldsymbol{\pi}}(s, \boldsymbol{a})$. 
{The problem of exponentially high variance with independent actors is suggested in Coordinated PPO \cite{wu2021coordinated}. However, the proposed double-clipping trick is largely coupled with the PPO implementation, and many of the theoretical properties are not provided.}
Meanwhile, we systematically tackle the high variance problem of general Trust Region methods with the sequential update scheme in MARL, gaining insights into how a high variance advantage estimation can affect joint policy improvement.



Finally, we want to note that the explosion of variance is based on the assumption of independent actors. Therefore, recent works that model agents as recurrent networks of actions whose distributions depend on the previous agents do not suffer from the high variance problem. 
Examples of this approach include Multi-Agent Transformer \citep{wen2022multi} which uses a Transformer-based architecture to model the agents' policies, and \cite{zhao2023local} that explicitly parameterizes the policies to conditionally depend on the actions of prior agents. 
Conditional actors bypass reweighting the old advantage by subsuming the importance ratio into the conditioning of previous actions. 
However, conditional actors require the policies to be sequence models that output each agent's action autoregressively, and do not execute decentrally; the policy needs to process the actions of other agents before executing its own. 


\section{Method}
\label{sec:method}
\subsection{Multi-agent trust region policy optimization with
low variance guarantee}
As we have shown in the previous section, the importance ratio can be the potential cause for the large variance of the advantage estimates. In this section, we propose a simple modification to the current Trust Region objective for MARL that guarantees stronger control over the variance of its estimate.
We start by first defining a new surrogate loss function, obtained by employing a clipping trick to the importance ratio of the advantage in (\ref{eq:advest}). 
\begin{definition}\label{def:cliploss}
    Let $\boldsymbol{\pi}$ be a joint policy, $\boldsymbol{\widetilde \pi}^{{1:m-1}}$ be another joint policy of the first $m-1$ agents, and $\hat{\pi}^{m}$ be some other policy of agent $m$. Define
    \begin{flalign*}\label{eq:surrloss} 
    \begin{multlined}
        L^{{1:m}, \mathrm{clip}}_{\boldsymbol{\pi}}(\boldsymbol{\widetilde{\pi}}^{{1:m-1}}, \hat{\pi}^{m}) 
        \triangleq
        \mathbb E_{s\sim d_{\rho}^{\boldsymbol{\pi}}, \boldsymbol{a}\sim\boldsymbol{\pi}} 
    \bigg[
    \Big(
    \frac{\hat{\pi}^{m}(a^{m}|s)}{\pi^{m}(a^{m}|s)}-1
    \Big) 
    {
    \min\big(}\Re^{{1:m-1}}_{\boldsymbol{\widetilde{\pi}}, \boldsymbol{\pi}}(s, \boldsymbol{a}) 
    {,1
    \big)}
    A_{\boldsymbol{\pi}}(s, \boldsymbol{a})\bigg].
    \end{multlined}
    \end{flalign*}
\end{definition}
Our trust region algorithm using the newly introduced objective is described in Algorithm \ref{alg:main}. {Two specific details that differ our method with the standard trust region presented in (\ref{eq:harl_obj}) are: (i) the trust region term is penalized in expectation of the discounted marginal state distribution, and (ii) the KL divergence is in reverse order.
The reverse order is necessary for the convergence discussed in Theorem \ref{theorem:convergence}.}

\begin{algorithm}[tb]
\caption{Multi-agent trust region policy optimization with low variance guarantee}\label{alg:main}


\begin{algorithmic}
\State {\bfseries Input:} Initial uniformly random joint policy $\boldsymbol{\pi}_0=(\pi_0^1, \dots , \pi^n_0)$, number of iteration $K$
\State {\bfseries Output:} Uniformly sample $k$ from $0, 1, \dots, K-1$, return $\boldsymbol{\bar \pi} = \boldsymbol{\pi}_k$
\For{$k = 0, 1, \dots$, K-1}
\State Compute the advantage function $A_{\boldsymbol{\pi}_k}(s, \mathbf a)$
\State Choose step size $\beta_k$
\State Randomly permute all agents
\For{$m=1, \dots n$}
    \State $\pi_{k+1}^{m} \leftarrow \argmax_{\pi^{m}} \big[L^{{1:m}, \text{clip}}_{\boldsymbol{\pi}_k}(\boldsymbol{{\pi}}_{k+1}^{{1:m-1}}, {\pi}^{m})-
    \beta_k
    \mathbb E_{\rho_{\boldsymbol{\pi}_k}}{D}_{\text{KL}}( \pi^{m}\|\pi_k^{m})\big]$
\EndFor
\EndFor
\end{algorithmic}
\end{algorithm}


{Since ratio sampling is the main source of high variance, clipping this term results in a more controllable instability.}
It is easy to see that the new estimate offers a much more stable variance, in that it does not grow as the number of agents increases. For any state $s \in \mathcal S$, the variance of the clipping advantage is bounded by
\begin{flalign*}
\mathrm{\textbf{Var}}_{\boldsymbol{a} \sim\boldsymbol{\pi}}
    \min\big(\Re^{{1:m-1}}_{\boldsymbol{\widetilde{\pi}}, \boldsymbol{\pi}} 
    ,1
    \big)
    A_{\boldsymbol{\pi}}(s, \boldsymbol{a}) 
     &\leq  
     \mathbb E_{\boldsymbol{a} \sim\boldsymbol{\pi}}
     \min\big(\Re^{{1:m-1}}_{\boldsymbol{\widetilde{\pi}}, \boldsymbol{\pi}} 
    ,1
    \big)^2
    A_{\boldsymbol{\pi}}(s, \boldsymbol{a})^2 \\
    &\leq \mathbb E_{\boldsymbol{a} \sim\boldsymbol{\pi}} \Big[ 
        A_{\boldsymbol{\pi}}
    \Big]^2 
    \leq \epsilon^2.
    \end{flalign*} 
{The ratio clipping serves to relax the policy updates in later iterations, particularly in situations where there is the potential for a drastic increase in estimate noise.}
{As a result, a direct consequence of the new objective is that it does not suffer from the potentially high variance that plagues the original loss as discussed in \ref{prop:exponentialexample}. }

\begin{table*}[ht]
    \centering    
    \caption{Comparison of different trust region MARL algorithms.}
    \label{tab:objective}
    \begin{tabular}{ 
  |c|c|}
     \hline
     \textbf{Method} & \textbf{Objective} \\
     \hline
     MAPPO \cite{yu2022surprising} &
     $\mathbb E 
     \big[ \min\big(
     r^m A_{\boldsymbol{\pi}},
     g(r^m, \epsilon_1) A_{\boldsymbol{\pi}}
     \big)
     \big]$ 
     \\
     \hline 
     HAPPO \cite{kuba2021trust} &
     $\mathbb E
     \big [
     r^m \Re^{{1:m-1}}_{\boldsymbol{\widetilde{\pi}}, \boldsymbol{\pi}} A_{\boldsymbol{\pi}}
     \big] - C\text{D}_{\text{KL}}^{\max}(\pi^{m}, \widetilde{\pi}^{m})$
      \\
     \hline
     CoPPO \cite{wu2021coordinated} &
     $\mathbb E
     \min \big [ 
     r^m g(\Re^{{-m}}_{\boldsymbol{\widetilde{\pi}}, \boldsymbol{\pi}}, \epsilon_2) A_{\boldsymbol{\pi}}, 
     g(r^m g(\Re^{{-m}}_{\boldsymbol{\widetilde{\pi}}, \boldsymbol{\pi}}, \epsilon_2), \epsilon_1) A_{\boldsymbol{\pi}}
     \big]$
      \\
     \hline
     This work & 
     $\mathbb E 
     \big[ r^m
     \min(\Re^{{1:m-1}}_{\boldsymbol{\widetilde{\pi}}, \boldsymbol{\pi}}, 1)
     A_{\boldsymbol{\pi}}
     -\beta_k D_{\text{KL}}(\widetilde{\pi}^m\|\pi^m)
     \big]
     $ \\
    \hline
    \end{tabular}
\end{table*}

\subsection{Convergence Analysis}
Furthermore, similar to the monotonic improvement bound given in (\ref{eq:harlbound}), we justify the use of the new surrogate objective function for the multi-agent reinforcement learning problem by the following monotonic improvement bound.
\begin{restatable}[Policy Improvement Bound]{lemma}{improvementbound} \label{lem:improvementboudclip}
For any two joint policy $\boldsymbol{\pi}$ and $\boldsymbol{\widetilde\pi}$, 
and let $\alpha^i=\max_s\|\pi^i - \widetilde{\pi}^i\|_1$, then 
\begin{flalign*}
    \begin{multlined}[t]
    J(\boldsymbol{\widetilde\pi}) 
\geq  J(\boldsymbol{\pi}) 
         + \frac{1}{1-\gamma}\sum_{m=1}^N
         \Big[ 
         L^{1:m, \mathrm{clip}}_{\boldsymbol{\pi}}(\boldsymbol{\widetilde{\pi}}^{1:m-1}, \widetilde{\pi}^m) 
    -  C
    \alpha^m\sum_{i=1}^{N}\alpha^i 
    - \epsilon \alpha^m \sum_{i=1}^{m-1} \alpha^i
    \Big],
    \end{multlined}
\end{flalign*}
where $C=\frac{4\gamma \epsilon} {1-\gamma}$ and $\epsilon=\max_{s, \boldsymbol{a}}|A_{\boldsymbol{\pi}}(s, \boldsymbol{a})|.$
\end{restatable}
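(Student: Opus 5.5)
We will reduce the statement to the standard single-agent performance-difference analysis applied to the joint policy, then handle the clipping separately. The first step is the TRPO-style bound for joint policies, written in total-variation form rather than KL form. Following Kuba et al. (the argument behind Lemma~\ref{lem:trustbound}) or Schulman et al.'s coupling argument, we obtain
\[
J(\boldsymbol{\widetilde\pi}) \geq J(\boldsymbol{\pi}) + \frac{1}{1-\gamma}\Big[\mathbb E_{s\sim d^{\boldsymbol\pi}_\rho,\boldsymbol a\sim\boldsymbol{\widetilde\pi}} A_{\boldsymbol\pi}(s,\boldsymbol a) - C\,\alpha^2\Big],
\]
where $\alpha=\max_s\|\boldsymbol\pi-\boldsymbol{\widetilde\pi}\|_1$ for the joint policies. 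For product policies, telescoping one factor at a time gives $\alpha\leq\sum_i\alpha^i$, so $\alpha^2\leq \big(\sum_m\alpha^m\big)\big(\sum_i\alpha^i\big)=\sum_m \alpha^m\sum_{i=1}^N\alpha^i$. This produces the $C\alpha^m\sum_i\alpha^i$ terms. The only care needed here is to track the constant so that it matches $C=\frac{4\gamma\epsilon}{1-\gamma}$ with $\alpha$ in $\ell_1$ form.

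The second step decomposes the joint expected advantage. By the multi-agent advantage decomposition (the identity behind (\ref{eq:advest})),
\[
\mathbb E_{\boldsymbol a\sim\boldsymbol{\widetilde\pi}}A_{\boldsymbol\pi}=\sum_{m=1}^N L^{1:m}_{\boldsymbol\pi}(\boldsymbol{\widetilde\pi}^{1:m-1},\widetilde\pi^m),
\]
where each term has the importance-sampled form $\mathbb E_{\boldsymbol a\sim\boldsymbol\pi}\big[(r^m-1)\Re^{1:m-1}A_{\boldsymbol\pi}\big]$ with $r^m=\widetilde\pi^m/\pi^m$. It therefore remains to show, per state, that
\[
L^{1:m}-L^{1:m,\mathrm{clip}} \;\ge\; -\epsilon\,\alpha^m\sum_{i<m}\alpha^i .
\]

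The third step, which is the main obstacle, bounds this clipping error. The difference equals
\[
\mathbb E_{\boldsymbol a\sim\boldsymbol\pi}\Big[(r^m-1)\big(\Re^{1:m-1}-\min(\Re^{1:m-1},1)\big)A_{\boldsymbol\pi}\Big].
\]
Since agents are independent and $a^m$ is independent of $\boldsymbol a^{1:m-1}$ under $\boldsymbol\pi$, we can take absolute values, bound $|A_{\boldsymbol\pi}|\le\epsilon$, and integrate out $a^m$. Then $\mathbb E_{a^m\sim\pi^m}|r^m-1|=\|\widetilde\pi^m-\pi^m\|_1\le\alpha^m$, and it remains to bound
\[
\mathbb E_{\boldsymbol a^{1:m-1}\sim\boldsymbol\pi}\big(\Re^{1:m-1}-1\big)_+ = \tfrac12\big\|\boldsymbol{\widetilde\pi}^{1:m-1}-\boldsymbol\pi^{1:m-1}\big\|_1 \le \tfrac12\sum_{i<m}\alpha^i,
\]
again by telescoping the product. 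This gives $|L^{1:m}-L^{1:m,\mathrm{clip}}|\le\epsilon\alpha^m\sum_{i<m}\alpha^i$, with a spare factor $\tfrac12$.

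The delicate point is that taking absolute values inside couples $|r^m-1|$ with a function of $\boldsymbol a^{1:m-1}$ and $|A|$. We must factor the expectation before bounding $|A|$ by $\epsilon$, which is legitimate because $\epsilon$ is a uniform bound. Summing over $m$, dividing by $1-\gamma$, and combining with the first step yields the stated inequality.
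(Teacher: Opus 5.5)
Your proposal is correct and follows essentially the same route as the paper, which uses the same three ingredients: the joint-policy TRPO coupling bound (Schulman's Lemma 3, cited as Lemma \ref{lem:trpotakenadvantagebound}) together with $\max_s\|\boldsymbol{\pi}-\boldsymbol{\widetilde\pi}\|_1\le\sum_i\alpha^i$; the multi-agent advantage decomposition, which turns $\mathbb E_{\boldsymbol{\widetilde\pi}}A_{\boldsymbol\pi}$ into the unclipped surrogates; and a per-agent clipping-error bound that factors out $\mathbb E_{a^m\sim\pi^m}|r^m-1|\le\alpha^m$ (the paper's inequality (\ref{eq:usefulboundimprv})). Your identity $\mathbb E_{\boldsymbol\pi}(\Re^{1:m-1}-1)_+=\tfrac12\|\boldsymbol{\widetilde\pi}^{1:m-1}-\boldsymbol{\pi}^{1:m-1}\|_1$ gives a factor $\tfrac12$ better than the paper's H\"older step; one small correction is that in your closing remark the order is reversed, since you must bound $|A_{\boldsymbol\pi}|\le\epsilon$ pointwise first and only then factor the expectation, exactly as you do in your main step.
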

The proof of Lemma \ref{lem:improvementboudclip} is straightforwardly extended from the previous results in \cite{schulman2015trust, kuba2021trust} and is given in detail in the Appendix \ref{proof:policyimprovement}.
{This bound shows that the clipping objective lower bounds the performance difference of the two joint policies and becomes more accurate as the policies become closer. 
The last term $\epsilon\sum_{m, i<m} \alpha^m \alpha^i$ can be seen as a trade-off for ensuring low variance compared to the bound in (\ref{eq:harlbound}).}
Finally, given that the objective in (\ref{def:cliploss}) is not an unbiased estimator of the true advantage, it is natural to ask whether Trust Region methods can converge with this new objective. In the next Theorem, we answer this question affirmatively by showing that the Algorithm \ref{alg:main} converges to approximate $\epsilon$-NE at a sublinear rate that matches the rate established in other Trust Region methods
\begin{restatable} {theorem}{convergence} \label{theorem:convergence}
After $K$ iterations, the policy $\boldsymbol{\bar \pi}$ obtained from the Algorithm \ref{alg:main} is an $\varepsilon_K$-Nash equilibrium, with 
\[\varepsilon_K = O\left(
    \frac{\bar \epsilon\sqrt{N\log A }}{(1-\gamma)\sqrt{K}}
    \right),
\]
where we define $\bar \epsilon= \max_{\boldsymbol{\pi}, s, \boldsymbol{a}} |A_{\boldsymbol{\pi}}(s, \boldsymbol{a})|$ and the notation $O(\cdot)$ hides constant factors.

\end{restatable}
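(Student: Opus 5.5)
We will follow the standard mirror-descent/NPG analysis of trust region methods (as in the analyses of PPO/TRPO convergence and independent NPG in potential games), adapted to the clipped sequential objective. The update of Algorithm \ref{alg:main} for agent $m$ is, state by state, a KL-regularized linear maximization:
\[
\pi_{k+1}^m(\cdot|s) \propto \pi_k^m(\cdot|s)\exp\big(\bar A_k^m(s,\cdot)/\beta_k\big),
\]
where $\bar A_k^m(s,a^m)=\mathbb E_{\boldsymbol a^{-m}\sim\boldsymbol\pi_k^{-m}}[\min(\Re^{1:m-1}_{\boldsymbol\pi_{k+1},\boldsymbol\pi_k},1)A_{\boldsymbol\pi_k}(s,\boldsymbol a)]$. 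This closed form is available because the KL term is in reverse order $D_{\mathrm{KL}}(\pi^m\|\pi_k^m)$ and is weighted by the same state distribution as the surrogate. Since $|\bar A_k^m|\le\bar\epsilon$, the multiplicative-weights step has bounded payoffs.

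\textbf{Step 1 (per-state three-point inequality).} For any comparator $\pi^{m,*}$ and each $s$, the mirror descent identity gives
\[
\langle \pi^{m,*}-\pi_k^m,\bar A_k^m\rangle \le \beta_k\big[D_{\mathrm{KL}}(\pi^{m,*}\|\pi_k^m)-D_{\mathrm{KL}}(\pi^{m,*}\|\pi_{k+1}^m)\big]+\frac{\bar\epsilon^2}{2\beta_k}.
\]
The last term is the usual Pinsker/Hoeffding bound, $\log\mathbb E e^{X}\le \mathbb E X+\bar\epsilon^2/(2\beta_k^2)$ scaled.

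\textbf{Step 2 (relate the left side to the Nash gap).} By the performance difference lemma, the unilateral deviation satisfies
\[
J(\boldsymbol\pi_k^{-m},\pi^{m,*})-J(\boldsymbol\pi_k)=\frac{1}{1-\gamma}\,\mathbb E_{s\sim d^{(\boldsymbol\pi_k^{-m},\pi^{m,*})}}\langle\pi^{m,*},A^m\rangle.
\]
Here $A^m(s,a^m)=\mathbb E_{\boldsymbol a^{-m}\sim\boldsymbol\pi_k^{-m}}A_{\boldsymbol\pi_k}$ and $\langle\pi_k^m,A^m\rangle=0$. This differs from $\langle\pi^{m,*}-\pi_k^m,\bar A_k^m\rangle$ through the clipping factor. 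We bound the discrepancy by
\[
\bar\epsilon\,\mathbb E\,|1-\min(\Re,1)|\le \bar\epsilon\sum_{i<m}\|\pi_{k+1}^i-\pi_k^i\|_1 .
\]
Each $\|\pi_{k+1}^i-\pi_k^i\|_1\le \bar\epsilon/\beta_k$ by the exponential-weights form. The discrepancy is therefore $O(N\bar\epsilon^2/\beta_k)$ or, more carefully, absorbed using the $\epsilon\alpha^m\sum_{i<m}\alpha^i$ structure of Lemma \ref{lem:improvementboudclip}. We also need to handle the mismatch between $d^{\boldsymbol\pi_k}$ (where the update is weighted) and the deviation's state distribution. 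We will take the comparator $\pi^{m,*}$ as a best response and use the standard fact that the relevant distribution-mismatch factor is bounded. Alternatively, we can avoid it by working with the max over states and summing the per-state inequalities weighted by $d^{\boldsymbol\pi_k}$, paying a $1/(1-\gamma)$ factor as in the stated rate.

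\textbf{Step 3 (telescoping via monotone improvement).} Lemma \ref{lem:improvementboudclip} with $\alpha^i=O(\bar\epsilon/\beta_k)$ gives
\[
J(\boldsymbol\pi_{k+1})-J(\boldsymbol\pi_k)\ge \frac{1}{1-\gamma}\sum_m L^{1:m,\mathrm{clip}}_{\boldsymbol\pi_k}-O\!\left(\frac{N^2\bar\epsilon^3}{(1-\gamma)^2\beta_k^2}\right).
\]
Combined with optimality of the update, $L^{1:m,\mathrm{clip}}\ge \beta_k\mathbb E D_{\mathrm{KL}}(\pi_{k+1}^m\|\pi_k^m)\ge0$. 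We sum Step 1 over $k$ with constant $\beta_k=\beta$. The KL potential telescopes, since $D_{\mathrm{KL}}(\pi^*\|\pi_0)\le\log A$ for uniform $\pi_0$, and $J\le \bar\epsilon/(1-\gamma)$-type bounds control the sum of improvements. Averaging over the uniformly sampled $k$, for each $m$ the expected unilateral gap is
\[
\lesssim \frac{1}{1-\gamma}\left(\frac{\beta\log A}{K}+\frac{N\bar\epsilon^2}{\beta}\right).
\]
Choosing $\beta\asymp\bar\epsilon\sqrt{NK/\log A}$ yields $\varepsilon_K=O\big(\bar\epsilon\sqrt{N\log A}/((1-\gamma)\sqrt K)\big)$.

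\textbf{Main obstacle.} The hardest step will be Step 2: controlling the bias introduced by $\min(\Re,1)$ and the random permutation so that it contributes only $O(N\bar\epsilon^2/\beta)$ rather than something non-vanishing. We must also handle the state-distribution mismatch between $d^{\boldsymbol\pi_k}$ and the deviation policy's occupancy without introducing an uncontrolled concentrability constant. We will first try to bound the bias using $1-\min(\prod_i r_i,1)\le\sum_i|1-r_i|$ together with the small-step bound on each ratio implied by the exponential-weights form.
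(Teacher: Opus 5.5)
Up to the step you flag, your plan is the paper's proof. It uses the exponential-weights closed form \eqref{eq:updaterule}. It then applies the per-state one-step descent of Lemma~\ref{lemma:onestep}, whose error $2m(\bar\epsilon/\beta_k)^2$ absorbs both the clipping bias (via Lemma~\ref{lemma:advantage_dif} and Proposition~\ref{prop:boundedl1}) and the switch from $\pi^m_{k+1}$ to $\pi^m_k$ that you handle by Hoeffding. It finishes with KL telescoping from $\log A$ and $\beta_k=\beta\sqrt{K}$. The paper uses neither Lemma~\ref{lem:improvementboudclip} nor any mismatch coefficient.

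The genuine gap is the passage from the telescoped per-state bound to a Nash gap. Write $x_k(s)=\langle \mathbb E_{\boldsymbol\pi_k^{-m}}Q_{\boldsymbol\pi_k}(s,\cdot),\pi^m(\cdot|s)-\pi^m_k(\cdot|s)\rangle$. Telescoping needs a \emph{fixed} comparator $\pi^m$, and it gives only $\frac{1}{K}\sum_k x_k(s)\le B\asymp\bar\epsilon\sqrt{N\log A/K}$ for each $s$. By Lemma~\ref{lemma:difference}, what you must bound is $\frac{1}{1-\gamma}\cdot\frac{1}{K}\sum_k\max_{\pi^m}\mathbb E_{s\sim d_\rho^{(\boldsymbol\pi_k^{-m},\pi^m)}}x_k(s)$. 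Here both the state weights and the maximizing deviation change with $k$, and none of your repairs handles this.
\begin{itemize}
\item The ``max over states'' route needs $\frac{1}{K}\sum_k\max_s x_k(s)$, which per-state averages do not control. For example, $x_k=(c,-c)$ and $(-c,c)$ on alternate $k$ average to $0$ at each state, yet weights tracking the positive state give $c$.
\item Reweighting by $d_\rho^{\boldsymbol\pi_k}$ does not help, since it also moves with $k$.
\item A mismatch bound $d_\rho^{(\boldsymbol\pi_k^{-m},\pi^m)}\le D\mu$ with $\mu$ fixed only transfers upper bounds for nonnegative integrands. But $x_k\ge 0$ forces $\pi^m$ to be a best response to $\boldsymbol\pi_k^{-m}$, a comparator that changes with $k$ and breaks the telescoping. $D$ would also then enter $\varepsilon_K$.
\item Step 3 only bounds the cumulative improvement of the actual iterates, hence $\sum_k\beta\,\mathbb E_{d_\rho^{\boldsymbol\pi_k}}D_{\mathrm{KL}}(\pi^m_{k+1}\|\pi^m_k)$. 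To exploit it you would need an inequality bounding the Nash gap at $\boldsymbol\pi_k$ by that one-step improvement, which you do not provide. In the Markov-potential-game analyses the paper cites, such an inequality carries distribution-mismatch constants that are absent from the stated rate.
\end{itemize}

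You will not find this step in the paper either. After \eqref{eq:rawconvergence_refined} it observes that the bound holds ``for all $\pi^m$, $m$, and $s$''. It then immediately averages the bound under $d_\rho^{(\boldsymbol\pi_k^{-m},\pi^m)}$, which is exactly the unjustified interchange above. Finally it reads a bound on $\frac{1}{K}\sum_k[J(\boldsymbol\pi_k^{-m},\pi^m)-J(\boldsymbol\pi_k)]$ for each fixed $\pi^m$ as $\bar{\boldsymbol\pi}$ being an $\varepsilon_K$-Nash equilibrium. That is a regret (coarse-correlated) guarantee, not a bound on the Nash gap of the sampled iterate. Your diagnosis of the obstacle is right. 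But as written, neither your proposal nor the paper's argument establishes the theorem, and closing this step needs an ingredient the paper does not contain.
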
 
The proof of the Theorem \ref{theorem:convergence} follows the analysis in \cite{zhao2023local}, 
but with the convergence policies as NE instead.
Before presenting the main proof, we need to establish the following two lemmas.
\begin{lemma}[One-step Descent]\label{lemma:onestep}
For two consecutive policies $\pi^m_{k}$ and $\pi^m_{k+1}$, and an arbitrarily policy $\pi^m$, then 
\begin{flalign*}
    \begin{multlined}[t]
    D_{\text{KL}}\left(\pi^m(\cdot | s)\| \pi^m_{k}(\cdot|s)\right) 
    - D_{\text{KL}}\left(\pi^m(\cdot | s)\| \pi^m_{k+1}(\cdot|s)\right) \geq 
        \left\langle
    \beta_{k}^{-1} \mathbb E_{\boldsymbol{\pi}_{k+1}^{1:m-1}} Q^{1:m}_{\boldsymbol{\pi}_k}, \pi^m - \pi^m_{k+1}
    \right\rangle  \\
    - 2(\beta_k^{-1} \bar \epsilon)^2 m
    \end{multlined}
\end{flalign*}
\end{lemma}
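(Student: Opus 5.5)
The plan is to read the update of Algorithm \ref{alg:main} as a state-wise mirror-descent (exponentiated-gradient) step. The one-step inequality then comes from the standard three-point identity for the KL divergence. The clipping enters only as a bias term, and we control it by the total-variation movement of the previously updated agents.

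\textbf{Step 1 (closed form of the update).} Since the agents act independently, $L^{1:m,\mathrm{clip}}_{\boldsymbol{\pi}_k}(\boldsymbol{\pi}^{1:m-1}_{k+1},\pi^m)$ is linear in $\pi^m(\cdot|s)$ at every state. Indeed, up to a term that does not depend on $\pi^m$, it equals $\mathbb E_{s\sim d^{\boldsymbol{\pi}_k}_\rho}\langle g_s,\pi^m(\cdot|s)\rangle$, where
\[
g_s(a^m)=\mathbb E_{\boldsymbol{a}^{-m}\sim\boldsymbol{\pi}_k^{-m}}\big[\min(\Re^{1:m-1}_{\boldsymbol{\pi}_{k+1},\boldsymbol{\pi}_k},1)\,A_{\boldsymbol{\pi}_k}(s,\boldsymbol{a})\big].
\]
The KL penalty is also an expectation over the same state distribution, so the maximization decouples across states. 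This gives $\pi^m_{k+1}(\cdot|s)\propto\pi^m_k(\cdot|s)\exp(g_s/\beta_k)$. Note also that $\|g_s\|_\infty\le\bar\epsilon$, because $\min(\Re,1)\le1$.

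\textbf{Step 2 (three-point identity).} Write $Z$ for the normalizer. Then $\log\pi^m_{k+1}=\log\pi^m_k+g_s/\beta_k-\log Z$. Substituting gives
\[
D_{\mathrm{KL}}(\pi^m\|\pi^m_k)-D_{\mathrm{KL}}(\pi^m\|\pi^m_{k+1})=\langle\beta_k^{-1}g_s,\pi^m-\pi^m_{k+1}\rangle+D_{\mathrm{KL}}(\pi^m_{k+1}\|\pi^m_k).
\]
The last term is nonnegative, so the left-hand side is at least $\langle\beta_k^{-1}g_s,\pi^m-\pi^m_{k+1}\rangle$. The same optimality condition, taken with $\pi^m=\pi^m_k$ and combined with Pinsker's inequality, gives $\tfrac{\beta_k}{2}\|\pi^m_{k+1}-\pi^m_k\|_1^2\le\langle g_s,\pi^m_{k+1}-\pi^m_k\rangle\le\bar\epsilon\|\pi^m_{k+1}-\pi^m_k\|_1$. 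Hence $\|\pi^i_{k+1}-\pi^i_k\|_1\le2\bar\epsilon/\beta_k$ for every agent $i$.

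\textbf{Step 3 (removing the clip; main obstacle).} Let $g^\star_s$ be the unclipped version of $g_s$. By the importance-sampling identity (\ref{eq:advest}), $g^\star_s(a^m)=\mathbb E_{\boldsymbol{\pi}^{1:m-1}_{k+1}}Q^{1:m}_{\boldsymbol{\pi}_k}(s,\cdot,a^m)-b(s)$, where the baseline $b(s)$ does not depend on $a^m$. The baseline therefore vanishes in $\langle\cdot,\pi^m-\pi^m_{k+1}\rangle$, since both are probability vectors.

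The remaining error is $\langle\beta_k^{-1}(g_s-g^\star_s),\pi^m-\pi^m_{k+1}\rangle$. Its absolute value is at most $2\beta_k^{-1}\|g_s-g^\star_s\|_\infty$. Pointwise in $a^m$ we have
\[
|g_s-g^\star_s|\le\bar\epsilon\,\mathbb E_{\boldsymbol{\pi}_k^{1:m-1}}(\Re-1)_+=\bar\epsilon\,\mathrm{TV}(\boldsymbol{\pi}^{1:m-1}_{k+1},\boldsymbol{\pi}^{1:m-1}_k)\le\tfrac{\bar\epsilon}{2}\sum_{i<m}\|\pi^i_{k+1}-\pi^i_k\|_1,
\]
where the last inequality is subadditivity of total variation for product measures. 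Inserting the bound from Step 2, the error is at most
\[
2\beta_k^{-1}\cdot\tfrac{\bar\epsilon}{2}(m-1)\cdot\tfrac{2\bar\epsilon}{\beta_k}=2(\beta_k^{-1}\bar\epsilon)^2(m-1)\le2(\beta_k^{-1}\bar\epsilon)^2m,
\]
which is exactly the correction term in the claimed inequality.

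The delicate points are getting the constant to line up and checking that the conditional expectation over $\boldsymbol{a}^{-m}$ preserves the TV bound uniformly in $a^m$. The latter holds because, under independence, $\Re$ depends only on $\boldsymbol{a}^{1:m-1}$.
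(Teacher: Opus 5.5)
Your proof is correct, and it establishes the inequality exactly as displayed, even with $m-1$ in place of $m$. It shares the paper's skeleton: the three-point identity with $D_{\mathrm{KL}}(\pi^m_{k+1}\|\pi^m_k)\ge 0$ dropped, a clipping-bias bound in terms of the previous agents' movement (Lemma \ref{lemma:advantage_dif} with Proposition \ref{prop:triangle}), and a Pinsker bound on each agent's movement (Proposition \ref{prop:boundedl1}).

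The real difference is the direction in which you remove the clip. You compare $\min(\Re^{1:m-1},1)$ with the full ratio $\Re^{1:m-1}$, so importance sampling turns the linear term into $\mathbb E_{\boldsymbol{\pi}^{1:m-1}_{k+1}}Q^{1:m}_{\boldsymbol{\pi}_k}$. The paper instead compares $\min(\Re^{1:m-1},1)$ with $1$, which gives $\mathbb E_{\boldsymbol{\pi}^{1:m-1}_{k}}Q^{1:m}_{\boldsymbol{\pi}_k}$. It then also trades $\pi^m_{k+1}$ for $\pi^m_k$ in the inner product at cost $\beta_k^{-1}\bar\epsilon\|\pi^m_{k+1}-\pi^m_k\|_1$, which is where $m$ rather than $m-1$ comes from.

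Consequently, the paper's chain actually ends at $\langle\beta_k^{-1}\mathbb E_{\boldsymbol{\pi}^{1:m-1}_{k}}Q^{1:m}_{\boldsymbol{\pi}_k},\pi^m-\pi^m_{k}\rangle-2(\beta_k^{-1}\bar\epsilon)^2m$, not at the displayed right-hand side. That old-policy form is what the proof of Theorem \ref{theorem:convergence} uses. There the inner product must be the advantage, at $\boldsymbol{\pi}_k$, of the unilateral deviation $(\boldsymbol{\pi}_k^{-m},\pi^m)$, so that Lemma \ref{lemma:difference} converts it into $J(\boldsymbol{\pi}_k^{-m},\pi^m)-J(\boldsymbol{\pi}_k)$.

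Your version proves the lemma as written, but to feed the theorem it needs one more swap: $\boldsymbol{\pi}^{1:m-1}_{k+1}\to\boldsymbol{\pi}^{1:m-1}_{k}$ and $\pi^m_{k+1}\to\pi^m_k$. This costs another $O\big((\beta_k^{-1}\bar\epsilon)^2m\big)$ and leaves the rate unchanged. De-clipping toward $1$, as the paper does, reaches the needed form in a single step.

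A small side remark: your movement bound $2\bar\epsilon/\beta_k$ is loose by a factor $2$. The identity at $\pi^m=\pi^m_k$ gives the symmetrized divergence $D_{\mathrm{KL}}(\pi^m_{k+1}\|\pi^m_k)+D_{\mathrm{KL}}(\pi^m_k\|\pi^m_{k+1})\ge\|\pi^m_{k+1}-\pi^m_k\|_1^2$, which yields $\bar\epsilon/\beta_k$ as in Proposition \ref{prop:boundedl1}. Your total-variation (rather than $\ell_1$) accounting of the clipping bias happens to compensate, so your final constant still comes out right.
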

The detailed proof is presented in Appendix \ref{proof:convergencerate}.
{In the proof, we first construct a bound on the estimated advantage using the clipped objective and the true advantage through the L1 distance of the joint policy and its updated one.}
{We further bound the above distance as a ratio to the step size $\beta_k$ at $k^{th}$ update and the maximum advantage $\bar \epsilon$.}
The one-step descent lemma is required for the main convergence proof, as it formalizes the discrepancy of two subsequent policies to another arbitrary policy, which can be the optimal one.
Equivalent forms of 
the lemma appears in several works on the convergence of Trust Region methods in the literature \cite{zhao2023local, liu2019neural, huang2021neural}. 
\begin{lemma}[Difference lemma \cite{kakade2002approximately}]  \label{lemma:difference} For any two joint policy $\boldsymbol{\pi}$ and $\boldsymbol{\widetilde{\pi}}$, then
\[J(\boldsymbol{\widetilde\pi})-J(\boldsymbol{\pi})=\frac{1}{1-\gamma}\mathbb E_{s\sim d_{\rho}^{\boldsymbol{\widetilde\pi}}, \boldsymbol{a} \sim \boldsymbol{\widetilde\pi}}A_{\boldsymbol{\pi}}(s, \boldsymbol{a})\] 
\end{lemma}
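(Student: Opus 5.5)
The plan is to treat the joint policy as a single-agent policy over the joint action space $\mathcal A=\mathcal A_1\times\dots\times\mathcal A_N$. The factorization into independent actors plays no role here, so the statement reduces to the classical performance difference identity. I will prove it by a telescoping argument along trajectories generated by $\boldsymbol{\widetilde\pi}$.

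\textbf{Step 1: notation and the one-step identity.} Write $V_{\boldsymbol{\pi}}(s)=\mathbb E_{\boldsymbol a\sim\boldsymbol\pi}Q_{\boldsymbol\pi}(s,\boldsymbol a)$, so that $A_{\boldsymbol\pi}=Q_{\boldsymbol\pi}-V_{\boldsymbol\pi}$ and $J(\boldsymbol\pi)=\mathbb E_{s_0\sim\rho}V_{\boldsymbol\pi}(s_0)$. The Bellman equation gives, for every $(s,\boldsymbol a)$,
\[
Q_{\boldsymbol\pi}(s,\boldsymbol a)=r(s,\boldsymbol a)+\gamma\,\mathbb E_{s'\sim P(\cdot|s,\boldsymbol a)}V_{\boldsymbol\pi}(s').
\]
Hence, along any trajectory $(s_0,\boldsymbol a_0,s_1,\dots)$,
\[
\mathbb E\big[r_t+\gamma V_{\boldsymbol\pi}(s_{t+1})-V_{\boldsymbol\pi}(s_t)\,\big|\,s_t,\boldsymbol a_t\big]=A_{\boldsymbol\pi}(s_t,\boldsymbol a_t).
\]

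\textbf{Step 2: telescoping under $\boldsymbol{\widetilde\pi}$.} Take trajectories generated by $s_0\sim\rho$, $\boldsymbol a_t\sim\boldsymbol{\widetilde\pi}(\cdot|s_t)$ and $s_{t+1}\sim P$. Then
\[
J(\boldsymbol{\widetilde\pi})-J(\boldsymbol\pi)
=\mathbb E_{\boldsymbol{\widetilde\pi}}\Big[\sum_{t\ge0}\gamma^t r_t\Big]-\mathbb E_{s_0\sim\rho}V_{\boldsymbol\pi}(s_0)
=\mathbb E_{\boldsymbol{\widetilde\pi}}\Big[\sum_{t\ge0}\gamma^t\big(r_t+\gamma V_{\boldsymbol\pi}(s_{t+1})-V_{\boldsymbol\pi}(s_t)\big)\Big].
\]
The second equality holds because the $V_{\boldsymbol\pi}$ terms telescope to $-V_{\boldsymbol\pi}(s_0)$, and $\gamma^tV_{\boldsymbol\pi}(s_t)\to0$ since $V_{\boldsymbol\pi}$ is bounded by $\max|r|/(1-\gamma)$. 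Applying the tower property with Step 1 term by term yields $\mathbb E_{\boldsymbol{\widetilde\pi}}\sum_t\gamma^tA_{\boldsymbol\pi}(s_t,\boldsymbol a_t)$.

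\textbf{Step 3: passing to the occupancy measure.} Rewrite this as
\[
\sum_s\sum_t\gamma^t\,\mathbb P_{\boldsymbol{\widetilde\pi}}(s_t=s)\;\mathbb E_{\boldsymbol a\sim\boldsymbol{\widetilde\pi}(\cdot|s)}A_{\boldsymbol\pi}(s,\boldsymbol a).
\]
Because the paper's $d^{\boldsymbol{\widetilde\pi}}_\rho$ carries the normalizing factor $(1-\gamma)$, we have $\sum_t\gamma^t\mathbb P(s_t=s)=d^{\boldsymbol{\widetilde\pi}}_\rho(s)/(1-\gamma)$, which gives exactly the claimed $\frac{1}{1-\gamma}$ prefactor.

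The only delicate point, and the main thing to check, is the interchange of the infinite sum with the expectations in Steps 2 and 3. This is justified by dominated convergence (or Fubini): $|A_{\boldsymbol\pi}|$ and $|r|$ are bounded and $\sum_t\gamma^t<\infty$, while finiteness of $\mathcal S$ and $\mathcal A$ makes the inner sums finite.
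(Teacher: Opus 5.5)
Your proposal is correct and follows the paper's route. The paper gives no separate proof: it treats the joint policy as a single-agent policy over the joint action space and invokes the classical Kakade--Langford identity. Your telescoping argument is the standard proof of that identity, and you correctly use the $(1-\gamma)$ normalization in $d_{\rho}^{\boldsymbol{\widetilde\pi}}$ to obtain the $\frac{1}{1-\gamma}$ prefactor.
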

{The difference lemma is well-established in single-agent settings} \citep{kakade2002approximately, schulman2015trust}.
{It provides a connection between the performance disparity of two arbitrary policies through the expected advantage of one policy under the distribution of the other.
In the context of multi-agent settings, we can apply the lemma directly to the two joint policies as in a single one.}

Based on Lemma \ref{lemma:onestep} and \ref{lemma:difference}, we prove the convergence result of our algorithm. Specifically, we leverage the discrepancy between two consecutive update sequences established in the Lemma \ref{lemma:onestep}{, and sum this discrepancy over the update iterations. 
This summation collapses through a telescoping sum,
which is then transformed into the difference in performance by the difference lemma. 
The full proof of Theorem} \ref{theorem:convergence} {can be found in the Appendix} \ref{appendix:proofs}.

Theorem \ref{theorem:convergence} establishes a sublinear convergence rate of our method. Compared to the prior rates in Trust Region MARL, 
\cite{zhao2023local} demonstrate a non-asymptotic rate of $O\left(\frac{N\sqrt{\log A}}{(1-\gamma)\sqrt{K}}\right)$ of MAPPO to globally optimal policies with sequential update. 
Notably, a critical condition for globally optimal convergence in \cite{zhao2023local} is that the policies are conditionally dependent on the actions of previous agents.
Under this condition, it is reasonable that global optimality can be achieved with sufficiently powerful policy networks.
The global convergence of dependent actors aligns with the convergence results in single-agent unregularized MDP \cite{shani2020adaptive, liu2019neural}.

On the other hand, modeling agents as independent actors, \cite{kuba2021trust} establish the convergence to NE in the limit of HATRPO.
In Markov Potential Games, \cite{leonardos2021global} show the convergence rate of $O(1/\varepsilon^2)$ to $\varepsilon$-Nash of Policy Gradient methods, and \cite{ding2022independent} improve on the distribution mismatch of the rate.
Regarding the optimality of the Equilibrium,  \cite{papoudakis2020benchmarking} and \cite{christianos2022pareto} suggest that Multi-agent Policy Gradient methods can converge to sub-optimal equilibria, which is also confirmed in \cite{zhao2023local}. 
{While }\cite{zhao2023local}{ demonstrate the optimal convergence with PG, it is required that the agents have to communicate their actions, which we do not consider in the CTDE setting. }
To our awareness, whether Trust Region methods with independent actors can converge to globally optimal policy is still an open question.

\subsection{Discussion}
From the practical implementation perspective, we give a details comparison of several related Trust Region methods in Table \ref{tab:objective}.  We denote $r^m=\frac{\widetilde{\pi}^{m}(a^m|s)}{\pi^{m}(a^m|s)}$ the ratio of current agent, $\epsilon_1, \epsilon_2$ are clipping parameters, and $g(\cdot, \epsilon)=\text{clip}(\cdot, 1-\epsilon, 1+\epsilon)$ the clipping operation of PPO.  We further define
\begin{equation*}
\Re^{-m}_{\boldsymbol{\widetilde{\pi}}, \boldsymbol{\pi}}=\prod_{i\neq m}\frac{\widetilde{\pi}^i(a^i|s)}{\pi^i(a^i|s)}
\end{equation*}
as the ratio used in Coordinated PPO (CoPPO) \cite{wu2021coordinated}.

\textbf{MAPPO} updates each agent based on the advantage estimate, and the Trust Region clipping operator is applied to the ratio of the policy of the agent itself. The main advantage of MAPPO is its sharing parameters mechanism. At each iteration, the samples from all agents are aggregated into a single large batch to reduce variance. The Trust Region in MAPPO is considered independently across agents, where one agent is not aware of the changes the others are making. 

\textbf{HAPPO} incorporates the additional ratio term $\Re^{1:m-1}_{\boldsymbol{\widetilde{\pi}}, \boldsymbol{\pi}}$ from other agents into the optimization objective of individual agents. The sequential update framework of HAPPO is general, with HAPPO as one concrete implementation of the Trust Region mechanism. In sequential updates, the changes of the previous agents are accumulated, and thus in later agent optimization, the actions are weighted based on the accumulated changes; If the prior agents promote their actions, then the current agent is also likely to promote theirs due to the high weights assigned by the cumulative ratio and vice versa, encouraging cooperation between agents. The main benefit of HAPPO is in the heterogeneous domains, where the state and action spaces of agents can differ. 

\textbf{CoPPO} \cite{wu2021coordinated} is designed specifically to coordinate consecutive PPO iterations among multiple agents. CoPPO is built upon MAPPO and thus inherits the parameter-sharing archetype of MAPPO.. The formulation for the coordination of CoPPO appears similar to the ratio term in HAPPO and is formulated as the weight of the cumulative ratio of other agents $\Re^{-m}_{\boldsymbol{\widetilde{\pi}}, \boldsymbol{\pi}}$. The difference is that this ratio considers \textit{all} other agents, this is because agents share their parameters, and updating one agent will affect others as well. To tackle the high variance problem of the coordination, a double clipping trick is used on both the current ratio term (the trust region) and the cumulative term. As a result, CoPPO is inherently coupled with the implementation of MAPPO and the sharing-parameter paradigm.

\begin{figure*}[ht]
\centering\includegraphics[width=1.\columnwidth]{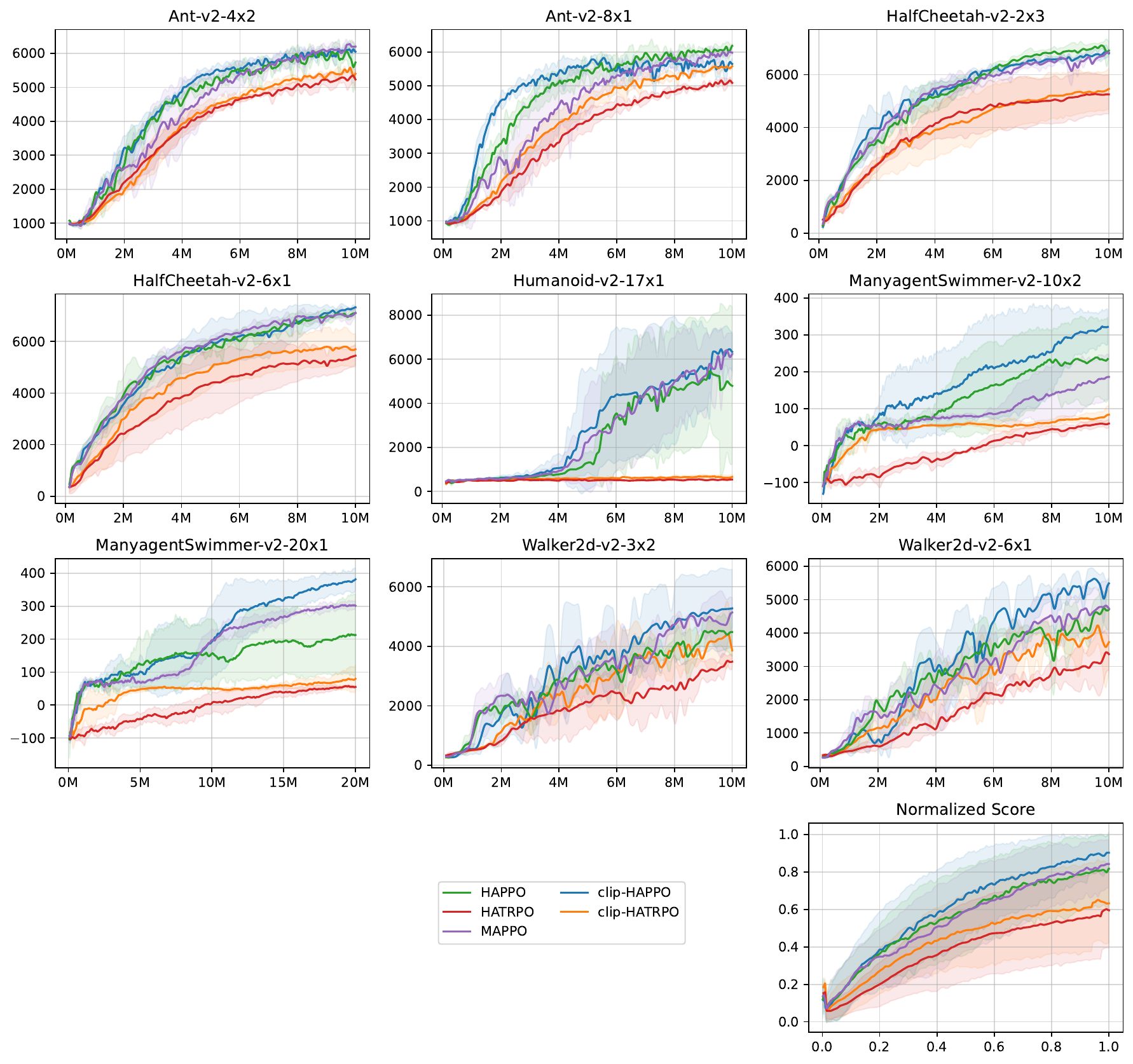}
  \caption{Results on 9 environments of MaMuJoCo benchmark.}
  \label{fig:mamujoco}
\end{figure*}

Our work is targeted at a more general case of Trust Region frameworks where the cumulative ratio quickly becomes noisy for the later agents in sequential updates.
The clipping operator is selected due to its simplicity in controlling the variance with a tradeoff in the joint policy improvement.
Moreover, we establish a sublinear convergence rate to $\epsilon$-NE.
For practical applications, the experiment results demonstrate the superiority of having a low variance advantage estimation.
The combination of our clipping objective and the well-known PPO is a strong baseline for Trust Region-based approaches in MARL, achieving more stable results than its competitors such as MAPPO and HAPPO.

\section{Experiments}
\label{sec:experiments}
In this section, we design our experiments to 
answer the following questions:
(i) Does our clip objective improve the MARL performance in the heterogeneous setting with trust region-based methods? 
(ii) How does reducing high variance improve the performance?
(iii) Do the benefits of our clip objective transfer to other settings, including fixed-order training and parameter sharing?

We evaluate the performance of our method on three widely used benchmarks:
\begin{itemize}
    \item {\textbf{Multi-Agent MuJoCo (MAMuJoCo)}. The MAMuJoco benchmark \cite{de2020deep, peng2021facmac} extends the MuJoCo benchmark to the multi-agent domains by designating the controls of different parts of a robot to different, independent agents. Each environment is designed with a specific moving model and number of agents. The number of agents ranges from two agents to more than 20 agents as in the \textit{ManyagentSwimmer-v2} setting. For benchmarking purposes, we test our methods and the selected baselines on 9 environments of MAMujoCo.}
    \item {\textbf{StarCraft II.} The StarCraft Multi-Agent Challenge (SMAC) \cite{samvelyan2019starcraft} presents a set of scenarios, each consisting of two opposing teams that aim to defeat the other with varying levels of difficulty. The benchmark emphasizes the micromanagement of individual agents and decentralized testing. We consider several hard and super hard tasks for evaluation, with both homogeneous and heterogeneous settings.}
    \item {\textbf{Multi-Agent Particle Environment (MPE)}. The MPE benchmark  \cite{lowe2017multi} is the set of simple scenarios on a 2D plane with particle agents that can interact with each other and stationary landmarks.
    These scenarios vary from fully cooperative to mixed competitive setups.
    We focus on three tasks for their fully cooperative nature: Spread, Refenrence, and Speaker-Listener.}
\end{itemize}

We implement our methods using the open-source codebase of HARL \cite{zhong2023heterogeneous}. Our methods are simple and can be straightforwardly integrated into the existing implementations, with only modifications to the objective functions of the corresponding algorithms, and introduce no additional hyperparameters.
{Source code is available at }
\url{https://github.com/giangbang/Low-Variance-Trust-Region-MARL.}

We compare our methods, clip-HAPPO, and clip-HATRPO, against state-of-the-art Trust Region Policy Gradient MARL algorithms, including HAPPO, HATRPO \cite{kuba2021trust}, and MAPPO \cite{yu2022surprising}. All algorithms are implemented as parameter-independent in all experiments.
For both clip-HAPPO and clip-HATRPO, we adopt the settings from \cite{zhong2023heterogeneous}.
In general, the network architecture is set to 2 layers MLP with the hidden size of 128 in MPE, 3 layers with the hidden size of 128 in MaMujoco, and 64 in SMAC. When RNN is used, a GRU layer is attached after the same MLP backbone. The hyperparameters gamma ($\gamma$) and state type in SMAC (FP and EP) are not tuned and followed \cite{zhong2023heterogeneous}.
All the parameters used in our experiments are reported in the Appendix Section \ref{appendix:experimentdetails}.
The hyperparameters of the compared algorithms are taken from the respective papers and the provided tuned config from the above library.

Our experiments are carried out on a workstation with NVIDIA GeForce RTX 3090 24G and 64G main memory.
Each run takes from 30 minutes to 10 hours, depending on the environment and number of agents.
For each environment and model, we repeat at least three different seeds.


\subsection{Experiment results} 


\textbf{MaMuJoCo}.
As shown in Figure \ref{fig:mamujoco}, among the 9 tested environments, clip-HAPPO achieves the best performance compared to other baselines, especially in environments with a high degree of variability includes \textit{Walker2d} and \textit{ManyAgentSwimmer}. On the other hand, clip-HATRPO consistently outperforms HATRPO in \textit{all} tested environments. 
The performance gain of the TRPO-based approach with clipping objective is significant in most of the environments and even comparable to MAPPO. 
Notably, unlike HAPPO, HATRPO uses KL divergence for optimization, making it more sensitive to extreme policy updates. This sensitivity amplifies their impact on TRPO iterations compared to PPO.
Low variance estimates reduce the extreme updates of the TRPO iterations, improving the convergence rate of the algorithms. 
 

\textbf{SMAC}. We evaluate our methods on three super hard environments of SMAC. Figure \ref{fig:smac} demonstrates that clip-HAPPO outperforms all other methods in 2 out of 3 tested scenarios, and clip-HATRPO demonstrates its effectiveness over HATRPO in all environments. Specifically, in \textit{27m\_vs\_30m}, the instability is observed in HAPPO but is completely avoided in our method. This highlights the stability of our method in environments with a large number of agents. The instability effect is absent in TRPO-based methods, possibly because the trust region is explicitly enforced in the probability space. Therefore, HATRPO is more stable than HAPPO, albeit with the high variance issues, but learns slower. We again observe the advantage of our clip-HATRPO over HATRPO in all scenarios. MAPPO performs slightly better than our method in \textit{27m\_vs\_30m}, likely because it also does not suffer from high variance.
However, MAPPO falls short in non-homogeneous environment \textit{3s5z\_vs\_3s6z}.


\begin{figure*}[htbp]
\centering\includegraphics[width=1.0\columnwidth]{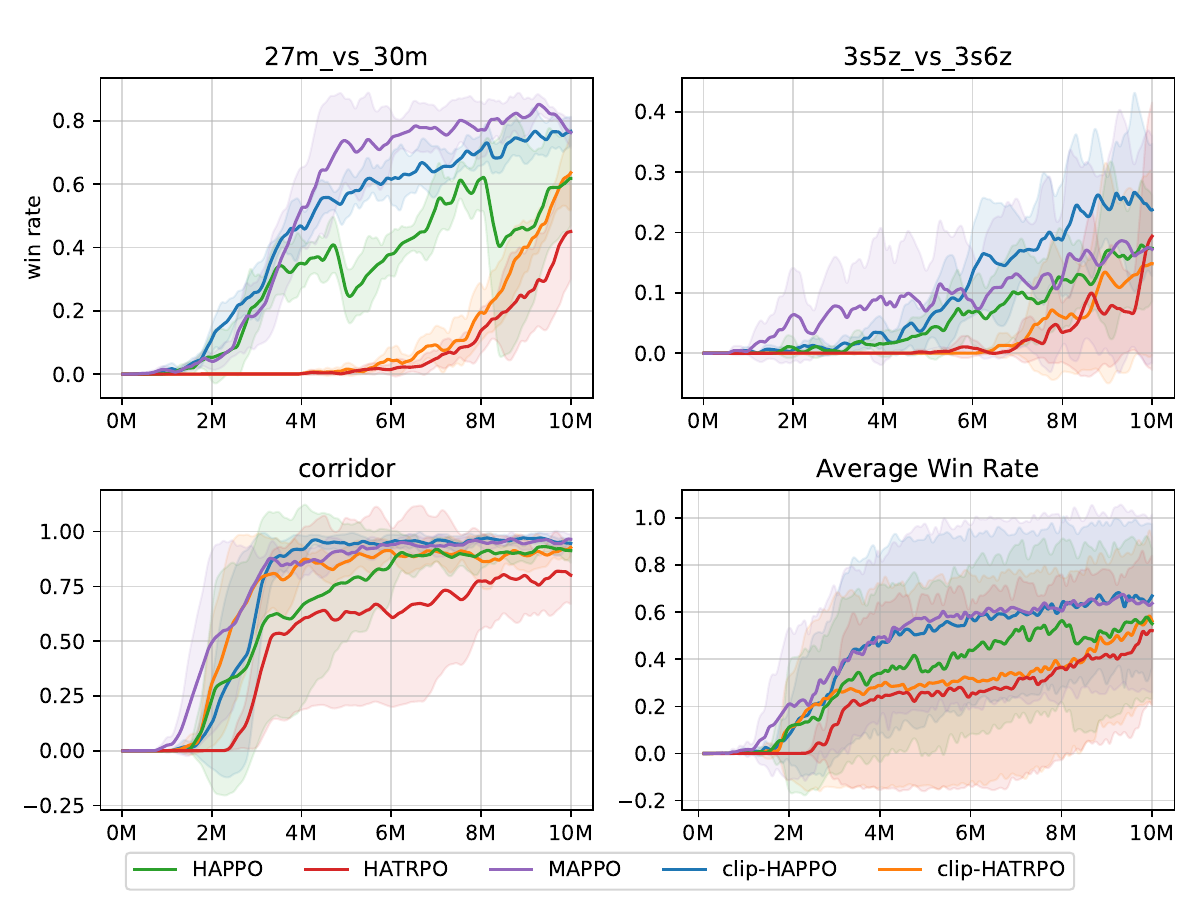}
  \caption{Results on 3 super hard environments of SMAC benchmark.}
  \label{fig:smac}
\end{figure*}
\begin{figure*}[htbp]
\centering\includegraphics[width=1.0\columnwidth]{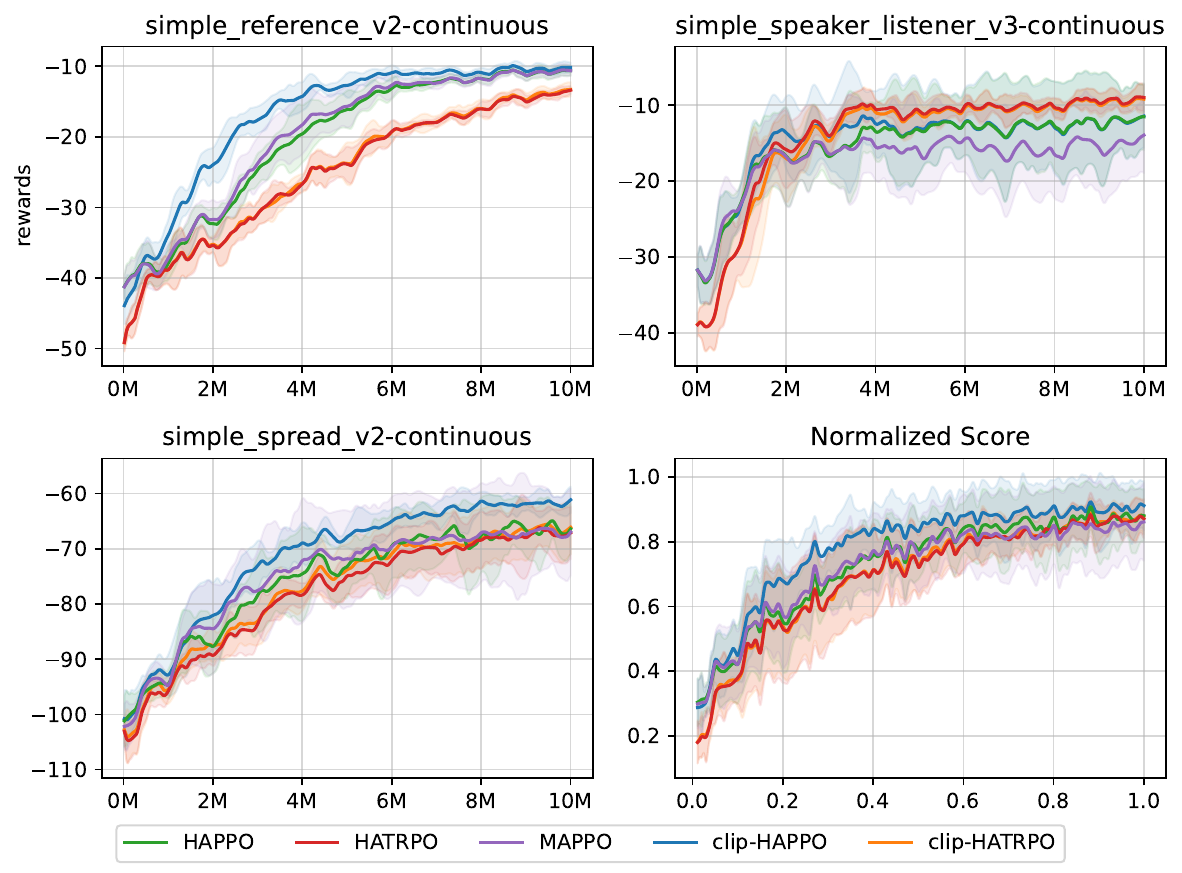}
  \caption{Results on 3 environments of MPE benchmark.}
  \label{fig:mpe}
\end{figure*}

\textbf{MPE}. The three continuous, fully cooperative tasks in MPE are considered. 
From Figure \ref{fig:mpe}, we can see that clip-HAPPO achieves the best performance in 2 of 3 tested environments while the HATRPO and clip-HATRPO have the best performance in the remaining \textit{Speaker-Listener} environment. Note that, since the tasks in MPE are relatively simple with at most three agents, the unstable advantage estimation does not affect the learning process severely. Therefore, the variations between tested methods are negligible.

\subsection{Ablation study}

\begin{figure}[htbp]
\label{fig:mamujoco_variance}
\centering\includegraphics[width=.7
\columnwidth]{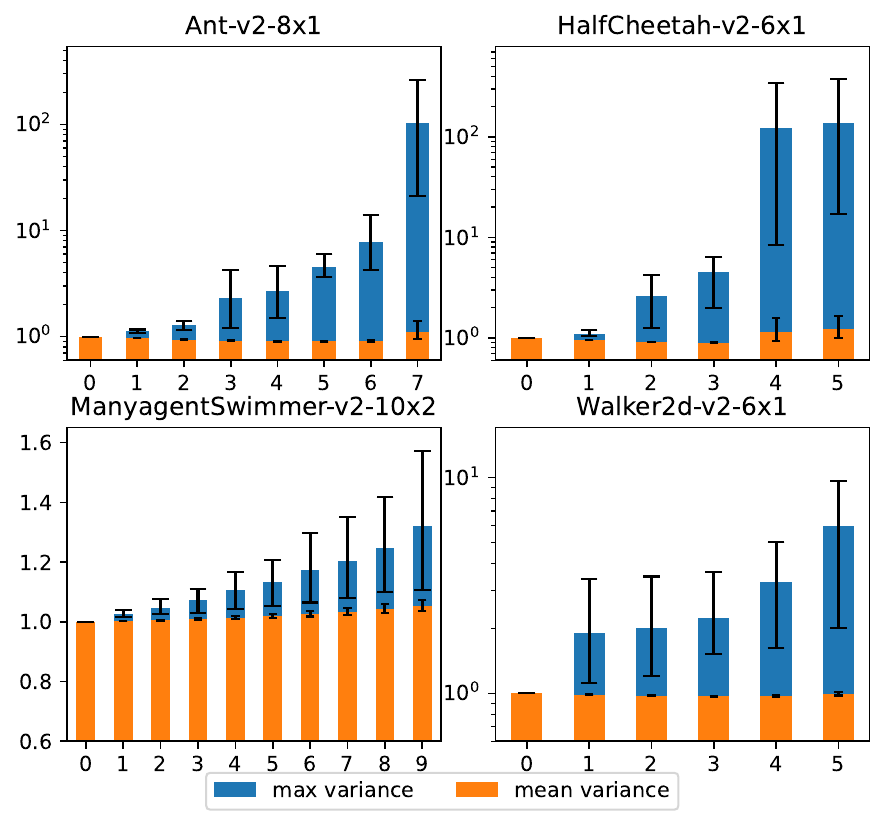}
  \caption{Variance of HAPPO on different MaMuJoCo tasks, with error bars maximum and minimum over seeds. \label{fig:mamumjocovariance}}
\end{figure}

\begin{figure}[htbp]
\label{fig:mamujoco_variance_clip}
\centering\includegraphics[width=.7
\columnwidth]{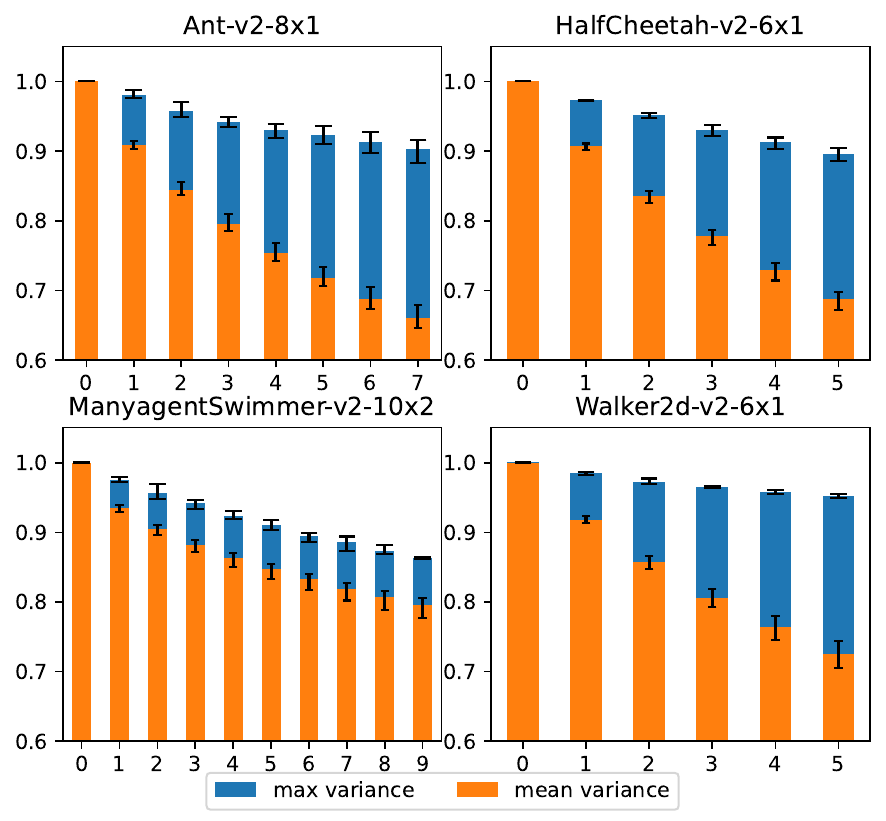}
  \caption{Variance of clip-HAPPO on different MaMuJoCo tasks, with error bars maximum and minimum over seeds. \label{fig:mamumjocovarianceclip}}
\end{figure}

\begin{figure*}[htbp]
\label{fig:mamujoco_thread}
\centering\includegraphics[width=
1\columnwidth]{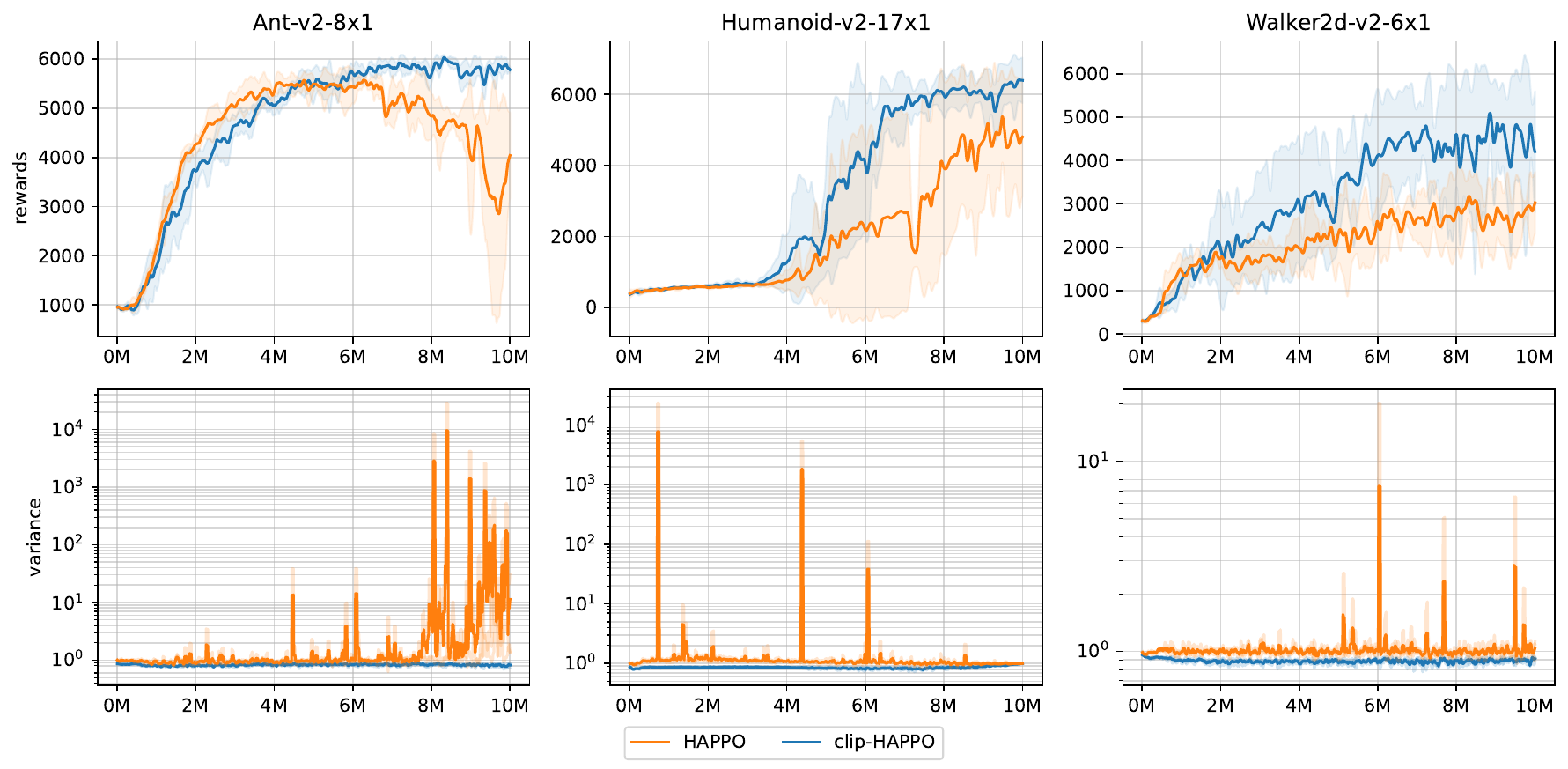}
 \caption{Reward and advantage variance on MaMuJoCo with fewer threads. The variance plots are in log-scale} \label{fig:10threadplots}
\end{figure*}

{
Due to the superiority of the PPO-based methods, including HAPPO and our version clip-HAPPO, we only perform ablation experiments with the two methods.}

\begin{figure}
\label{fig:mamujoco_fix_order}
\centering\includegraphics[width=.7\columnwidth]{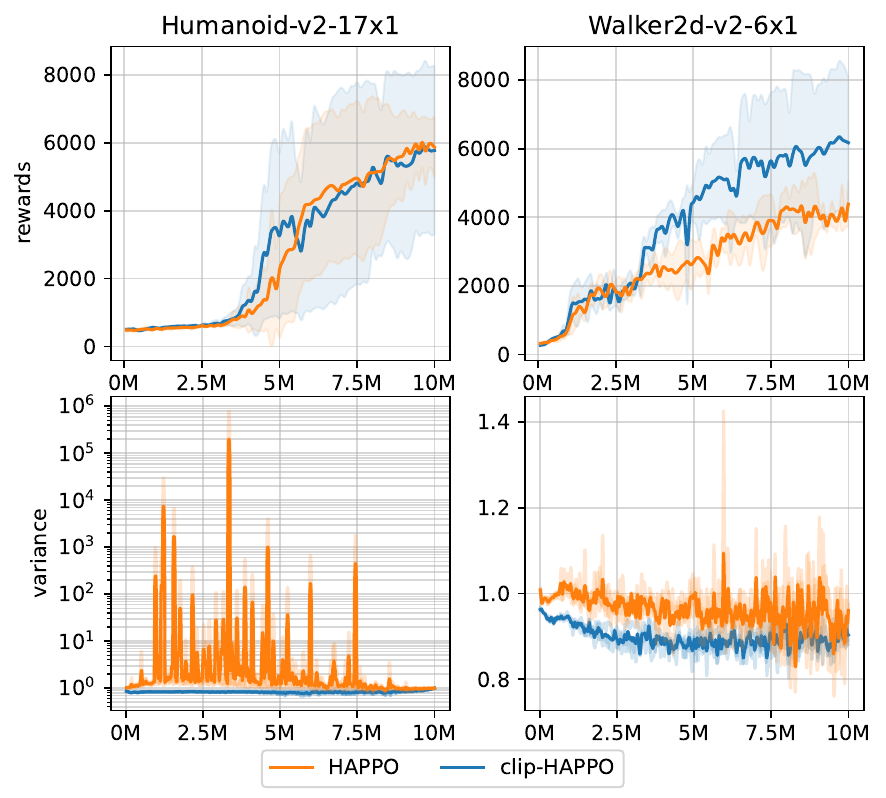}
  \caption{Reward and advantage variance on MaMuJoCo with fixing agents' order. The variance is reported as the average value of the agents.
  \label{fig::appendixfixorder}}
\end{figure}

\textbf{Effect of the clipping objective on the variance.} We investigate the impact of high-variance updates on performance across diverse environments in the MaMuJoCo benchmark for HAPPO and compare it with clip-HAPPO. We observe that the exponential variance problem arises in all settings, while in our method, the variance tends to decrease with later agents. However, we also see that not all high-variance updates are detrimental.
Figure \ref{fig:mamumjocovariance} and Figure \ref{fig:mamumjocovarianceclip} give illustrations of the agent's advantage variance of the baseline HAPPO and our clip-HAPPO, respectively.
{For each method, the advantage  $A_\pi(s, a)$ is estimated through data samples and normalized to have 1 standard deviation at the first agent.
For each $m^{th}$ agent in the update order, the corresponding advantage $A^m_\pi(s, a)$ is computed according to the corresponding training objective.
}
It can be seen clearly the exponential growth trend of the HAPPO baseline.
The maximum variances are varied across the environments, ranging from $10^1$ in \textit{Walker2d-v2-6x1} to $10^2$ in \textit{Ant-v2-8x1} and \textit{HalfCheetah-v2-6x1}.
The growth rate depends on the specific settings.
For example, the maximum variances in \textit{Walker2d-v2-6x1} are more stable than those of \textit{HalfCheetah-v2-6x1}.
In the case of \textit{Humanoid-v2-17x1}, the maximum variances can reach more than $10^6$ as can be seen in Figure \ref{fig:exponential_figure}.
Meanwhile, our proposed method has a more stable advantage variance of around 1 across different environments.

{Furthermore, to highlight the superiority of clip-HAPPO's low variance property, we reduce the number of training rollout hyperparameters for each update.}
{The number of training rollout threads can have an impact on the variance of the algorithms, since with more samples from
the environment, the estimation becomes more accurate, therefore
reducing the effect of high variance.}
{To magnify this effect, we intentionally lower the number of parallel environments used for simulation by using a lower number of training threads. In this way,}
the effects of low-variance updates can be seen more clearly.
In Figure \ref{fig:10threadplots}, we plot the training results with 10 threads of our clip-HAPPO and the original HAPPO.
The differences between the two methods are highlighted in this case.
When the issue of variance becomes more severe, our clipped version still convergences as expected.
{We provide further experiments in Appendix Section }\ref{sec:c}. 
{The results highlight that our clip version outperforms HAPPO in 5/6 tested environments with half the number of threads and 4/6 when the threads are further reduced to a quarter.}
In practice, it is worth noting that we generally prefer increasing the number of parallel environments for faster simulation with the trade-off of reducing sample efficiency.


\textbf{Effect of update order}. {We give a comparison between clip-HAPPO and HAPPO with fixing order update in Figure 
\ref{fig::appendixfixorder}.
The update order of all agents are $1,2,\dots,N$ where $N$ is the number of agents.}
However, unlike the report from \cite{kuba2021trust}, we observe few differences between the fixing and random order of updates in the tested environments.
Similar to the random order settings, our method is comparable in \textit{Humanoid-v2-17x1} and better than HAPPO in \textit{Walker2d-v2-6x1}.
HAPPO still suffers the high advantage variance in the later agents and these variance values dominate the plot.
It is worth noting that in the fixed order setting, the advantage of later agents is under the effects of high variance.
Thus, it would result in more unstable policy updates than the random order setting.
On the other hand, our clip-HAPPO has more stable advantage and is observed with little differences between the updating order of agents in the two tested environments.



\textbf{Sharing vs non-sharing parameters. } For completeness of the policy-based approaches, we consider the setting of homogeneous environments where different agents can fully share their parameters.
We compare our clip-HAPPO with MAPPO, CoPPO \cite{wu2021coordinated}, and HAPPO with the parameter sharing settings in the hard and super hard environments of the SMAC benchmark. 
{When all agents' policies share the same network,} the effect of high-variance updates can be exacerbated by the fact that catastrophic updates from one agent can have a negative impact on the others. 
This effect is local in independent-parameters settings, as noisy updates are confined to the individual agent itself.
Therefore, using low-variance updates facilitates safer parameter sharing between agents.
Our clip-HAPPO has the most stable performance across all the tested environments. 
Note that HAPPO is mainly designed to use in the non-sharing setting while MAPPO and CoPPO are designed for the sharing setting.
We implement the sharing procedure such that the agent update cycles are interleaved with the PPO update epochs to avoid the "forgetting" problem of sequential learning with neural networks \cite{kirkpatrick2017overcoming}.
We evaluate all four methods in the three selected environments of SMAC including \textit{8m\_vs\_9m} (Hard), \textit{MMM2} (Super Hard), and \textit{6h\_{}vs\_{}8z} (Super Hard).

Experiment results with parameter sharing demonstrate that sharing parameters with low variance updates mitigates the bad updates of individual agents to the whole.
{In Figure \ref{fig:smac_share}, we see a significant instability of HAPPO in the sharing settings; The performance of HAPPO has the highest variance in all the tested environments, especially in \textit{MMM2} where the algorithm does not converge in some runs. With CoPPO, we observe that the recommended configuration cannot converge in the super hard \textit{6h\_vs\_8z} scenario. Meanwhile, our method achieves competitive and stable performance in all three environments with other baselines, demonstrating the effectiveness of sequential update with the clipping objective in the homogeneous domains.}
{With the competitive results of our methods to other state-of-the-art methods in sharing parameters, and given that this requires homogeneous setups and is not the main focus of our paper, we do not further pursue the performance of the clipping objective in the sharing settings.}

\begin{figure*}
\centering\includegraphics[width=
1.0\columnwidth]{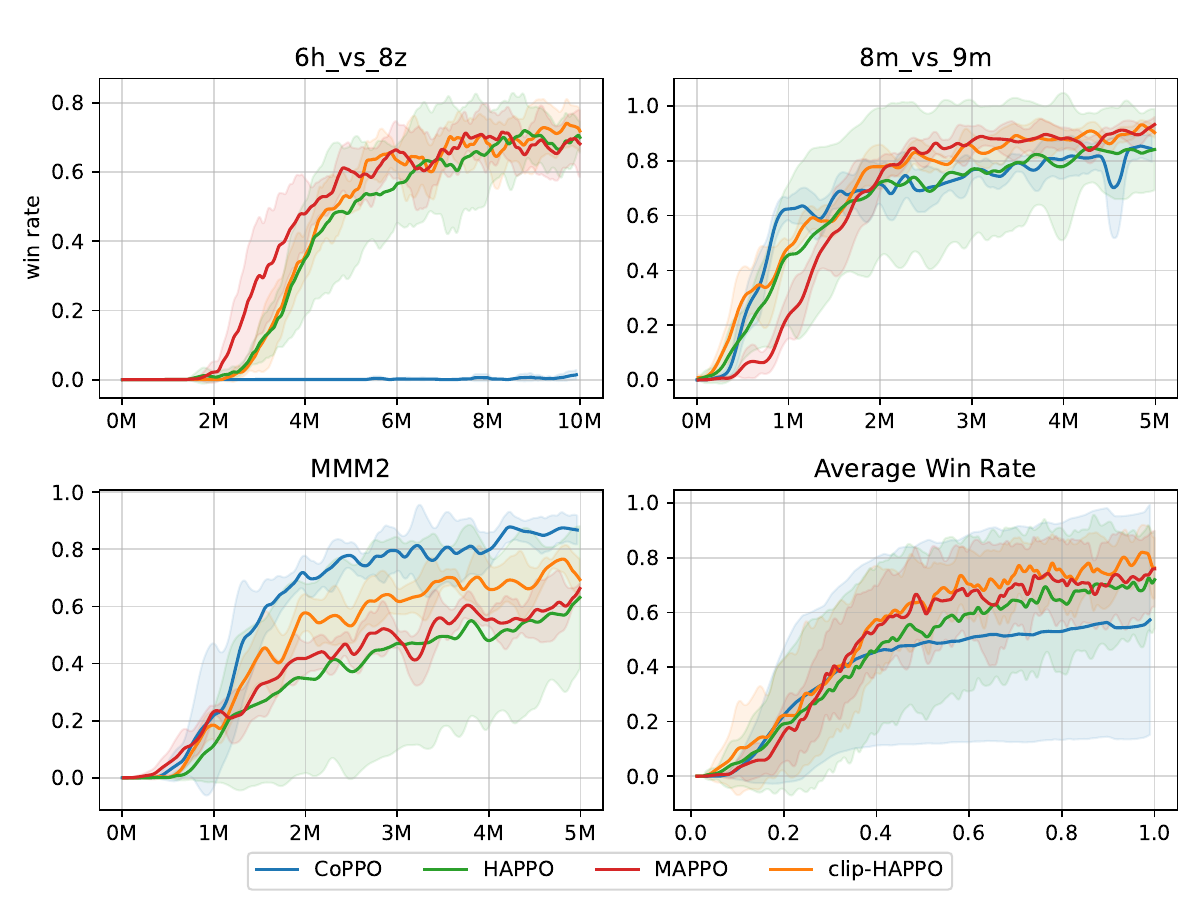}
  \caption{The performance on 3 scenarios of SMAC with sharing parameter settings.}
  \label{fig:smac_share}
\end{figure*}

\section{Related Work}
\label{sec:relatedwork}
\textbf{Multi-agent Trust Region optimization.}
Many on-policy MARL works attempt to emulate the success of Trust region methods from the single-agent domains to the multi-agent settings. Early works, such as IPPO \citep{de2020independent}, which directly employs PPO \citep{schulman2017proximal} independently to each agent and treats other ones as part of the environments, or MAPPO \citep{yu2022surprising} that consider the joint critic for centralized training with parameters sharing for the homogeneous agents. 
Similarly, \cite{kuba2021trust} propose the Trust region methods with the sequential update scheme for multi-agent learning that guarantees the monotonic improvement property, which is our most related work. Following this, \cite{zhong2023heterogeneous} propose a framework on mirror learning for heterogeneous multi-agent learning and extend the sequential updates to a broader class of algorithms. Recently, the multi-agent transformer is introduced by \cite{wen2022multi} based on the close connection between the sequential updates and the sequence modeling problem. 
Other works of independent interest in trust region MARL include the work of \cite{li2023multiagent} that consider a trust region method in distributed settings based on consensus optimization, \cite{li2021dealing} who view trust region decomposition as a way to tackle the non-stationarity problem, and \cite{gu2021multi} where the safety constraint is incorporated into the trust region optimization.


\textbf{Variance reduction in RL.}
Variance reduction has long been an effective way to improve the stability of RL algorithms, most notably the use of a baseline in policy gradient methods \cite{tucker2018mirage, chung2021beyond}. 
In value-based approaches, Retrace($\lambda$) \cite{munos2016safe} is a clipping-based Importance Sampling method on off-policy data to reduce variance.
Moving to multi-agent settings, COMA \cite{foerster2018counterfactual} uses a counterfactual baseline to resolve the credit assignment problem, and \cite{kuba2021settling} derived the optimal baseline for general MARL with a centralized critic. 
In Trust Region MARL, the problem of high variance is suggested by \cite{wu2021coordinated} when multiple PPO iterations coordinate their updates, which the authors address by proposing a double clipping trick on the objective of PPO.
More recently, \cite{wang2023order} propose to use an off-policy correction scheme for sequential update
inspired by Retrace for the monotonic improvements of each agent.
 
{\textbf{Independent vs dependent actors.}
Independent policy algorithms are dominant in multi-agent reinforcement learning for their practical advantages. 
Centralized training with decentralized execution (CTDE) \cite{foerster2018counterfactual} is a paradigm that leverages independent actors with local information at inference time. 
In value-based methods, MADDPG \cite{lowe2017multi} learns a centralized critic and subsequently trains decentralized actors. In particular, monotonic value decomposition methods for learning local value functions \cite{sunehag2017value, rashid2020monotonic} gain popularity in discrete settings.
In Policy Gradient methods, \cite{de2020independent, yu2022surprising, kuba2021trust} are methods that formulate learning agents as independent learners, with possibly a centralized baseline for reducing variance. 
On the theoretical side, 
multiple works study the convergence of policy gradient methods with independent actors, in Competitive Zero-sum Games \cite{daskalakis2020independent}, Markov Potential Games \cite{leonardos2021global} and both \cite{ding2022independent}. 
On the other hand, recent methods for dependent actors include MAT \cite{wen2022multi} which models each agent as an autoregressive model, and \cite{zhao2023local} which parameterizes policies with actions-dependent distributions.
}

\textbf{Convergence rate of RL algorithms.}
In the single-agent setting, \cite{schulman2015trust} show the monotonic improvement of TRPO. Later, \cite{neu2017unified} establish the global convergence of TRPO to the optimal policy, while \cite{liu2019neural} shows the $O(1/\sqrt{K})$ convergence rate of both PPO and TRPO with over-parametrized neural networks. \cite{shani2020adaptive} show a similar nonasymptotic convergence rate of TRPO and an improved $O(1/K)$ rate in regularized MDP. \cite{lan2023policy} proposes a policy mirror descent method that can achieve a linear convergence rate in strongly regularized MDP. Moving on to the multi-agent settings, \cite{kuba2021trust} prove the monotonic improvement property of HATRPO, and building on the monotonicity, they show the convergence to Nash Equilibria in the limit of the algorithm. 
{In Markov Potential Games - a broader class of cooperative Markov Games, 
the $O(1/\sqrt{K})$ convergence rate to NE of Policy Gradient methods with independent actors is established in
\cite{leonardos2021global} and an improved rate by \cite{ding2022independent}. }
More recently, \cite{zhao2023local} show the $O(1/\sqrt{K})$ rate to the joint optimal policy with the sequential update and dependent-action policies. We show that our method can converge to an $O(1/\sqrt{K})$-Nash with sequential updates independent actors while maintaining a low variance guarantee.


\section{Conclusion}
\label{sec:conclusion}
In this paper, we examine the critical problem of high variance in Trust Region methods in MARL with independent actors and sequential update schemes. 
To address this problem, we propose a novel surrogate objective function through a clipping trick that guarantees a low variance estimate. We justify the monotonic bound of the new objective and demonstrate that the Trust region methods can converge to approximate Nash equilibria using the proposed objective. This enables us to derive two new algorithms clip-HAPPO and clip-HATRPO. Experiments on popular benchmarks verify that our methods achieve competitive performance to other strong baselines.

{For the limitation, the exponential growth derived in this work is only the upper bound of the variance, meaning that it is a pessimistic point of view. In practice, this rate of growth can depend on various factors such as environment types or hyperparameters and does not always follow the worst-case scenario. Secondly, though our proposed methods are theoretically backed up, it is not the only way to reduce variance. The training and trust region setups also play an important role in mitigating the variance, 
{which is why HATRPO is less susceptible to the high variance problem than HAPPO.}
Thirdly, even when the high variance updates happen, they also do not always lead to training instability and sometimes can be exploited for exploration.
{As our clipped objective could potentially dampen the exploration signal within the joint advantage, a natural extension of our proposed method would be to vary the clipping threshold to encourage more exploratory behavior in later agents.
In a practical setting, estimating the exploration effect of advantage estimation is challenging due to its entangled interaction with the policy's loss landscape and the policy's entropy
}
Finally, fixing the clipping threshold to 1 for all agents may appear pessimistic in environments with less severe variance issues.

{For future works, based on the results of this paper, it is possible to expand to estimate the convergence rate with sample-based analysis, which would highlight the advantage of our method in terms of low variance estimation.
{Moreover, a promising direction is to study whether the joint policy optimized by the Trust Region framework can converge to the global optimal point under the independent actors setting.}

\subsubsection*{Acknowledgements} This material is based upon work supported by the Air Force Office of Scientific Research under award number FA2386-24-1-4012.

\subsubsection*{Author contributions}
Conceptualization: Bang Giang Le, Viet Cuong Ta; Methodology: Bang Giang Le, Viet Cuong Ta; Formal analysis and investigation: Bang Giang Le, Viet Cuong Ta; Experiments: Bang Giang Le; Writing - original draft preparation: Bang Giang Le; Writing - review and editing: Viet Cuong Ta; Funding acquisition: Viet Cuong Ta; Supervision: Viet Cuong Ta.

\section*{Declarations}

\subsubsection*{Confict of interest}
The authors declare no competing interests.

\begin{appendices}

\section{Omitted Proofs}
\label{appendix:proofs}

We give the details of the omitted results of Theorem \ref{theorem:convergence} and Lemma \ref{lem:improvementboudclip} in the main text in Section \ref{proof:convergencerate} and Section \ref{proof:policyimprovement}, respectively. 
{Section }\ref{sec:varianceproof} {present the proof for the explosion of sampling ratio in the example }\ref{prop:exponentialexample}.

\subsection{Omitted results of Convergence Rate} \label{proof:convergencerate}


\begin{proposition}[Triangular inequality of joint policy]\label{prop:triangle}
    Given two joint policy $\boldsymbol{\pi}^{1:m}$ and $\boldsymbol{\widetilde \pi}^{1:m}$, then the following holds
    \[\|\boldsymbol{\pi}^{1:m} - \boldsymbol{\widetilde \pi}^{1:m}\|_1 \leq \sum_{i=1}^{m}\|\pi^i - \widetilde {\pi}^i\|_1\]
\end{proposition}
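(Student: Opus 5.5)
I will prove the statement by induction on $m$, for a fixed state $s$; the claim then holds pointwise in $s$, and hence also after taking a maximum over states. Here the product (independent-actor) structure is used: $\boldsymbol{\pi}^{1:m}(\boldsymbol{a}^{1:m}|s)=\prod_{i=1}^m \pi^i(a^i|s)$, and similarly for $\boldsymbol{\widetilde\pi}^{1:m}$. The norm $\|\cdot\|_1$ is the $\ell_1$ norm over the joint action space $\mathcal A_1\times\dots\times\mathcal A_m$.

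\textbf{Base case.} For $m=1$ the inequality is an equality.

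\textbf{Inductive step.} Write $\boldsymbol{\pi}^{1:m}=\boldsymbol{\pi}^{1:m-1}\otimes\pi^m$ and insert the hybrid product $\boldsymbol{\widetilde\pi}^{1:m-1}\otimes\pi^m$. This gives
\[
\boldsymbol{\pi}^{1:m}-\boldsymbol{\widetilde\pi}^{1:m}
=\big(\boldsymbol{\pi}^{1:m-1}-\boldsymbol{\widetilde\pi}^{1:m-1}\big)\otimes\pi^m
+\boldsymbol{\widetilde\pi}^{1:m-1}\otimes\big(\pi^m-\widetilde\pi^m\big).
\]
Take the $\ell_1$ norm and apply the ordinary triangle inequality. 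The key fact is that the $\ell_1$ norm is multiplicative on tensor products, $\|f\otimes g\|_1=\sum_{\boldsymbol{a},b}|f(\boldsymbol{a})||g(b)|=\|f\|_1\|g\|_1$. Since $\pi^m$ and $\boldsymbol{\widetilde\pi}^{1:m-1}$ are probability distributions, their $\ell_1$ norms equal $1$. Therefore
\[
\|\boldsymbol{\pi}^{1:m}-\boldsymbol{\widetilde\pi}^{1:m}\|_1\le \|\boldsymbol{\pi}^{1:m-1}-\boldsymbol{\widetilde\pi}^{1:m-1}\|_1+\|\pi^m-\widetilde\pi^m\|_1.
\]
Applying the induction hypothesis to the first term gives the claim.

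There is no real obstacle in this argument. The only point needing care is to state explicitly that the joint policy factorizes across agents, i.e.\ the independent-actors assumption underlying $\Re^{1:m-1}$. Without this factorization the marginal decomposition fails. Alternatively, one could bound the total variation distance through a coupling argument, but the telescoping hybrid argument above is more direct.
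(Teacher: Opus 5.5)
Your proof is correct and is essentially the paper's argument: the paper also interpolates through the hybrid policies $(\boldsymbol{\widetilde\pi}^{1:k},\boldsymbol{\pi}^{k+1:m})$ and applies the triangle inequality, though as a single telescoping sum rather than by induction. Your identity $\|f\otimes g\|_1=\|f\|_1\|g\|_1$ is what justifies the step the paper states without proof, namely that two consecutive hybrids, which differ only in agent $i$, are at $\ell_1$ distance $\|\pi^i-\widetilde\pi^i\|_1$.
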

\begin{proof}
    Let $\boldsymbol{\bar \pi}^{k, m} = (\boldsymbol{\widetilde {\pi}}^{1:k}, \boldsymbol{\pi}^{k+1:m})$ be a new joint policy that is a combination of the first $k$ agents of $\boldsymbol{\widetilde {\pi}}$ and the rest are taken from $\boldsymbol{\pi}$, then we have the following triangular inequality
    \[\|\boldsymbol{\pi}^{1:m} - \boldsymbol{\widetilde \pi}^{1:m}\|_1 \leq \sum_{i=1}^m \| \boldsymbol{\bar \pi}^{i, m} - \boldsymbol{\bar \pi}^{i-1, m}\|_1\]
    Note that each of the pair $\boldsymbol{\bar \pi}^{i, m}$ and $\boldsymbol{\bar \pi}^{i-1, m}$ only differs by the $i^{th}$ agent, so {$\| \boldsymbol{\bar \pi}^{i, m} - \boldsymbol{\bar \pi}^{i-1, m}\|_1 = \|\pi^i - \widetilde \pi^i\|_1$}, which concludes the proof.
\end{proof}
The above proposition is true regardless of whether the agents act independently or not. 

Now, in addition to the "single-agent" advantage defined in the definition \ref{def:partialQA}, it is useful to introduce the "partial" advantage function similar to the partial Q function, as follows
\[A_{\boldsymbol{\pi}}^{1:m}(s, \boldsymbol{a}^{1:m})\triangleq \mathbb E_{\boldsymbol{a}^{-1:m}\sim \boldsymbol{\pi}}A_{\boldsymbol{\pi}}(\boldsymbol{a}^{1:m}, \boldsymbol{a}^{-1:m}) = Q_{\boldsymbol{\pi}}^{1:m}(s, \boldsymbol{a}^{1:m}) - V_{\boldsymbol{\pi}}(s),\]
it is easy to see that $\max|A^{1:m}_{\boldsymbol{\pi}}(s, \boldsymbol{a}^{1:m})| \leq \max_{s, \boldsymbol{a}, \boldsymbol{\pi}} |A_{\boldsymbol{\pi}}(s, \boldsymbol{a})| = \bar \epsilon$.
Note in the following lemma that $A^{1:m}_{\boldsymbol{\pi}}(s, \boldsymbol{a}^{1:m-1}, a^m)$ should not be confused with $A^{m}_{\boldsymbol{\pi}}(s, \boldsymbol{a}^{1:m-1}, a^m)$.
\begin{lemma} \label{lemma:advantage_dif}
    For two joint policies $\boldsymbol{\pi}$ and $\boldsymbol{\widetilde{\pi}}$, any agent $m\in \mathcal N, s\in \mathcal S, a^m \in \mathcal A^m$, then 
    \begin{flalign*}
    \begin{multlined}[t]
        \left| \mathbb E_{\boldsymbol{a}^{1:m-1}\sim \boldsymbol{{\pi}}} A^{1:m}_{\boldsymbol{\pi}}(s,\boldsymbol{a}^{1:m-1}, a^m) 
        - 
        \mathbb E_{\boldsymbol{a}^{1:m-1}\sim \boldsymbol{\pi}}\min(\Re^{1:m-1}, 1)A^{1:m}_{\boldsymbol{\pi}}(s, \boldsymbol{a}^{1:m-1}, a^m)\right|\\
        \leq \bar \epsilon\sum_{i=1}^{m-1} \|\pi^i - \widetilde {\pi}^i\|_1
        \end{multlined}
    \end{flalign*}
    where $\bar \epsilon = \max |A_{\boldsymbol{\pi}}(s, \boldsymbol{a})|$
\end{lemma}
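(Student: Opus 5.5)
Fix $s$ and $a^m$, and write $\Re = \Re^{1:m-1}_{\boldsymbol{\widetilde\pi},\boldsymbol\pi}(s,\boldsymbol{a}^{1:m-1}) = \prod_{i=1}^{m-1}\widetilde\pi^i(a^i|s)/\pi^i(a^i|s)$. The plan is to move the difference inside the expectation, so that the left-hand side becomes
\[
\left|\mathbb E_{\boldsymbol a^{1:m-1}\sim\boldsymbol\pi}\big(1-\min(\Re,1)\big)A^{1:m}_{\boldsymbol\pi}(s,\boldsymbol a^{1:m-1},a^m)\right|.
\]
Since $|A^{1:m}_{\boldsymbol\pi}|\le\bar\epsilon$ (observed just before the lemma), this is at most $\bar\epsilon\,\mathbb E_{\boldsymbol a^{1:m-1}\sim\boldsymbol\pi}\big(1-\min(\Re,1)\big)$. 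The weight $1-\min(\Re,1) = (1-\Re)_+$ is nonnegative, so no sign issue arises.

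The key step is to identify this expectation as a total variation quantity. We have
\[
\mathbb E_{\boldsymbol\pi}(1-\Re)_+ = \sum_{\boldsymbol a^{1:m-1}}\big(\boldsymbol\pi^{1:m-1}(\boldsymbol a^{1:m-1}|s)-\boldsymbol{\widetilde\pi}^{1:m-1}(\boldsymbol a^{1:m-1}|s)\big)_+ .
\]
Both joint distributions sum to one, so the positive part of their difference equals half the $\ell_1$ distance. Hence the expectation equals $\tfrac12\|\boldsymbol\pi^{1:m-1}(\cdot|s)-\boldsymbol{\widetilde\pi}^{1:m-1}(\cdot|s)\|_1 \le \|\boldsymbol\pi^{1:m-1}-\boldsymbol{\widetilde\pi}^{1:m-1}\|_1$. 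If $\pi^i$ vanishes somewhere, that action has zero $\boldsymbol\pi$-mass and contributes nothing, so the identity still holds.

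Finally, apply Proposition~\ref{prop:triangle} to bound the joint $\ell_1$ distance by $\sum_{i=1}^{m-1}\|\pi^i-\widetilde\pi^i\|_1$. This requires the joint policies to factorize as products, which is the independent-actor assumption. Here the norms are read at state $s$, or as a max over $s$; either way the bound holds pointwise. This yields the claim, with a spare factor of $1/2$ that we simply discard.

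The main point needing care is the second step: $(1-\min(\Re,1))$ must be read exactly as $(1-\Re)_+$, and the expectation under $\boldsymbol\pi$ must be converted into a sum over $\boldsymbol\pi-\boldsymbol{\widetilde\pi}$. The rest is routine. For $m=1$ the ratio is $1$ by convention, and both sides are $0$.
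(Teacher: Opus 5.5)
Your proof is correct and follows essentially the same route as the paper. The paper's first inequality, which it justifies only ``by the definition of $\Re$'', is your identity
\[
\boldsymbol\pi^{1:m-1}\bigl(1-\min(\Re,1)\bigr)=\bigl(\boldsymbol\pi^{1:m-1}-\boldsymbol{\widetilde\pi}^{1:m-1}\bigr)_+\le\bigl|\boldsymbol\pi^{1:m-1}-\boldsymbol{\widetilde\pi}^{1:m-1}\bigr|.
\]
It then applies H\"older's inequality with $\|A^{1:m}_{\boldsymbol\pi}\|_\infty\le\bar\epsilon$ and finishes with Proposition~\ref{prop:triangle}, just as you do. The only difference is that you use the exact total-variation identity $\sum(p-q)_+=\tfrac12\|p-q\|_1$ instead of the cruder pointwise bound, which gains a factor $\tfrac12$ that the paper does not record.
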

\begin{proof} In this proof, we omit $(s, \boldsymbol{a}^{1:m-1}, a^m)$ because it is clear from the context
    \begin{flalign*}
        &\left| \mathbb E_{ \boldsymbol{{\pi}}^{1:m-1}} A^{1:m}_{\boldsymbol{\pi}} 
        - 
        \mathbb E_{ \boldsymbol{\pi}^{1:m-1}}\min(\Re^{1:m-1}, 1)A^{1:m}_{\boldsymbol{\pi}}\right| \\
        \leq & 
        \langle  |A^{1:m}_{\boldsymbol{\pi}}|, |\boldsymbol{\widetilde{\pi}}^{1:m-1} - \boldsymbol{{\pi}}^{1:m-1} | \rangle
        \\
        \leq& \|A^{1:m}_{\boldsymbol{\pi}}\|_\infty \|\boldsymbol{\widetilde{\pi}}^{1:m-1} - \boldsymbol{{\pi}}^{1:m-1}\|_1 \\
        \leq& \bar \epsilon\sum_{i=1}^{m-1}\|\widetilde{\pi}^i -\pi^i\|_1
    \end{flalign*}
    where the first inequality is due to the definition of $\Re^{1:m-1}=\frac{\boldsymbol{\widetilde{\pi}} ^{i_{1:m-1}}}{ \boldsymbol{\pi} ^{i_{1:m-1}}}$, the second inequality is due to Holder's inequality, and the last inequality is by the triangular inequality in proposition \ref{prop:triangle}.

    Additionally, we notice that by repeating the same arguments, it still holds true that
    \begin{equation}
        \left| \mathbb E_{ {{\boldsymbol{\widetilde{\pi}}}}^{1:m-1}} A^{1:m}_{\boldsymbol{\pi}} 
        - 
        \mathbb E_{ \boldsymbol{\pi}^{1:m-1}}\min(\Re^{1:m-1}, 1)A^{1:m}_{\boldsymbol{\pi}}\right|
    \leq \bar \epsilon\sum_{i=1}^{m-1}\|\widetilde{\pi}^i -\pi^i\|_1. \label{eq:usefulboundimprv}
    \end{equation}
    The latter inequality will be useful in the improvement bound proof.
\end{proof}

    

The update rules described in the Algorithm \ref{alg:main} is 
\begin{flalign}
    \pi_{k+1}^{m}&=\argmax_{\pi^{m}} \big[L^{{1:m}, \text{clip}}_{\boldsymbol{\pi}_k}(\boldsymbol{{\pi}}_{k+1}^{{1:m-1}}, {\pi}^{m})-
    \beta_k
    \mathbb E_{d_{\rho}^{\boldsymbol{\pi}_k}}{D}_{\text{KL}}( \pi^{m}\|\pi_k^{m})\big] \notag \\
    &=\begin{multlined}[t]
        \argmax_{\pi^{m}} \mathbb E_{d_{\rho}^{\boldsymbol{\pi}_k}, \boldsymbol{a}^{1:m-1}\sim \boldsymbol{\pi}_k} \big[
    \min(\Re^{1:m-1}_{\boldsymbol{\pi}_{k+1}, \boldsymbol{\pi}_k}, 1)
\langle A^{1:m}_{\boldsymbol{\pi}_{k}}(s, \boldsymbol{a}^{1:m-1}, \cdot), \pi^m\rangle \\
-\beta_k D_\text{KL}(\pi^m \| \pi^m_k)
    \big]
    \end{multlined}
     \notag \\
    &= \begin{multlined}[t]
    \argmax_{\pi^m} \mathbb E_{d_{\rho}^{\boldsymbol{\pi}_k}, \boldsymbol{a}^{1:m-1}\sim \boldsymbol{\pi}_k}\big[ \min(\Re^{1:m-1}_{\boldsymbol{\pi}_{k+1}, \boldsymbol{\pi}_k}, 1)
\langle Q^{1:m}_{\boldsymbol{\pi}_{k}}(s, \boldsymbol{a}^{1:m-1}, \cdot), \pi^m\rangle \\
-\beta_k D_\text{KL}(\pi^m\| \pi^m_k)
\big]    
    \end{multlined}
    , \label{eq:clipobjective_appdx}
\end{flalign}
here we use the fact that $\Re^{1:m-1}_{\boldsymbol{\pi}_{k+1}, \boldsymbol{\pi}_k}$ is independent of the last $N-m+1$ actions, and that $V_{\boldsymbol{\pi}_k}(s)$ does not affect the optimal solution. The closed-form update of this optimization problem for any state $s$ with $d_{\rho}^{\boldsymbol{\pi}_k}(s)>0$ is
\begin{equation}
    \pi^m_{k+1}(\cdot|s) \propto \pi^m_{k}(\cdot|s) \exp\{ 
\beta_{k}^{-1}\mathbb E_{\boldsymbol{a}^{1:m-1}\sim \boldsymbol{\pi}_{k} } \min(\Re^{1:m-1}_k, 1)Q^{1:m}_{\boldsymbol{\pi}_k}(s, \boldsymbol{a}^{1:m-1}, \cdot) \label{eq:updaterule}
\}
\end{equation}
This is a well-known result in optimization, and the proof can be found, for example, in Example 3.71 \cite{beck2017first}.
\begin{remark}
    A remark on the distribution coefficient mismatch: Works on the convergence rate of Policy Gradient depend on a distribution mismatch coefficient. This term captures the random exploration of the policy in the form of the maximum ratio of the distributions (possibly a marginal state distribution with a restart model). In Natural Policy Gradient, \cite{agarwal2021theory} show the convergence independent of the mismatch coefficient, called dimension-free convergence. Natural Gradient and Trust Region Policy Optimization are known to have a close connection \cite{schulman2017equivalence}, where the update rule of the two methods appear to be the same (see Lemma 5.1 in \cite{agarwal2021theory}). 
    In Trust Region methods,
    we see that every state whose visitation probability is positive will be updated according to the update rule (\ref{eq:updaterule}), and that a stochastic policy can visit any reachable states from the initial distribution $\rho$ with non-zero probabilities. This is  stated below in the form of the finite concentration coefficient
\end{remark}

\begin{proposition} \label{prop:finitemismatch}
    For any $k\in \mathbb N$, if we restrict our states such that $s \in \mathcal S$ if it is reachable from $\rho$, then the distribution mismatch coefficient of the  is always finite, i.e. there exists a finite $D\in \mathbb R$ such that
    \begin{equation*}
        D= \max_{\boldsymbol{\pi}}\left\|\frac{d^{\boldsymbol{\pi}}_{\rho}}{d^{\boldsymbol{\pi}_k}_{\rho}}\right\|_\infty
    \end{equation*}
\end{proposition}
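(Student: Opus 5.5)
The plan is to show that, once the state space is restricted to states reachable from $\rho$, the denominator $d^{\boldsymbol{\pi}_k}_{\rho}$ is bounded away from zero on $\mathcal S$ while the numerator is at most one. Since $\mathcal S$ is finite, the maximum over states is then over finitely many finite numbers. Uniformity over $\boldsymbol{\pi}$ is automatic, because $d^{\boldsymbol{\pi}}_{\rho}(s)\le 1$ for every $\boldsymbol{\pi}$ and $s$. Hence
\[
\max_{\boldsymbol{\pi}}\left\|\frac{d^{\boldsymbol{\pi}}_{\rho}}{d^{\boldsymbol{\pi}_k}_{\rho}}\right\|_\infty \le \frac{1}{\min_{s\in\mathcal S} d^{\boldsymbol{\pi}_k}_{\rho}(s)},
\]
and it suffices to prove $d^{\boldsymbol{\pi}_k}_{\rho}(s)>0$ for every reachable $s$.

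First I show by induction on $k$ that $\boldsymbol{\pi}_k(\boldsymbol{a}|s)>0$ for every $s$ and $\boldsymbol{a}$, where $\boldsymbol{\pi}_k(\boldsymbol{a}|s)=\prod_m \pi^m_k(a^m|s)$ by independence of the actors.

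\begin{itemize}
\item \emph{Base case.} $\boldsymbol{\pi}_0$ is uniform by the input of Algorithm \ref{alg:main}, so it has full support.
\item \emph{States with $d^{\boldsymbol{\pi}_k}_{\rho}(s)>0$.} The update is the closed form (\ref{eq:updaterule}), which multiplies $\pi^m_k(\cdot|s)$ by an exponential of a bounded quantity. The quantity is bounded since $|\min(\Re,1)Q^{1:m}_{\boldsymbol{\pi}_k}|\le \max|r|/(1-\gamma)$ and $\beta_k>0$. So every action keeps strictly positive probability.
\item \emph{States with $d^{\boldsymbol{\pi}_k}_{\rho}(s)=0$.} The objective does not depend on these states, so I adopt the convention that $\pi^m_{k+1}(\cdot|s)=\pi^m_k(\cdot|s)$ there. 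This preserves full support. In any case such states become irrelevant under the restriction, as the next step shows.
\end{itemize}

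Next, fix a reachable $s$. By definition there are $s_0$ with $\rho(s_0)>0$, a horizon $t$, and a sequence of states and joint actions $s_0,\boldsymbol{a}_0,s_1,\dots,s_t=s$ such that $P(s_{j+1}|s_j,\boldsymbol{a}_j)>0$ for each $j$. Under $\boldsymbol{\pi}_k$, the probability of this path is
\[
\rho(s_0)\prod_{j<t}\boldsymbol{\pi}_k(\boldsymbol{a}_j|s_j)\,P(s_{j+1}|s_j,\boldsymbol{a}_j)>0.
\]
Therefore $\mathbb P(s_t=s)>0$, and so $d^{\boldsymbol{\pi}_k}_{\rho}(s)\ge(1-\gamma)\gamma^t\,\mathbb P(s_t=s)>0$. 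Since $\mathcal S$ is finite, $\min_{s} d^{\boldsymbol{\pi}_k}_{\rho}(s)>0$, which gives the claimed finite $D$.

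The main obstacle I expect is the full-support induction. The $\min(\Re,1)$ weighting does not interfere, since it only rescales a bounded quantity inside the exponential. The real care point is states with zero visitation, and the convention above handles it. Also note that $D$ may depend on $k$, which is exactly what the statement allows ("for any $k$").
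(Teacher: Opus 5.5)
Your proposal is correct and follows essentially the same route as the paper. Both argue by induction that the multiplicative update (\ref{eq:updaterule}), started from the uniform $\boldsymbol{\pi}_0$, keeps $\boldsymbol{\pi}_k$ fully supported, so every reachable state has positive visitation under $\boldsymbol{\pi}_k$ and the ratio is finite over the finite state space; yours just makes explicit the numerator bound $d^{\boldsymbol{\pi}}_{\rho}\le 1$, the positive-probability path to each reachable state, and why the exponent stays bounded.
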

\begin{proof}
    At the iteration $k=0$, the initial joint policy is initialized to be a uniform distribution. As a result, since the state space is finite, any states that are reachable through a finite number of steps have a non-zero probability of being visited by the random policy $\boldsymbol{\pi}_0$. So if a state is reachable under $\rho$, it is also reachable using $\boldsymbol{\pi}_0$. Consequently, the update rule in (\ref{eq:updaterule}) is applied at each reachable state, and $d_{\rho}^{\boldsymbol{\pi}_0} > 0, \forall s \in \mathcal S$ if we restrict the state space to be the reachable states from $\rho$. 

    For an arbitrary step $k\in \mathbb N$, by looking at the update rule (\ref{eq:updaterule}),
    we see that the joint policy $\boldsymbol{\pi}_{k+1}$ is still a stochastic policy with $\boldsymbol{\pi}_{k+1}(\boldsymbol{a}|s)>0$ for all actions $\boldsymbol{a}$ in each state $s$, given the inductive assumption on $\boldsymbol{\pi}_{k}(\boldsymbol{a}|s)>0$. Thus the same arguments for when $k=0$ apply.

    Finally, for any $k \in \mathbb N$, Since $d_{\rho}^{\boldsymbol{\pi}_0} > 0, \forall s \in \mathcal S$, then the distribution mismatch coefficient is finite.
\end{proof}
This proposition ensures that the policies at each iteration of the update procedure are sufficiently explorative. As a result, since $d_{\rho}^{\boldsymbol{\pi}_k} > 0$ component-wise, minimizing the objective in (\ref{eq:clipobjective_appdx}) is equivalent to the update rule (\ref{eq:updaterule}) applied to all the states $s\in \mathcal S$ independently. In the single-agent Trust Region method, an equivalent condition to the Proposition \ref{prop:finitemismatch} can be seen in the assumption of the finite concentration coefficient in \cite{shani2020adaptive}. Due to the connection to Natural Gradient, Trust Region methods also enjoy dimension-free convergence when no error (approximation and estimation) is present in the learning process \cite{shani2020adaptive, zhao2023local}. 

\begin{proposition} \label{prop:boundedl1}
    For any two consecutive policies $\pi^m_k$ and $\pi^m_{k+1}$, then
    \[\|\pi^m_k - \pi^m_{k+1}\|_1 \leq \bar \epsilon \beta_k^{-1}\]
\end{proposition}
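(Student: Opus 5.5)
The plan is to work state by state with the closed-form update (\ref{eq:updaterule}) and to combine a symmetric-KL identity with Pinsker's inequality. Fix a state $s$ with $d_\rho^{\boldsymbol{\pi}_k}(s)>0$; by Proposition \ref{prop:finitemismatch} this covers every state. I read $\|\pi^m_k-\pi^m_{k+1}\|_1$ as the $\ell_1$ distance of $\pi^m_k(\cdot|s)$ and $\pi^m_{k+1}(\cdot|s)$, holding uniformly in $s$. Write $p=\pi^m_{k+1}(\cdot|s)$, $q=\pi^m_k(\cdot|s)$, and
\[g(a^m)=\mathbb E_{\boldsymbol{a}^{1:m-1}\sim\boldsymbol{\pi}_k}\min(\Re^{1:m-1}_k,1)\,Q^{1:m}_{\boldsymbol{\pi}_k}(s,\boldsymbol{a}^{1:m-1},a^m),\]
so that $p\propto q\exp\{\beta_k^{-1}g\}$.

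\textbf{Step 1: replace $Q$ by the partial advantage.} Since $Q^{1:m}_{\boldsymbol{\pi}_k}=A^{1:m}_{\boldsymbol{\pi}_k}+V_{\boldsymbol{\pi}_k}(s)$, we have $g=\tilde g+c$. Here
\[\tilde g(a^m)=\mathbb E_{\boldsymbol{\pi}_k}\min(\Re^{1:m-1}_k,1)A^{1:m}_{\boldsymbol{\pi}_k}(s,\boldsymbol{a}^{1:m-1},a^m),\qquad c=V_{\boldsymbol{\pi}_k}(s)\,\mathbb E_{\boldsymbol{\pi}_k}\min(\Re^{1:m-1}_k,1).\]
The constant $c$ does not depend on $a^m$, so it cancels in the normalization, and we may use $\tilde g$ in place of $g$. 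Because $0\le\min(\Re,1)\le1$ and $|A^{1:m}_{\boldsymbol{\pi}_k}|\le\bar\epsilon$ (noted before Lemma \ref{lemma:advantage_dif}), we get $\|\tilde g\|_\infty\le\bar\epsilon$. This is where the clipping matters: without it the weight $\Re$ could be unbounded, and no such bound would hold.

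\textbf{Step 2: symmetric-KL identity.} From the update, $\log(p/q)=\beta_k^{-1}\tilde g-\log Z$ with $Z$ constant in $a^m$. Hence
\[D_{\text{KL}}(p\|q)+D_{\text{KL}}(q\|p)=\langle p-q,\log(p/q)\rangle=\beta_k^{-1}\langle p-q,\tilde g\rangle,\]
since the $\log Z$ term vanishes against $\langle p-q,\mathbf 1\rangle=0$. By Hölder's inequality, the right-hand side is at most $\beta_k^{-1}\bar\epsilon\|p-q\|_1$.

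\textbf{Step 3: Pinsker.} Pinsker's inequality gives $D_{\text{KL}}(p\|q)\ge\frac12\|p-q\|_1^2$, and the same for $D_{\text{KL}}(q\|p)$. So the left-hand side of Step 2 is at least $\|p-q\|_1^2$. Combining the two bounds gives $\|p-q\|_1^2\le\beta_k^{-1}\bar\epsilon\|p-q\|_1$. Dividing by $\|p-q\|_1$ when it is positive (the claim is trivial otherwise) yields $\|p-q\|_1\le\bar\epsilon\beta_k^{-1}$. Taking the maximum over $s$ finishes the proof.

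The only delicate point is Step 1: one must check that clipping keeps the effective payoff bounded by $\bar\epsilon$ and that the $V$-shift is really action-independent. Everything after that is a routine mirror-descent stability argument.
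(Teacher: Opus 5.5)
Your proposal is correct and follows the same argument as the paper: Pinsker's inequality in both directions, the identity $D_{\text{KL}}(\pi^m_{k+1}\|\pi^m_k)+D_{\text{KL}}(\pi^m_k\|\pi^m_{k+1})=\langle \log(\pi^m_{k+1}/\pi^m_k),\pi^m_{k+1}-\pi^m_k\rangle$ combined with the closed-form update, the replacement of $Q^{1:m}_{\boldsymbol{\pi}_k}$ by $A^{1:m}_{\boldsymbol{\pi}_k}$ through an action-independent shift, and Hölder's inequality with the $\bar\epsilon$ bound. The only difference is that you spell out two details the paper leaves implicit: the cancellation of the normalizer $\log Z$, and the fact that $0\le\min(\Re^{1:m-1},1)\le 1$.
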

\begin{proof}
\begin{align*}
    & \|\pi^m_{k+1} - \pi^m_k\|^2_1\\
    \leq & D_\text{KL}(\pi^m_{k+1}\|\pi^m_{k}) + D_\text{KL}(\pi^m_{k}\|\pi^m_{k+1}) \tag{By Pinsker's inequality}\\
    =& \left\langle
    \log \frac{\pi^m_{k+1}}{\pi^m_{k}}, \pi^m_{k+1} - \pi^m_k
    \right\rangle\\
    =& \left\langle 
    \beta_{k}^{-1} \mathbb E_{\boldsymbol{\pi}^{1:m-1}_{k}} \min(\Re^{1:m-1}, 1) Q^{1:m}_{\boldsymbol{\pi}_k}, \pi^m_{k+1} - \pi^m_k 
    \right\rangle\\
    =& \left\langle 
    \beta_{k}^{-1} \mathbb E_{\boldsymbol{\pi}^{1:m-1}_{k}} \min(\Re^{1:m-1}, 1) A^{1:m}_{\boldsymbol{\pi}_k}, \pi^m_{k+1} - \pi^m_k 
    \right\rangle\\
    \leq& \beta_k^{-1}\bar \epsilon \|\pi^m_{k+1} - \pi^m_k \|_1  \tag{By Hölder's inequality}\\
    \Longrightarrow & \|\pi^m_{k+1} - \pi^m_k \|_1 \leq \beta_k^{-1}\bar \epsilon
\end{align*}
\end{proof}

\begin{proof}[Proof of Lemma \ref{lemma:onestep}]
In this proof, we omit $(\cdot|s)$ when it is clear from the context, and we write $Q^{1:m-1}_{\boldsymbol{\pi}}$ and $A^{1:m-1}_{\boldsymbol{\pi}}$ as shorthand for $Q^{1:m-1}_{\boldsymbol{\pi}}(s, \boldsymbol{a}^{1:m-1}, \cdot)$ and $A^{1:m-1}_{\boldsymbol{\pi}}(s, \boldsymbol{a}^{1:m-1}, \cdot)$, respectively,
\begin{align}
    &D_{\text{KL}}\left(\pi^m(\cdot | s)\| \pi^m_{k}(\cdot|s)\right) - D_{\text{KL}}\left(\pi^m(\cdot | s)\| \pi^m_{k+1}(\cdot|s)\right) \notag \\
    =& \left\langle 
    \log \frac{\pi^m_{k+1}}{\pi^m_{k}}, \pi^m
    \right\rangle \notag \\
    =& \left\langle
    \log \frac{\pi^m_{k+1}}{\pi^m_{k}}, \pi^m - \pi^m_{k+1}
    \right\rangle + D_{\text{KL}}\left(\pi_{k+1}^m\| \pi^m_{k}\right) \notag \\
    =& 
    \left\langle 
    \beta_{k}^{-1}\mathbb E_{\boldsymbol{\pi}_k^{1:m-1}} \min(\Re^{1:m-1}, 1)
    Q^{1:m}_{\boldsymbol{\pi}_k}, \pi^m - \pi^m_{k+1}
    \right\rangle + D_{\text{KL}}\left(\pi_{k+1}^m\| \pi^m_{k}\right) \notag \\
    \geq &  \left\langle 
    \beta_{k}^{-1} \mathbb E_{\boldsymbol{\pi}_k^{1:m-1}} \min(\Re^{1:m-1}, 1) Q^{1:m}_{\boldsymbol{\pi}_k}, \pi^m - \pi^m_{k+1}
    \right\rangle \label{eq:needforconvergence} \notag \\
    =& \left\langle
    \beta_{k}^{-1} \mathbb E_{\boldsymbol{\pi}_k^{1:m-1}} \min(\Re^{1:m-1}, 1) A^{1:m}_{\boldsymbol{\pi}_k}, \pi^m - \pi^m_{k+1}
    \right\rangle \notag \\
    \geq& \left\langle
    \beta_{k}^{-1} \mathbb E_{\boldsymbol{\pi}_{k}^{1:m-1}} A^{1:m}_{\boldsymbol{\pi}_k}, \pi^m - \pi^m_{k+1}
    \right\rangle 
    - 2\beta_k^{-1} \bar \epsilon\sum_{i=1}^{m-1} \|\pi_{k}^i -  {\pi}_{k+1}^i\|_1 \\
    \geq& \left\langle
    \beta_{k}^{-1} \mathbb E_{\boldsymbol{\pi}_{k}^{1:m-1}} A^{1:m}_{\boldsymbol{\pi}_k}, \pi^m - \pi^m_{k}
    \right\rangle 
    - 2\beta_k^{-1} \bar \epsilon\sum_{i=1}^{m} \|\pi_{k}^i -  {\pi}_{k+1}^i\|_1\\
    =& \left\langle
    \beta_{k}^{-1} \mathbb E_{\boldsymbol{\pi}_{k}^{1:m-1}} Q^{1:m}_{\boldsymbol{\pi}_k}, \pi^m - \pi^m_{k}
    \right\rangle 
    - 2\beta_k^{-1} \bar \epsilon\sum_{i=1}^{m} \|\pi_{k}^i -  {\pi}_{k+1}^i\|_1\\
    \geq & \left\langle
    \beta_{k}^{-1} \mathbb E_{\boldsymbol{\pi}_{k}^{1:m-1}} Q^{1:m}_{\boldsymbol{\pi}_k}, \pi^m - \pi^m_{k}
    \right\rangle 
    \begin{multlined}[t]
        - 2(\beta_k^{-1} \bar \epsilon)^2 m
    \end{multlined} 
\end{align}
The second inequality is derived from Lemma \ref{lemma:advantage_dif}, and the last inequality is from Proposition \ref{prop:boundedl1}. 
\end{proof}

\begin{proof}[Proof of Theorem {\upshape\ref{theorem:convergence}}]
First, we assume that the update order of the agents is fixed. Then, with Lemma \ref{lemma:onestep}, for any policy $\pi^m\in \Pi_m$ and any state $s \in \mathcal S$    
\begin{align*}
    &\beta_k^{-1}
    \left\langle \mathbb E_{\boldsymbol{\pi}_{k}^{1:m-1}} Q^{1:m}_{\boldsymbol{\pi}_k}, \pi^m - \pi^m_{k}
    \right\rangle \\
    \leq & 
    \bigg[ D_\text{KL}\left( \pi^m \| \pi^m_{k} \right) - D_\text{KL}\left( \pi^m \| \pi^m_{k+1} \right) 
    \begin{multlined}[t]
    + 2(\beta^{-1}\bar \epsilon)^2m
    \bigg]
    \end{multlined}
\end{align*}
Summing over $k=0, \dots, K-1$
\begin{flalign*}
    &\sum_k^{K-1}\beta_k^{-1} \left\langle \mathbb E_{\boldsymbol{\pi}_{k}^{1:m-1}} Q^{1:m}_{\boldsymbol{\pi}_k}, \pi^m - \pi^m_{k}
    \right\rangle\\
    \leq & D_\text{KL}\left( \pi^m \| \pi^m_{0} \right)
    - D_\text{KL}\left( \pi^m \| \pi^m_{K} \right)
    \begin{multlined}[t]
    + \sum_{k=0}^{K-1}
     2m(\beta^{-1}\bar {\epsilon})^2
    \end{multlined}\\
    \leq & 
    \log A + 2\sum_{k=0}^{K-1}\frac{N\bar {\epsilon}^2}{\beta_k^2}
\end{flalign*}
By setting $\beta_k = \beta \sqrt{K}$, then we have
\begin{align}
    \frac{1}{K}\sum_{k=0}^{K-1}\left\langle \mathbb E_{\boldsymbol{\pi}_{k}^{1:m-1}} Q^{1:m}_{\boldsymbol{\pi}_{k}}, \pi^m - \pi^m_{k}
    \right\rangle \leq \frac{\beta\log A + 2\frac{N\bar {\epsilon}^2 }{\beta}}{\sqrt{K}}
    \label{eq:rawconvergence}
\end{align}
So with the optimal $\beta=\sqrt{\frac{2N\bar {\epsilon}^2}{\log A}}$, 
\begin{align}
    \frac{1}{K}\sum_{k=0}^{K-1} \left\langle \mathbb E_{\boldsymbol{\pi}_{k}^{1:m-1}} Q^{1:m}_{\boldsymbol{\pi}_k}, \pi^m - \pi^m_{k}
    \right\rangle \leq 
    \frac{2\bar {\epsilon}\sqrt{2N\log A }}{\sqrt{K}}
    \label{eq:rawconvergence_refined}
\end{align}
Also note that this is true for all $\pi^m\in\Pi^m, \forall m$, and $\forall s \in \mathcal S$, so
\begin{align*}
    \frac{1}{K}\sum_{k=0}^{K-1} \mathbb E_{d_{\rho}^{(\boldsymbol{\pi}_k^{-m}, \pi^m)}}\left\langle \mathbb E_{\boldsymbol{\pi}_{k}^{1:m-1}} Q^{1:m}_{\boldsymbol{\pi}_k}, \pi^m - \pi^m_{k}
    \right\rangle \leq 
    \frac{2\bar {\epsilon}\sqrt{2N\log A }}{\sqrt{K}}
\end{align*}
\[\Longrightarrow \frac{1}{K}\sum_{k=0}^{K-1} \left( J(\boldsymbol{\pi}_k^{-m}, \pi^m) -J(\boldsymbol{\pi}_k)\right) 
\leq
\frac{2\bar {\epsilon}\sqrt{2N\log A }}{(1-\gamma)\sqrt{K}} \tag{By Difference lemma \ref{lemma:difference}}
\]
And since $\boldsymbol{\bar \pi}$ is uniformly sampled from $\boldsymbol{\pi}_k$, with $k = 0, 1, \dots, K-1$, then we have
\begin{equation}
    J(\boldsymbol{\bar \pi}^{-m}, \pi^m) -J(\boldsymbol{\bar \pi})\leq 
O\left(
\frac{\bar {\epsilon}\sqrt{N\log A }}{(1-\gamma)\sqrt{K}}
\right) \label{eq:convergence_rate}
\end{equation}
Note that this is true for any $\pi^m \in \Pi^m$ and any agent $m$, then $\boldsymbol{\bar \pi}$ is an $\varepsilon$-Nash equilibrium by definition. Furthermore, this result is still true when the order of agents is permuted at each update cycle, as long as we match agents correctly between consecutive iterations. 
\end{proof}

\subsection{Omitted results of Policy Improvement Bound} \label{proof:policyimprovement}

\begin{lemma}\label{lem:trpotakenadvantagebound}
    For any joint policies $\boldsymbol{\widetilde \pi}$ and $\boldsymbol{\pi}$, then the following inequality holds
    \begin{flalign*}
        \left|\mathbb E_{s_t \sim \boldsymbol{\widetilde\pi}, \boldsymbol{a}\sim \boldsymbol{\widetilde\pi}}A_{\boldsymbol{\pi}}(s_t, \boldsymbol{a}) -
        \mathbb E_{s_t\sim\boldsymbol{\pi}, \boldsymbol{a}\sim \boldsymbol{\widetilde\pi}}A_{\boldsymbol{\pi}}(s_t, \boldsymbol{a}) \right|
        \begin{multlined}[t]
            \leq 4\epsilon \alpha \left(1-(1-\alpha)^t\right)
        \end{multlined}
    \end{flalign*}
    where we denote $\alpha=\max_s \|\boldsymbol{\pi} - \boldsymbol{\widetilde\pi}\|_1$, and $\epsilon=\max_{s, \boldsymbol{a}}A_{\boldsymbol{\pi}}$.
\end{lemma}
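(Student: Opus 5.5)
This is the coupling argument of \cite{schulman2015trust} (Lemma 3 there), transferred to joint policies. The plan is to treat the joint policies $\boldsymbol{\pi}$ and $\boldsymbol{\widetilde\pi}$ as single-agent policies over the joint action space $\mathcal A$, and then to compare the state distributions at time $t$ of the two trajectories through a maximal coupling.

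\textbf{Coupling.} Since $\alpha = \max_s \|\boldsymbol{\pi}(\cdot|s) - \boldsymbol{\widetilde\pi}(\cdot|s)\|_1$, for each state $s$ there is a coupling of $\boldsymbol{a}\sim\boldsymbol{\pi}(\cdot|s)$ and $\boldsymbol{\widetilde a}\sim\boldsymbol{\widetilde\pi}(\cdot|s)$ with $\mathbb P(\boldsymbol{a}\neq\boldsymbol{\widetilde a}\mid s)\le \alpha$. The coupling constant is $\alpha$, or $\alpha/2$ if the total variation distance is defined as half the $L_1$ norm; using $\alpha$ only loosens the bound and gives the stated constant. The two trajectories start from the same $s_0\sim\rho$ and share transition randomness for as long as their actions agree. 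Let $n_t$ be the number of disagreements before time $t$. Then $\mathbb P(n_t=0)\ge(1-\alpha)^t$.

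\textbf{Splitting the expectation.} Write $\bar A(s)=\mathbb E_{\boldsymbol{a}\sim\boldsymbol{\widetilde\pi}}A_{\boldsymbol{\pi}}(s,\boldsymbol{a})$. Split each of the two expectations in the statement on the event $\{n_t=0\}$. On this event $s_t$ is identical under both policies, so those parts cancel. What remains is
\[
\mathbb P(n_t>0)\left(\mathbb E[\bar A(\widetilde s_t)\mid n_t>0]-\mathbb E[\bar A(s_t)\mid n_t>0]\right),
\]
whose absolute value is at most $(1-(1-\alpha)^t)\cdot 2\max_s|\bar A(s)|$.

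\textbf{Bounding $\bar A$.} Because $\mathbb E_{\boldsymbol{a}\sim\boldsymbol{\pi}}A_{\boldsymbol{\pi}}(s,\boldsymbol{a})=0$, we have $\bar A(s)=\langle A_{\boldsymbol{\pi}}(s,\cdot),\boldsymbol{\widetilde\pi}-\boldsymbol{\pi}\rangle$. Hölder's inequality gives $|\bar A(s)|\le\epsilon\alpha$, and even the cruder coupling bound gives $2\epsilon\alpha$. Combining this with the previous step yields $4\epsilon\alpha(1-(1-\alpha)^t)$, which matches the claim, so the slack in the constant absorbs any convention about total variation versus $L_1$.

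\textbf{Main obstacle.} The delicate point is justifying the coupled construction for joint actions, and in particular that $\mathbb P(n_t=0)\ge(1-\alpha)^t$. This needs the per-state coupling to hold uniformly, so that disagreement at each step is conditionally at most $\alpha$ given the shared past. Joint policies are just distributions on the finite set $\mathcal A$, so the maximal coupling exists pointwise in $s$, and the single-agent argument applies verbatim. Independence of the actors is not needed. Proposition \ref{prop:triangle} could afterwards bound $\alpha$ by $\sum_i\alpha^i$ when this lemma is used to prove Lemma \ref{lem:improvementboudclip}.
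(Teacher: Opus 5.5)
Your route is the paper's: its proof only points to Lemma~3 of \cite{schulman2015trust}, which is the maximal-coupling argument you reproduce, and for $\alpha\le 1$ your argument goes through. The gap is your claim that the $L_1$/total-variation convention is harmless (``using $\alpha$ only loosens the bound''), i.e.\ the step $\mathbb P(n_t=0)\ge(1-\alpha)^t$. Since $\alpha$ is defined as the $L_1$ distance, the maximal coupling disagrees with probability at most $\alpha/2$, so you actually get $\mathbb P(n_t=0)\ge(1-\alpha/2)^t$. Replacing $1-\alpha/2$ by $1-\alpha$ is legitimate only while $1-\alpha\ge 0$. For $\alpha\in(1,2]$, the per-step bound $\mathbb P(\text{agree}\mid\text{past})\ge 1-\alpha$ has a negative right-hand side and cannot be multiplied across steps. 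For even $t$, $(1-\alpha)^t=(\alpha-1)^t$ is positive and can exceed $(1-\alpha/2)^t$; at $\alpha=2$ it is $1$ versus $0$. In that regime the displayed inequality is false. Take one agent with actions $\{0,1\}$ and $\rho$ concentrated on a state $x$. From $x$, action $0$ leads to an absorbing state $y$ and action $1$ to an absorbing state $z$. The only nonzero reward is $r(z,1)=1$. Let $\boldsymbol{\pi}$ play action $0$ at $x$ and $z$, let $\boldsymbol{\widetilde\pi}$ play action $1$ at $x$ and $z$, and let them agree at $y$. Then $A_{\boldsymbol{\pi}}$ vanishes except $A_{\boldsymbol{\pi}}(z,1)=1$, so $\epsilon=1$, $\alpha=2$, $\bar A(z)=1$ and $\bar A(y)=0$. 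At $t=2$ the left-hand side equals $1$, while the right-hand side is $4\cdot 2\cdot\bigl(1-(-1)^2\bigr)=0$.

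What your coupling argument really proves is $2\epsilon\alpha\bigl(1-(1-\alpha/2)^t\bigr)$, which is Schulman's bound with $\alpha_{\mathrm{TV}}=\alpha/2$. When $\alpha\le 1$ we have $1-\alpha/2\ge 1-\alpha\ge 0$, so this is dominated by the stated $4\epsilon\alpha\bigl(1-(1-\alpha)^t\bigr)$. You should therefore either add the hypothesis $\alpha\le 1$ or put $\alpha/2$ in the exponent, rather than assert that the convention does not matter. The paper's bare citation has the same mismatch, because Schulman's $\alpha$ is the maximal total-variation distance, not the $L_1$ distance. This is harmless downstream in Lemma~\ref{lem:improvementboudclip}. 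For $\alpha>1$ the per-step difference is $0$ at $t=0$ and at most $2\epsilon$ afterwards, which sums to $2\gamma\epsilon/(1-\gamma)\le 4\gamma\epsilon\alpha^2/(1-\gamma)^2$.
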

\begin{proof}
    Refer to the proof of Lemma 3 in  \cite{schulman2015trust}, or Lemma 3 in \cite{wang2023order}.
\end{proof}

\begin{lemma}[Multi-Agent Advantage Decomposition]\label{lemma:multagentdecomposition}
    In cooperative Markov games, for any joint policy $\boldsymbol{\pi}$, state $s$ and joint action $\boldsymbol{a}$, the following decomposition holds
    \[A_{\boldsymbol{\pi}}(s, \boldsymbol{a})=\sum_{m=1}^N A_{\boldsymbol{\pi}}^m(s, \boldsymbol{a}^{1:m-1}, a^m).\]
\end{lemma}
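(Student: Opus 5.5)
The plan is a telescoping sum over the nested partial Q functions of Definition~\ref{def:partialQA}. Fix the state $s$ and joint action $\boldsymbol{a}$. By Definition~\ref{def:partialQA}, $Q_{\boldsymbol{\pi}}^{1:m}(s,\boldsymbol{a}^{1:m})$ averages $Q_{\boldsymbol{\pi}}$ over the actions of agents $m+1,\dots,N$ drawn from $\boldsymbol{\pi}^{-(1:m)}$. Two boundary cases anchor the sum:
\begin{itemize}
\item for $m=N$ nothing is averaged, so $Q^{1:N}_{\boldsymbol{\pi}}(s,\boldsymbol{a})=Q_{\boldsymbol{\pi}}(s,\boldsymbol{a})$;
\item for $m=0$ every action is averaged, so $Q^{1:0}_{\boldsymbol{\pi}}(s)=\mathbb E_{\boldsymbol{a}\sim\boldsymbol{\pi}}Q_{\boldsymbol{\pi}}(s,\boldsymbol{a})=V_{\boldsymbol{\pi}}(s)$.
\end{itemize}
I would state these two facts first, adopting the convention that the empty prefix $1{:}0$ gives the state value. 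This convention is already implicit in the paper's definition $A^{1:m}_{\boldsymbol{\pi}}=Q^{1:m}_{\boldsymbol{\pi}}-V_{\boldsymbol{\pi}}$.

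Next, write each summand using its definition:
\[
A_{\boldsymbol{\pi}}^m(s,\boldsymbol{a}^{1:m-1},a^m)=Q_{\boldsymbol{\pi}}^{1:m}(s,\boldsymbol{a}^{1:m})-Q_{\boldsymbol{\pi}}^{1:m-1}(s,\boldsymbol{a}^{1:m-1}).
\]
Summing over $m=1,\dots,N$, all the intermediate terms cancel. What remains is
\[
Q^{1:N}_{\boldsymbol{\pi}}(s,\boldsymbol{a})-Q^{1:0}_{\boldsymbol{\pi}}(s)=Q_{\boldsymbol{\pi}}(s,\boldsymbol{a})-V_{\boldsymbol{\pi}}(s).
\]
By the paper's definition of the fully-joint advantage, this is exactly $A_{\boldsymbol{\pi}}(s,\boldsymbol{a})$.

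Independence of the actors is not needed for the argument. However, the marginalizations must be consistent: the expectation over $\boldsymbol{a}^{-(1:m)}$ in $Q^{1:m}$ must use the same joint $\boldsymbol{\pi}$ throughout. This holds because every term in the sum is defined with respect to the single policy $\boldsymbol{\pi}$.

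The only point needing care is the $m=1$ term. Definition~\ref{def:partialQA} writes $Q^{1:m-1}$ for $m\ge 1$, so I must justify that $Q^{1:0}_{\boldsymbol{\pi}}$ equals $V_{\boldsymbol{\pi}}(s)$. This follows from averaging over all agents' actions, and with that settled the telescoping argument is complete. The decomposition then serves in the improvement bound of Lemma~\ref{lem:improvementboudclip}, where $\mathbb E A_{\boldsymbol{\pi}}$ under $\boldsymbol{\widetilde\pi}$ is split agent by agent.
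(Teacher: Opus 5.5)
Your telescoping argument is correct and is the same proof the paper relies on: the paper defers to Lemma~1 of \cite{kuba2021trust}, which writes $A_{\boldsymbol{\pi}}(s,\boldsymbol{a})=Q_{\boldsymbol{\pi}}(s,\boldsymbol{a})-V_{\boldsymbol{\pi}}(s)$ as $\sum_{m=1}^N\bigl[Q^{1:m}_{\boldsymbol{\pi}}-Q^{1:m-1}_{\boldsymbol{\pi}}\bigr]$ using the boundary identities $Q^{1:N}_{\boldsymbol{\pi}}=Q_{\boldsymbol{\pi}}$ and $Q^{1:0}_{\boldsymbol{\pi}}=V_{\boldsymbol{\pi}}$. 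The intermediate terms cancel purely formally, so those two endpoint identities are the only things that actually need checking.
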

\begin{proof}
    Refer to Lemma 1 in \cite{kuba2021trust}.
\end{proof}

\begin{proof} [Proof of Lemma {\upshape\ref{lem:improvementboudclip}}]
Here we write $A_{\boldsymbol{\pi}}$ as shorthand for $A_{\boldsymbol{\pi}}(s, \boldsymbol{a})$
    \begin{flalign*}
        &\bigg|J(\boldsymbol{\widetilde \pi}) - 
        J(\boldsymbol{\pi}) - \frac{1}{1-\gamma}\sum_{m=1}^N L^{\mathrm{1:m, clip}}_{\boldsymbol{\pi}}(\boldsymbol{\widetilde{\pi}}^{1:m-1}, \widetilde{\pi}^m)\bigg|\\
        =&\begin{multlined}[t]
            \frac{1}{1-\gamma} \bigg| 
        \mathbb E_{s\sim d_{\rho}^{\boldsymbol{\widetilde\pi}}}
        \mathbb E_{\boldsymbol{a}\sim \boldsymbol{\widetilde\pi}}A_
        {\boldsymbol{\pi}}
        - \sum_{m=1}^N
        \mathbb E_{s\sim d_{\rho}^{\boldsymbol{\pi}}, \boldsymbol{a}\sim\boldsymbol{\pi}}
        \bigg(
        \frac{\widetilde{\pi}^m}{\pi^m}-1
        \bigg)
        \min(\Re^{1:m-1}, 1)A_{\boldsymbol{\pi}}\bigg|
        \end{multlined}\\
        \leq &\begin{multlined}[t]
            \frac{1}{1-\gamma} \bigg| 
        \mathbb E_{s\sim d_{\rho}^{\boldsymbol{\widetilde\pi}}}
        \mathbb E_{\boldsymbol{a}\sim \boldsymbol{\widetilde\pi}}A_
        {\boldsymbol{\pi}}(s, \boldsymbol{a}) 
        - \sum_{m=1}^N
        \mathbb E_{s\sim d_{\rho}^{\boldsymbol{\pi}}, \boldsymbol{a}\sim\boldsymbol{\pi}}
        \bigg(
        \frac{\widetilde{\pi}^m}{\pi^m}-1
        \bigg)
        \Re^{1:m-1}A_{\boldsymbol{\pi}}\bigg| \\
        + \frac{1}{1-\gamma}\sum_{m=1}^N\mathbb E_{s\sim d_{\rho}^{\boldsymbol{\pi}}, a^m\sim\pi^m}\epsilon
        \bigg|
        \frac{\widetilde{\pi}^m}{\pi^m}-1
        \bigg|
        \sum_{i=1}^{m-1} \alpha^i \tag{By \ref{eq:usefulboundimprv}}
        \end{multlined}\\
        = &\frac{1}{1-\gamma} \bigg| 
        \mathbb E_{s\sim d_{\rho}^{\boldsymbol{\widetilde\pi}}}
        \mathbb E_{\boldsymbol{a}\sim \boldsymbol{\widetilde\pi}}
        A_
        {\boldsymbol{\pi}} - 
        \begin{multlined}[t]
        \mathbb E_{s\sim d_{\rho}^{\boldsymbol{\pi}}}
        \sum_{m=1}^N
        \mathbb E_{\boldsymbol{a}^{1:m}\sim \boldsymbol{\widetilde\pi}}
            A^m_
        {\boldsymbol{\pi}}
        \bigg|
        + \frac{\epsilon}{1-\gamma}\sum_{m=1}^N \alpha^m \sum_{i=1}^{m-1} \alpha^i 
        \end{multlined} \tag{By \ref{eq:advest}} \\
        = &\bigg| 
        \frac{1}{1-\gamma}\bigg(\mathbb E_{s\sim d_{\rho}^{\boldsymbol{\widetilde\pi}}}
        \mathbb E_{\boldsymbol{a}\sim \boldsymbol{\widetilde\pi}}
        A_
        {\boldsymbol{\pi}} - 
        \begin{multlined}[t]
        \mathbb E_{s\sim d_{\rho}^{\boldsymbol{\pi}}}
        \mathbb E_{\boldsymbol{a}\sim \boldsymbol{\widetilde\pi}}
            A_
        {\boldsymbol{\pi}} \bigg)
        \bigg| 
        + \frac{\epsilon}{1-\gamma}\sum_{m=1}^N \alpha^m \sum_{i=1}^{m-1} \alpha^i 
        \end{multlined} \tag{By Lemma \ref{lemma:multagentdecomposition}}\\
        \leq & {4\epsilon} \bigg| \mathbb E  
        \sum_{t=0}^{\infty} \gamma^t \alpha \left(1-(1-\alpha)^t\right)
        \bigg| + \frac{\epsilon}{1-\gamma}\sum_{m=1}^N \alpha^m \sum_{i=1}^{m-1} \alpha^i \tag{By Lemma \ref{lem:trpotakenadvantagebound}} \\
        \leq & \frac{4\gamma\epsilon}{(1-\gamma)^2}\alpha^2 + \frac{\epsilon}{1-\gamma}\sum_{m=1}^N \alpha^m \sum_{i=1}^{m-1} \alpha^i \\
        \leq & \frac{4\gamma\epsilon}{(1-\gamma)^2}\bigg(\sum_{m=1}^N\alpha^m\bigg)^2
        + \frac{\epsilon}{1-\gamma}\sum_{m=1}^N \alpha^m \sum_{i=1}^{m-1} \alpha^i 
    \end{flalign*}
\end{proof}

\subsection{Omitted Proof of Variance Explosion}\label{sec:varianceproof}

\begin{proof}[Proof of Proposition \ref{prop:exponentialexample}]
    As there is only one state, we omit the state in the notations for brevity. It is easy to see that, initially, the joint value function $V_{\boldsymbol{\pi}} = 0$ and the advantage $A_{\boldsymbol{\pi}}(\boldsymbol{a}) \in \{-1, 1\}, \forall \boldsymbol{a}$. Let $E$ be the event that at least half of the agents take actions 1 and $\overline{E}$ its complement, then since all agents act randomly uniform, $\mathbb P(E) = \mathbb P(\overline{E}) = 0.5$. Furthermore, we have $A_{\boldsymbol{\pi}}(\boldsymbol{a}) | E = 1$ and $A_{\boldsymbol{\pi}}(\boldsymbol{a}) | \overline{E} = -1$.

    First, we claim that if $A_{\boldsymbol{\pi}}(\boldsymbol{a})=1$, then $\frac{\widetilde{\pi}^{m}(a^{m})}{\pi^{m}(a^{m})} \geq 1.5 , \forall m \leq 2N$. The proof of the proposition then follows easily, as
    \begin{align*}
        \mathbb E_{\boldsymbol{a}^{{1:m-1}} \sim \boldsymbol{\widetilde{\pi}}} 
        \Re^{{1:m-1}}_{\boldsymbol{\widetilde{\pi}}, \boldsymbol{\pi}} &\geq 
        \mathbb P(E)\mathbb E_{\boldsymbol{a}^{{1:m-1}} \sim \boldsymbol{\widetilde{\pi}}} 
        \big[\Re^{{1:m-1}}_{\boldsymbol{\widetilde{\pi}}, \boldsymbol{\pi}}(\boldsymbol{a}) | E\big] \\
        &= \mathbb P(E)\mathbb E_{\boldsymbol{a}^{{1:m-1}} \sim \boldsymbol{\widetilde{\pi}}} 
        \bigg[\prod_{k=1}^{m-1} \frac{\widetilde{\pi}^{k}(a^{k})}{\pi^{k}(a^{k})}\bigg | E\bigg]
        \geq \frac{1.5^{m-1}}{2}.
    \end{align*}

    We will prove the above claim by induction. 

    At the first agent update iteration, i.e. $m=1$, we have that $A_{\boldsymbol{\pi}}(\boldsymbol{a}) = 1$ from the assumption of the claim, and $\Re^{{1:m-1}}_{\boldsymbol{\widetilde{\pi}}, \boldsymbol{\pi}}(\boldsymbol{a})=1$ by definition. 
    Differentiate the objective w.r.t. the current policy $\hat\pi^{m}(a^m)$ and set it to 0,
    \begin{align*}
        0 &= \nabla_{\hat\pi^{m}} \left[
        \hat L^{{1:m}}_{\boldsymbol{\pi}} (\boldsymbol{\widetilde{\pi}}^{{1:m-1}}, \hat{\pi}^{m} )- 
        C
        \text{D}_{\text{KL}}^{\max}(\pi^{m}, \hat{\pi}^{m}) \right] \\
        &= \begin{multlined}[t]
            \frac{\nabla_{\hat{\pi}^{m}}\hat{\pi}^{m}(a^{m})}{\pi^{m}(a^{m})}
    \Re^{{1:0}}_{\boldsymbol{\widetilde{\pi}}, \boldsymbol{\pi}}(\boldsymbol{a}) A_{\boldsymbol{\pi}}(\boldsymbol{a}) 
    + C\sum_{a\in \{0, 1\}} \pi^{m}(a) 
    \nabla_{\hat{\pi}^{m}}\log\frac{\hat\pi^{m}(a)}{\pi^{m}(a)}
        \end{multlined}\\
    &= \begin{multlined}[t]
            \frac{1}{\pi^{m}(a^{m})}
    \Re^{{1:0}}_{\boldsymbol{\widetilde{\pi}}, \boldsymbol{\pi}}( \boldsymbol{a}) A_{\boldsymbol{\pi}}(\boldsymbol{a}) 
    + C \left[
    \frac{\pi^{m}(a^{m})}{\hat\pi^{m}(a^{m})}
    -\frac{1-\pi^{m}(a^{m})}{1-\hat\pi^{m}(a^{m})}
    \right]
        \end{multlined}\\
    \Rightarrow 0&=
    \begin{multlined}[t]
        - \hat\pi^{m}(a^{m})^2 
    \Re^{{1:0}}_{\boldsymbol{\widetilde{\pi}}, \boldsymbol{\pi}}( \boldsymbol{a})
    A_{\boldsymbol{\pi}}(\boldsymbol{a}) 
    + \hat\pi^{m}(a^{m}) [
    \Re^{{1:0}}_{\boldsymbol{\widetilde{\pi}}, \boldsymbol{\pi}}( \boldsymbol{a})
    A_{\boldsymbol{\pi}}(\boldsymbol{a}) - C\pi^{m}(a^{m})] + C\pi^{m}(a^{m})^2
    \end{multlined}
    \end{align*}
which is a quadratic equation of $\hat\pi^{m}(a^{m})$, plugging $A_{\boldsymbol{\pi}}(\boldsymbol{a})=1, \Re^{{1:0}}_{\boldsymbol{\widetilde{\pi}}, \boldsymbol{\pi}}( \boldsymbol{a})=1, \pi^{1}=1/2$ and $C=3/2$, we yield $\widetilde{\pi}^{1}(a^{1})=3/4$. As a result, $\frac{\widetilde{\pi}^{1}(a^{1})}{\pi^{1}(a^{1})}=3/2$.

At some iteration $m> 1$, suppose that $\widetilde{\pi}^{k}(a^{k}) \geq 3/4, \forall k < m$ (by previous induction steps), we will prove that $\widetilde{\pi}^{m}(a^{m}) \geq 3/4$, which leads to $\frac{\widetilde{\pi}^{m}(a^{m})}{\pi^{m}(a^{m})} \geq 3/2$.

For any $0 < p < 1$, define 
\begin{flalign*}
    \begin{multlined}[t]
        \mathcal L^m(p) = \frac{p}{\pi^m(a^m)}\Re^{{1:m-1}}_{\boldsymbol{\widetilde{\pi}}, \boldsymbol{\pi}}( \boldsymbol{a})A_{\boldsymbol{\pi}}(\boldsymbol{a}) - C \bigg[\pi^m(a^m)\log\frac{\pi^m(a^m)}{p} + \big(1-\pi^m(a^m)\big) \\
        \log\frac{1-\pi^m(a^m)}{1-p}\bigg].
    \end{multlined}
\end{flalign*}
That is, we define $p$ as the parameter of the distribution to be optimized for the sequential update. Then using the fact that $\pi^m(a^m)=\pi^{m-1}(a^{m-1})=1/2$, we have the following identity
\begin{equation}
    \mathcal L^{m}(p) - \mathcal L^{m-1}(p) = p \underbrace{2A_{\boldsymbol{\pi}}(\boldsymbol{a})
    \big( \Re^{{1:m-1}}_{\boldsymbol{\widetilde{\pi}}, \boldsymbol{\pi}}( \boldsymbol{a}) - \Re^{{1:m-2}}_{\boldsymbol{\widetilde{\pi}}, \boldsymbol{\pi}}( \boldsymbol{a})}_{(*)}
    \big), \notag
\end{equation}
which is a linear function of $p$. Furthermore, since $\Re^{{1:m-1}}_{\boldsymbol{\widetilde{\pi}}, \boldsymbol{\pi}}( \boldsymbol{a}) = \frac{\widetilde{\pi}^{m-1}}{{\pi}^{m-1}}\Re^{{1:m-2}}_{\boldsymbol{\widetilde{\pi}}, \boldsymbol{\pi}}( \boldsymbol{a})\geq 3/2 \Re^{{1:m-2}}_{\boldsymbol{\widetilde{\pi}}, \boldsymbol{\pi}}( \boldsymbol{a})$ by induction steps, and combine with $A_{\boldsymbol{\pi}}(\boldsymbol{a})=1$, then we have $(*)>0$. 

Define $p^{m-1}_{*}=\argmax_p \mathcal L^{m-1}(p)$ the optimal solution of the sequential update in the previous step (we can see that $p^{m-1}_{*}$ is well-defined), or in other words, $p^{m-1}_{*}=\widetilde{\pi}^{m-1}(a^{m-1})$. Now consider an arbitrary $p'$ such that $0 < p' \leq p^{m-1}_{*}$, then
\begin{flalign*}
    \mathcal L^{m}(p^{m-1}_{*}) - \mathcal L^{m}(p') &= \underbrace{\mathcal L^{m-1}(p^{m-1}_{*}) - \mathcal L^{m-1}(p')}_{\geq 0} + \underbrace{(*) [p^{m-1}_{*} - p']}_{\geq 0}
    \geq 0,
\end{flalign*}
here the first inequality is due to the optimality of $p^{m-1}_{*}$, and the second inequality is from the definition of $p'$. As a result, if we define $\widetilde{\pi}^m(a^m) = \argmax_{p}\mathcal L^{m}(p)$, then it must be true that $\widetilde{\pi}^m({a}^m) \geq p^{m-1}_{*}=\widetilde{\pi}^{m-1}(a^{m-1})$. Repeat this argument for the induction steps, we see that $\widetilde{\pi}^m({a}^m)\geq \widetilde{\pi}^1({a}^1)=3/4$.
\end{proof}

\section{Experimental Details}
\label{appendix:experimentdetails}

Table \ref{table:commonhyper} presents all the common hyperparameters used in our experiments, which are adopted from \cite{zhong2023heterogeneous}. Different hyperparameters for each evaluated benchmark are listed in the subsequent tables \ref{tab:cliphappomamujocohyper}, \ref{tab:cliphatrpomamujocohyper}, \ref{tab:cliphapposmachyper}, \ref{tab:cliphatrposmachyper}, and \ref{tab:mpehyper}.

\begin{table}[htbp]
\caption{Common hyperparameter settings across all environments.}
\label{table:commonhyper} 
\centering
\catcode`,=\active
\def,{\char`,\allowbreak}
\renewcommand\arraystretch{1.2}
\begin{tabular}{p{4cm}<{\raggedright} p{1.5cm} | p{4cm}<{\raggedright} p{1.5cm} }
\toprule
   \textbf{Hyperparameters}             & \textbf{Values}    & \textbf{Hyperparameters}             & \textbf{Values}      \\ 
  \midrule
    activation & ReLU & use proper time limits & True \\
    use feature normalization & True  & initialization method & orthogonal                     \\
   use naive recurrent policy & False &
    gain               & 0.01    \\
    num GRU layers & 1 & data chunk length & 10 \\
    optim eps & 1e-5 & weight decay & 0 \\
    std x coef & 1 & std y coef & 0.5 \\
    value loss coef & 1 & use max grad norm & True \\
    max grad norm & 10.0 & use GAE & True \\
    GAE lambda & 0.95 & use huber loss & True \\
    use policy active masks & True & huber delta  & 10.0\\
    ls step & 10 & accept ratio & 0.5 \\
    n training rollouts & 20 \\
  \botrule
\end{tabular} 
\end{table}

\begin{table}[]
\caption{Different hyperparameters used for clip-HAPPO in the MAMuJoCo domain.}
    \label{tab:cliphappomamujocohyper}
    \centering
    \begin{tabular}{ p{3.5cm}|p{1cm} p{1cm} p{1.cm} p{1cm} p{1.5cm} p{1cm} } 
 \toprule
 scenarios & linear lr decay & actor/ critic lr & ppo/ critic epoch & clip param & actor/ critic mini batch & entropy coef \\
 \hline
 Ant 4x2 & False & 5e-4 & 5 & 0.1 & 1 & 0 \\ 
 Ant 8x1  & False & 1e-3 & 5 & 0.1 & 1 & 0 \\ 
 HalfCheetah 2x3 & False & 5e-4 & 15 & 0.05 & 1 & 0.01 \\
 HalfCheetah 6x1 & False & 5e-4 & 15 & 0.05 & 1 & 0.01 \\
 Humanoid 17x1 & True & 5e-4 & 5 & 0.1 & 1 & 0 \\
 Walker2d 3x2 & True & 1e-3 & 5 & 0.05 & 2 & 0 \\
 Walker2d 6x1 & False & 1e-3 & 5 & 0.05 & 2 & 0 \\
 ManyagentSwimmer 10x2 & False & 1e-4 & 5 & 0.15 & 1 & 1e-6 \\
 ManyagentSwimmer 20x1 & False & 1e-4 & 5 & 0.1 & 1 & 1e-5 \\
 \botrule
\end{tabular}
\end{table}

\begin{table}[]
\caption{Different hyperparameters used for clip-HATRPO in the MAMuJoCo domain.}
    \label{tab:cliphatrpomamujocohyper}
    \centering
    \begin{tabular}{ p{3.5cm}|p{1cm} p{1cm} p{1cm} p{1cm} p{1.5cm} p{1cm}} 
 \toprule
 scenarios & linear lr decay & critic lr & critic epoch & clip param & critic mini batch & kl threshold \\
 \hline
 Ant 4x2 & False & 5e-4 & 5 & 0.2 & 1 & 5e-3 \\ 
 Ant 8x1  & False & 5e-4 & 5 & 0.2 & 1 & 5e-3 \\ 
 HalfCheetah 2x3 & False & 5e-4 & 5 & 0.2 & 1 & 5e-3 \\
 HalfCheetah 6x1 & False & 5e-4 & 5 & 0.2 & 1 & 1e-2 \\
 Humanoid 17x1 & False & 5e-4 & 5 & 0.2 & 1 & 1e-2 \\
 Walker2d 3x2 & False & 1e-3 & 5 & 0.2 & 1 & 1e-2 \\
 Walker2d 6x1 & False & 5e-4 & 5 & 0.2 & 1 & 5e-3 \\
 ManyagentSwimmer 10x2 & False & 1e-4 & 5 & 0.15 & 1 & 1e-2 \\
 ManyagentSwimmer 20x1 & False & 1e-4 & 5 & 0.15 & 1 & 1e-2 \\
 \botrule
\end{tabular}
\end{table}

\begin{table}[]
\caption{Different hyperparameters used for clip-HAPPO in the SMAC domain.}
    \label{tab:cliphapposmachyper}
    \centering
    \begin{tabular}{ p{2.5cm}|p{1cm} p{1.5cm} p{1cm} p{1.5cm} p{1.5cm} p{1cm}} 
 \toprule
 scenarios & network & ppo/critic epoch & clip param & actor/critic mini batch & gamma & state type \\
 \hline
 27m\_vs\_30m & RNN & 5 & 0.05 & 0.2 & 0.95 & FP \\ 
 3s5z\_vs\_3s6z  & RNN & 5 & 0.2 & 2 & 0.95 & FP \\ 
 corridor & MLP & 5 & 0.2 & 1 & 0.99 & FP \\
 \botrule
\end{tabular}
\end{table}

\begin{table}[]
\caption{Different hyperparameters used for clip-HATRPO in the SMAC domain.}
    \label{tab:cliphatrposmachyper}
    \centering
    \begin{tabular}{ p{2.5cm}|p{1cm} p{1.5cm} p{1cm} p{1.5cm} p{1.5cm} p{1cm}} 
 \toprule
 scenarios & network & critic lr & gamma & kl threshold & backtrack coeff & state type \\
 \hline
 27m\_vs\_30m & RNN & 5e-4 & 0.99 & 1e-3 & 0.8 & FP \\ 
 3s5z\_vs\_3s6z  & MLP & 5e-4 & 0.99 & 5e-3 & 0.5 & FP \\ 
 corridor & RNN & 5e-4 & 0.99 & 0.12 & 0.5 & FP \\
 \botrule
\end{tabular}
\end{table}

\begin{table}[]
\caption{Hyperparameter settings for MPE domain.}
\label{tab:mpehyper} 
\centering
\catcode`,=\active
\def,{\char`,\allowbreak}
\renewcommand\arraystretch{1.2}
\begin{tabular}{p{3cm}<{\raggedright} p{2.5cm} | p{2.5cm}<{\raggedright} p{1.5cm} }
\toprule
   \textbf{Hyperparameters}             & \textbf{Values}    & \textbf{Hyperparameters}             & \textbf{Values}      \\ 
  \midrule
    ppo/critic epoch & 5 (10 in \textit{reference}) & clip param & 0.2 \\
    entropy coef & 0.01  & backtrack coef & 0.8 \\
    linear lr decay & False &
    network  & MLP    \\
    actor/critic lr & 5e-4 & gamma & 0.99 \\
    actor/critic minibatch & 1 & kl threshold & 0.005 \\
  \botrule
\end{tabular} 
\end{table}




\section{{Additional Experiment Results}} \label{sec:c}
We present additional experiment results on the changing number of threads on 6 scenarios from the Mamujoco benchmark. For all experiments, we keep the hyperparameter the same as in the main experiments with MaMujoco, with only the training threads reduced. To demonstrate the effectiveness of our method under a low resource constraints regime, we test on a variety of 6 different scenarios, with the threads reduced to $1/2$ and $1/4$ of the usual values, which corresponds to using 10 and 5 threads during training. Fig. \ref{fig:mujoco_10appendix} {and Fig.} \ref{fig:mujoco_5appendix} {illustrate the performance of HAPPO and our clip-HAPPO. For comparison, our method better maintains the performance than the HAPPO in most settings. 
HAPPO only outperforms ours in the HalfCheetah-v2-6x1 with 5 threads settings.
In terms of performance, our clip HAPPO maintains stable performance across 4 of 6 tested environments when changing from 10 to 5 threads.
A lower value of threads, for example, 1 or 2, would hurt the learning performance as the estimation becomes unstable both algorithms fail to extract the useful signal.
On the other hand, the unstable issue could be balanced by adjusting other hyper-parameters such as the policy learning rate, target entropy parameters, or the number of PPO epochs.
}

\begin{figure*}[htbp]

\centering\includegraphics[width=1.0\columnwidth]{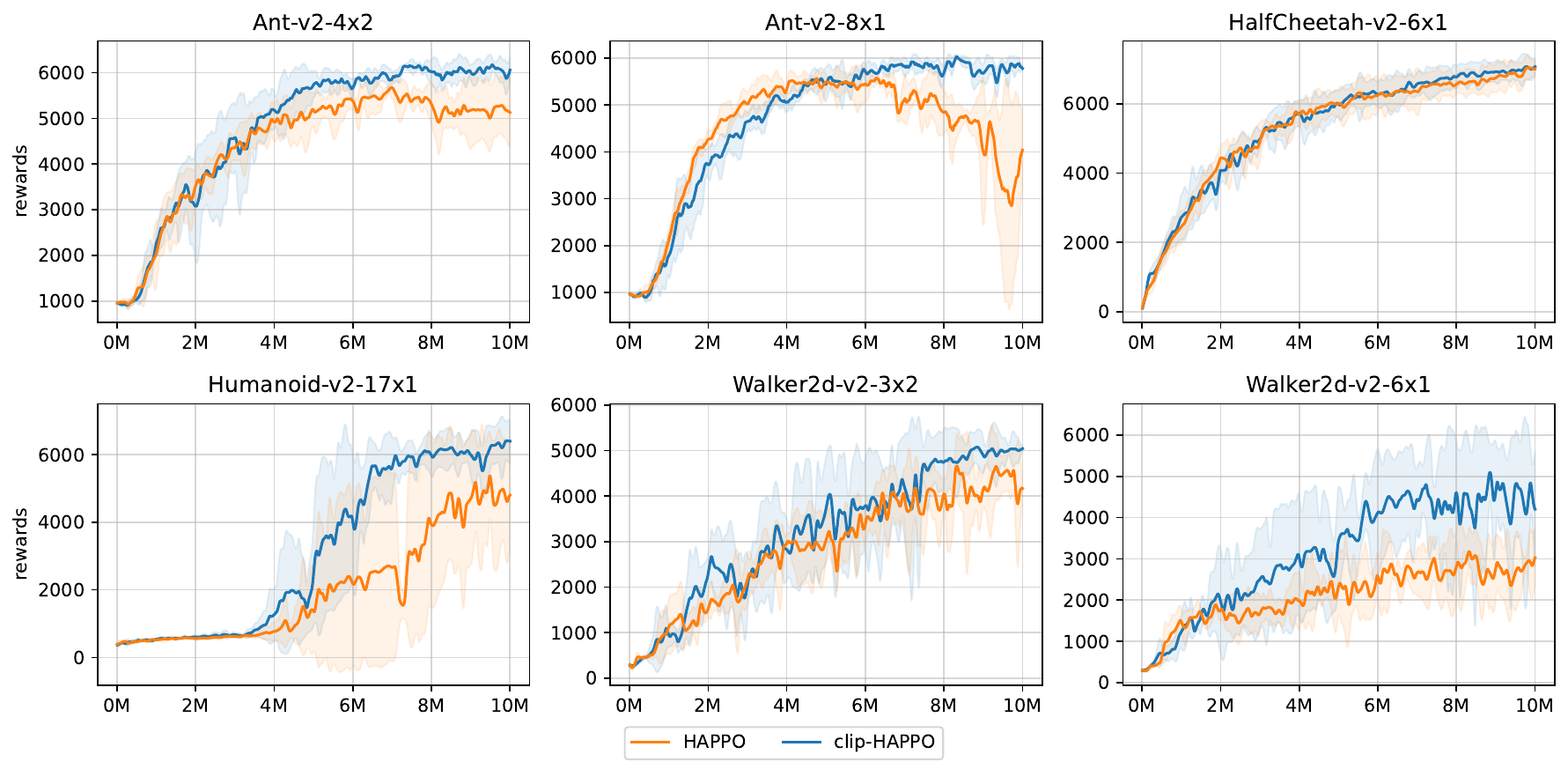}
  \caption{Results on 6 environments of Mamujoco with half the number of threads.}
  \label{fig:mujoco_10appendix}
\end{figure*}

\begin{figure*}[htbp]

\centering\includegraphics[width=1.0\columnwidth]{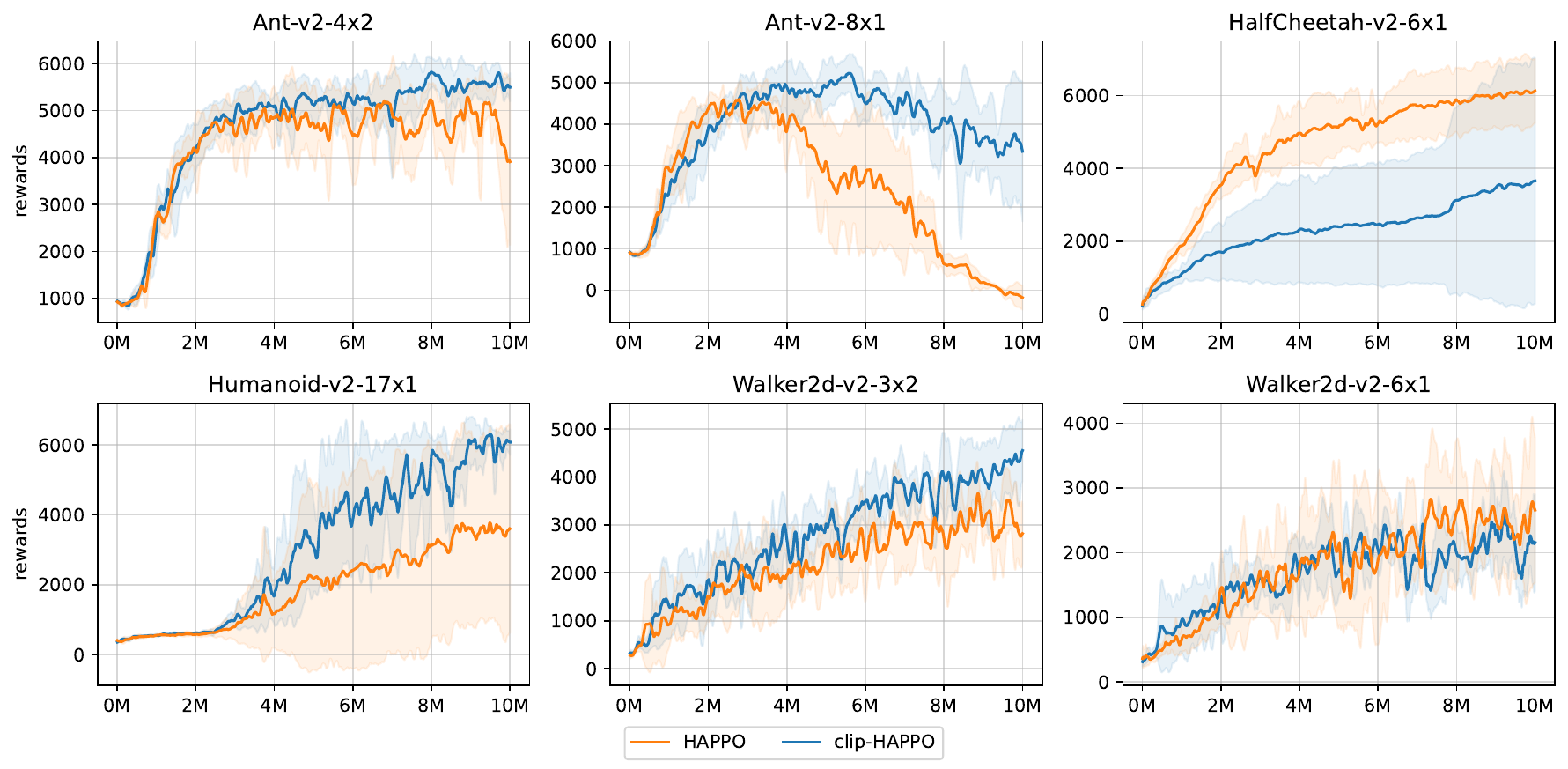}
  \caption{Results on 6 environments of Mamujoco with a quarter the number of threads.}
  \label{fig:mujoco_5appendix}
\end{figure*}

\end{appendices}


\clearpage

\begin{thebibliography}{42}
\ifx \bisbn   \undefined \def \bisbn  #1{ISBN #1}\fi
\ifx \binits  \undefined \def \binits#1{#1}\fi
\ifx \bauthor  \undefined \def \bauthor#1{#1}\fi
\ifx \batitle  \undefined \def \batitle#1{#1}\fi
\ifx \bjtitle  \undefined \def \bjtitle#1{#1}\fi
\ifx \bvolume  \undefined \def \bvolume#1{\textbf{#1}}\fi
\ifx \byear  \undefined \def \byear#1{#1}\fi
\ifx \bissue  \undefined \def \bissue#1{#1}\fi
\ifx \bfpage  \undefined \def \bfpage#1{#1}\fi
\ifx \blpage  \undefined \def \blpage #1{#1}\fi
\ifx \burl  \undefined \def \burl#1{\textsf{#1}}\fi
\ifx \doiurl  \undefined \def \doiurl#1{\url{https://doi.org/#1}}\fi
\ifx \betal  \undefined \def \betal{\textit{et al.}}\fi
\ifx \binstitute  \undefined \def \binstitute#1{#1}\fi
\ifx \binstitutionaled  \undefined \def \binstitutionaled#1{#1}\fi
\ifx \bctitle  \undefined \def \bctitle#1{#1}\fi
\ifx \beditor  \undefined \def \beditor#1{#1}\fi
\ifx \bpublisher  \undefined \def \bpublisher#1{#1}\fi
\ifx \bbtitle  \undefined \def \bbtitle#1{#1}\fi
\ifx \bedition  \undefined \def \bedition#1{#1}\fi
\ifx \bseriesno  \undefined \def \bseriesno#1{#1}\fi
\ifx \blocation  \undefined \def \blocation#1{#1}\fi
\ifx \bsertitle  \undefined \def \bsertitle#1{#1}\fi
\ifx \bsnm \undefined \def \bsnm#1{#1}\fi
\ifx \bsuffix \undefined \def \bsuffix#1{#1}\fi
\ifx \bparticle \undefined \def \bparticle#1{#1}\fi
\ifx \barticle \undefined \def \barticle#1{#1}\fi
\bibcommenthead
\ifx \bconfdate \undefined \def \bconfdate #1{#1}\fi
\ifx \botherref \undefined \def \botherref #1{#1}\fi
\ifx \url \undefined \def \url#1{\textsf{#1}}\fi
\ifx \bchapter \undefined \def \bchapter#1{#1}\fi
\ifx \bbook \undefined \def \bbook#1{#1}\fi
\ifx \bcomment \undefined \def \bcomment#1{#1}\fi
\ifx \oauthor \undefined \def \oauthor#1{#1}\fi
\ifx \citeauthoryear \undefined \def \citeauthoryear#1{#1}\fi
\ifx \endbibitem  \undefined \def \endbibitem {}\fi
\ifx \bconflocation  \undefined \def \bconflocation#1{#1}\fi
\ifx \arxivurl  \undefined \def \arxivurl#1{\textsf{#1}}\fi
\csname PreBibitemsHook\endcsname

\bibitem[\protect\citeauthoryear{Schulman et~al.}{2015}]{schulman2015trust}
\begin{bchapter}
\bauthor{\bsnm{Schulman}, \binits{J.}},
\bauthor{\bsnm{Levine}, \binits{S.}},
\bauthor{\bsnm{Abbeel}, \binits{P.}},
\bauthor{\bsnm{Jordan}, \binits{M.}},
\bauthor{\bsnm{Moritz}, \binits{P.}}:
\bctitle{Trust region policy optimization}.
In: \bbtitle{International Conference on Machine Learning},
pp. \bfpage{1889}--\blpage{1897}
(\byear{2015}).
\bcomment{PMLR}
\end{bchapter}
\endbibitem

\bibitem[\protect\citeauthoryear{Schulman et~al.}{2017}]{schulman2017proximal}
\begin{botherref}
\oauthor{\bsnm{Schulman}, \binits{J.}},
\oauthor{\bsnm{Wolski}, \binits{F.}},
\oauthor{\bsnm{Dhariwal}, \binits{P.}},
\oauthor{\bsnm{Radford}, \binits{A.}},
\oauthor{\bsnm{Klimov}, \binits{O.}}:
Proximal policy optimization algorithms.
arXiv preprint arXiv:1707.06347
(2017)
\end{botherref}
\endbibitem

\bibitem[\protect\citeauthoryear{Kuba et~al.}{2022}]{kuba2021trust}
\begin{bchapter}
\bauthor{\bsnm{Kuba}, \binits{J.G.}},
\bauthor{\bsnm{Chen}, \binits{R.}},
\bauthor{\bsnm{Wen}, \binits{M.}},
\bauthor{\bsnm{Wen}, \binits{Y.}},
\bauthor{\bsnm{Sun}, \binits{F.}},
\bauthor{\bsnm{Wang}, \binits{J.}},
\bauthor{\bsnm{Yang}, \binits{Y.}}:
\bctitle{Trust region policy optimisation in multi-agent reinforcement learning}.
In: \bbtitle{International Conference on Learning Representations}
(\byear{2022}).
\burl{https://openreview.net/forum?id=EcGGFkNTxdJ}
\end{bchapter}
\endbibitem

\bibitem[\protect\citeauthoryear{Rashid et~al.}{2020}]{rashid2020monotonic}
\begin{barticle}
\bauthor{\bsnm{Rashid}, \binits{T.}},
\bauthor{\bsnm{Samvelyan}, \binits{M.}},
\bauthor{\bsnm{De~Witt}, \binits{C.S.}},
\bauthor{\bsnm{Farquhar}, \binits{G.}},
\bauthor{\bsnm{Foerster}, \binits{J.}},
\bauthor{\bsnm{Whiteson}, \binits{S.}}:
\batitle{Monotonic value function factorisation for deep multi-agent reinforcement learning}.
\bjtitle{The Journal of Machine Learning Research}
\bvolume{21}(\bissue{1}),
\bfpage{7234}--\blpage{7284}
(\byear{2020})
\end{barticle}
\endbibitem

\bibitem[\protect\citeauthoryear{Lowe et~al.}{2017}]{lowe2017multi}
\begin{botherref}
\oauthor{\bsnm{Lowe}, \binits{R.}},
\oauthor{\bsnm{Wu}, \binits{Y.I.}},
\oauthor{\bsnm{Tamar}, \binits{A.}},
\oauthor{\bsnm{Harb}, \binits{J.}},
\oauthor{\bsnm{Pieter~Abbeel}, \binits{O.}},
\oauthor{\bsnm{Mordatch}, \binits{I.}}:
Multi-agent actor-critic for mixed cooperative-competitive environments.
Advances in neural information processing systems
\textbf{30}
(2017)
\end{botherref}
\endbibitem

\bibitem[\protect\citeauthoryear{de~Witt et~al.}{2020}]{de2020independent}
\begin{botherref}
\oauthor{\bsnm{Witt}, \binits{C.S.}},
\oauthor{\bsnm{Gupta}, \binits{T.}},
\oauthor{\bsnm{Makoviichuk}, \binits{D.}},
\oauthor{\bsnm{Makoviychuk}, \binits{V.}},
\oauthor{\bsnm{Torr}, \binits{P.H.}},
\oauthor{\bsnm{Sun}, \binits{M.}},
\oauthor{\bsnm{Whiteson}, \binits{S.}}:
Is independent learning all you need in the starcraft multi-agent challenge?
arXiv preprint arXiv:2011.09533
(2020)
\end{botherref}
\endbibitem

\bibitem[\protect\citeauthoryear{Yu et~al.}{2022}]{yu2022surprising}
\begin{barticle}
\bauthor{\bsnm{Yu}, \binits{C.}},
\bauthor{\bsnm{Velu}, \binits{A.}},
\bauthor{\bsnm{Vinitsky}, \binits{E.}},
\bauthor{\bsnm{Gao}, \binits{J.}},
\bauthor{\bsnm{Wang}, \binits{Y.}},
\bauthor{\bsnm{Bayen}, \binits{A.}},
\bauthor{\bsnm{Wu}, \binits{Y.}}:
\batitle{The surprising effectiveness of ppo in cooperative multi-agent games}.
\bjtitle{Advances in Neural Information Processing Systems}
\bvolume{35},
\bfpage{24611}--\blpage{24624}
(\byear{2022})
\end{barticle}
\endbibitem

\bibitem[\protect\citeauthoryear{Daskalakis et~al.}{2020}]{daskalakis2020independent}
\begin{barticle}
\bauthor{\bsnm{Daskalakis}, \binits{C.}},
\bauthor{\bsnm{Foster}, \binits{D.J.}},
\bauthor{\bsnm{Golowich}, \binits{N.}}:
\batitle{Independent policy gradient methods for competitive reinforcement learning}.
\bjtitle{Advances in neural information processing systems}
\bvolume{33},
\bfpage{5527}--\blpage{5540}
(\byear{2020})
\end{barticle}
\endbibitem

\bibitem[\protect\citeauthoryear{Leonardos et~al.}{2022}]{leonardos2021global}
\begin{bchapter}
\bauthor{\bsnm{Leonardos}, \binits{S.}},
\bauthor{\bsnm{Overman}, \binits{W.}},
\bauthor{\bsnm{Panageas}, \binits{I.}},
\bauthor{\bsnm{Piliouras}, \binits{G.}}:
\bctitle{Global convergence of multi-agent policy gradient in markov potential games}.
In: \bbtitle{International Conference on Learning Representations}
(\byear{2022}).
\burl{https://openreview.net/forum?id=gfwON7rAm4}
\end{bchapter}
\endbibitem

\bibitem[\protect\citeauthoryear{Ding et~al.}{2022}]{ding2022independent}
\begin{bchapter}
\bauthor{\bsnm{Ding}, \binits{D.}},
\bauthor{\bsnm{Wei}, \binits{C.-Y.}},
\bauthor{\bsnm{Zhang}, \binits{K.}},
\bauthor{\bsnm{Jovanovic}, \binits{M.}}:
\bctitle{Independent policy gradient for large-scale markov potential games: Sharper rates, function approximation, and game-agnostic convergence}.
In: \bbtitle{International Conference on Machine Learning},
pp. \bfpage{5166}--\blpage{5220}
(\byear{2022}).
\bcomment{PMLR}
\end{bchapter}
\endbibitem

\bibitem[\protect\citeauthoryear{Foerster et~al.}{2018}]{foerster2018counterfactual}
\begin{bchapter}
\bauthor{\bsnm{Foerster}, \binits{J.}},
\bauthor{\bsnm{Farquhar}, \binits{G.}},
\bauthor{\bsnm{Afouras}, \binits{T.}},
\bauthor{\bsnm{Nardelli}, \binits{N.}},
\bauthor{\bsnm{Whiteson}, \binits{S.}}:
\bctitle{Counterfactual multi-agent policy gradients}.
In: \bbtitle{Proceedings of the AAAI Conference on Artificial Intelligence},
vol. \bseriesno{32}
(\byear{2018})
\end{bchapter}
\endbibitem

\bibitem[\protect\citeauthoryear{Engstrom et~al.}{2019}]{engstrom2019implementation}
\begin{bchapter}
\bauthor{\bsnm{Engstrom}, \binits{L.}},
\bauthor{\bsnm{Ilyas}, \binits{A.}},
\bauthor{\bsnm{Santurkar}, \binits{S.}},
\bauthor{\bsnm{Tsipras}, \binits{D.}},
\bauthor{\bsnm{Janoos}, \binits{F.}},
\bauthor{\bsnm{Rudolph}, \binits{L.}},
\bauthor{\bsnm{Madry}, \binits{A.}}:
\bctitle{Implementation matters in deep rl: A case study on ppo and trpo}.
In: \bbtitle{International Conference on Learning Representations}
(\byear{2019})
\end{bchapter}
\endbibitem

\bibitem[\protect\citeauthoryear{Hernandez-Leal et~al.}{2017}]{hernandez2017survey}
\begin{botherref}
\oauthor{\bsnm{Hernandez-Leal}, \binits{P.}},
\oauthor{\bsnm{Kaisers}, \binits{M.}},
\oauthor{\bsnm{Baarslag}, \binits{T.}},
\oauthor{\bsnm{De~Cote}, \binits{E.M.}}:
A survey of learning in multiagent environments: Dealing with non-stationarity.
arXiv preprint arXiv:1707.09183
(2017)
\end{botherref}
\endbibitem

\bibitem[\protect\citeauthoryear{Wen et~al.}{2022}]{wen2022multi}
\begin{barticle}
\bauthor{\bsnm{Wen}, \binits{M.}},
\bauthor{\bsnm{Kuba}, \binits{J.}},
\bauthor{\bsnm{Lin}, \binits{R.}},
\bauthor{\bsnm{Zhang}, \binits{W.}},
\bauthor{\bsnm{Wen}, \binits{Y.}},
\bauthor{\bsnm{Wang}, \binits{J.}},
\bauthor{\bsnm{Yang}, \binits{Y.}}:
\batitle{Multi-agent reinforcement learning is a sequence modeling problem}.
\bjtitle{Advances in Neural Information Processing Systems}
\bvolume{35},
\bfpage{16509}--\blpage{16521}
(\byear{2022})
\end{barticle}
\endbibitem

\bibitem[\protect\citeauthoryear{Zhao et~al.}{2023}]{zhao2023local}
\begin{bchapter}
\bauthor{\bsnm{Zhao}, \binits{Y.}},
\bauthor{\bsnm{Yang}, \binits{Z.}},
\bauthor{\bsnm{Wang}, \binits{Z.}},
\bauthor{\bsnm{Lee}, \binits{J.D.}}:
\bctitle{Local optimization achieves global optimality in multi-agent reinforcement learning}.
In: \bbtitle{Proceedings of the 40th International Conference on Machine Learning},
vol. \bseriesno{202},
pp. \bfpage{42200}--\blpage{42226}
(\byear{2023}).
\burl{https://proceedings.mlr.press/v202/zhao23j.html}
\end{bchapter}
\endbibitem

\bibitem[\protect\citeauthoryear{Li et~al.}{2022}]{li2021dealing}
\begin{bchapter}
\bauthor{\bsnm{Li}, \binits{W.}},
\bauthor{\bsnm{Wang}, \binits{X.}},
\bauthor{\bsnm{Jin}, \binits{B.}},
\bauthor{\bsnm{Sheng}, \binits{J.}},
\bauthor{\bsnm{Zha}, \binits{H.}}:
\bctitle{Dealing with non-stationarity in {MARL} via trust-region decomposition}.
In: \bbtitle{International Conference on Learning Representations}
(\byear{2022}).
\burl{https://openreview.net/forum?id=XHUxf5aRB3s}
\end{bchapter}
\endbibitem

\bibitem[\protect\citeauthoryear{Peng et~al.}{2021}]{peng2021facmac}
\begin{barticle}
\bauthor{\bsnm{Peng}, \binits{B.}},
\bauthor{\bsnm{Rashid}, \binits{T.}},
\bauthor{\bsnm{Witt}, \binits{C.}},
\bauthor{\bsnm{Kamienny}, \binits{P.-A.}},
\bauthor{\bsnm{Torr}, \binits{P.}},
\bauthor{\bsnm{B{\"o}hmer}, \binits{W.}},
\bauthor{\bsnm{Whiteson}, \binits{S.}}:
\batitle{Facmac: Factored multi-agent centralised policy gradients}.
\bjtitle{Advances in Neural Information Processing Systems}
\bvolume{34},
\bfpage{12208}--\blpage{12221}
(\byear{2021})
\end{barticle}
\endbibitem

\bibitem[\protect\citeauthoryear{Todorov et~al.}{2012}]{todorov2012mujoco}
\begin{bchapter}
\bauthor{\bsnm{Todorov}, \binits{E.}},
\bauthor{\bsnm{Erez}, \binits{T.}},
\bauthor{\bsnm{Tassa}, \binits{Y.}}:
\bctitle{Mujoco: A physics engine for model-based control}.
In: \bbtitle{2012 IEEE/RSJ International Conference on Intelligent Robots and Systems},
pp. \bfpage{5026}--\blpage{5033}
(\byear{2012}).
\bcomment{IEEE}
\end{bchapter}
\endbibitem

\bibitem[\protect\citeauthoryear{Wu et~al.}{2021}]{wu2021coordinated}
\begin{barticle}
\bauthor{\bsnm{Wu}, \binits{Z.}},
\bauthor{\bsnm{Yu}, \binits{C.}},
\bauthor{\bsnm{Ye}, \binits{D.}},
\bauthor{\bsnm{Zhang}, \binits{J.}},
\bauthor{\bsnm{Zhuo}, \binits{H.H.}}, \betal:
\batitle{Coordinated proximal policy optimization}.
\bjtitle{Advances in Neural Information Processing Systems}
\bvolume{34},
\bfpage{26437}--\blpage{26448}
(\byear{2021})
\end{barticle}
\endbibitem

\bibitem[\protect\citeauthoryear{Liu et~al.}{2019}]{liu2019neural}
\begin{botherref}
\oauthor{\bsnm{Liu}, \binits{B.}},
\oauthor{\bsnm{Cai}, \binits{Q.}},
\oauthor{\bsnm{Yang}, \binits{Z.}},
\oauthor{\bsnm{Wang}, \binits{Z.}}:
Neural trust region/proximal policy optimization attains globally optimal policy.
Advances in neural information processing systems
\textbf{32}
(2019)
\end{botherref}
\endbibitem

\bibitem[\protect\citeauthoryear{Huang et~al.}{2021}]{huang2021neural}
\begin{botherref}
\oauthor{\bsnm{Huang}, \binits{N.-C.}},
\oauthor{\bsnm{Hsieh}, \binits{P.-C.}},
\oauthor{\bsnm{Ho}, \binits{K.-H.}},
\oauthor{\bsnm{Yao}, \binits{H.-Y.}},
\oauthor{\bsnm{Hu}, \binits{K.-C.}},
\oauthor{\bsnm{Ouyang}, \binits{L.-C.}},
\oauthor{\bsnm{Wu}, \binits{I.}}, et al.:
Neural ppo-clip attains global optimality: A hinge loss perspective.
arXiv preprint arXiv:2110.13799
(2021)
\end{botherref}
\endbibitem

\bibitem[\protect\citeauthoryear{Kakade and Langford}{2002}]{kakade2002approximately}
\begin{bchapter}
\bauthor{\bsnm{Kakade}, \binits{S.}},
\bauthor{\bsnm{Langford}, \binits{J.}}:
\bctitle{Approximately optimal approximate reinforcement learning}.
In: \bbtitle{Proceedings of the Nineteenth International Conference on Machine Learning},
pp. \bfpage{267}--\blpage{274}
(\byear{2002})
\end{bchapter}
\endbibitem

\bibitem[\protect\citeauthoryear{Shani et~al.}{2020}]{shani2020adaptive}
\begin{bchapter}
\bauthor{\bsnm{Shani}, \binits{L.}},
\bauthor{\bsnm{Efroni}, \binits{Y.}},
\bauthor{\bsnm{Mannor}, \binits{S.}}:
\bctitle{Adaptive trust region policy optimization: Global convergence and faster rates for regularized mdps}.
In: \bbtitle{Proceedings of the AAAI Conference on Artificial Intelligence},
vol. \bseriesno{34},
pp. \bfpage{5668}--\blpage{5675}
(\byear{2020})
\end{bchapter}
\endbibitem

\bibitem[\protect\citeauthoryear{Papoudakis et~al.}{2020}]{papoudakis2020benchmarking}
\begin{botherref}
\oauthor{\bsnm{Papoudakis}, \binits{G.}},
\oauthor{\bsnm{Christianos}, \binits{F.}},
\oauthor{\bsnm{Sch{\"a}fer}, \binits{L.}},
\oauthor{\bsnm{Albrecht}, \binits{S.V.}}:
Benchmarking multi-agent deep reinforcement learning algorithms in cooperative tasks.
arXiv preprint arXiv:2006.07869
(2020)
\end{botherref}
\endbibitem

\bibitem[\protect\citeauthoryear{Christianos et~al.}{2023}]{christianos2022pareto}
\begin{botherref}
\oauthor{\bsnm{Christianos}, \binits{F.}},
\oauthor{\bsnm{Papoudakis}, \binits{G.}},
\oauthor{\bsnm{Albrecht}, \binits{S.V.}}:
Pareto actor-critic for equilibrium selection in multi-agent reinforcement learning.
Transactions on Machine Learning Research
(2023)
\end{botherref}
\endbibitem

\bibitem[\protect\citeauthoryear{de~Witt et~al.}{2020}]{de2020deep}
\begin{botherref}
\oauthor{\bsnm{Witt}, \binits{C.S.}},
\oauthor{\bsnm{Peng}, \binits{B.}},
\oauthor{\bsnm{Kamienny}, \binits{P.-A.}},
\oauthor{\bsnm{Torr}, \binits{P.}},
\oauthor{\bsnm{B{\"o}hmer}, \binits{W.}},
\oauthor{\bsnm{Whiteson}, \binits{S.}}:
Deep multi-agent reinforcement learning for decentralized continuous cooperative control.
arXiv preprint arXiv:2003.06709
\textbf{19}
(2020)
\end{botherref}
\endbibitem

\bibitem[\protect\citeauthoryear{Samvelyan et~al.}{2019}]{samvelyan2019starcraft}
\begin{botherref}
\oauthor{\bsnm{Samvelyan}, \binits{M.}},
\oauthor{\bsnm{Rashid}, \binits{T.}},
\oauthor{\bsnm{Witt}, \binits{C.S.}},
\oauthor{\bsnm{Farquhar}, \binits{G.}},
\oauthor{\bsnm{Nardelli}, \binits{N.}},
\oauthor{\bsnm{Rudner}, \binits{T.G.J.}},
\oauthor{\bsnm{Hung}, \binits{C.-M.}},
\oauthor{\bsnm{Torr}, \binits{P.H.S.}},
\oauthor{\bsnm{Foerster}, \binits{J.}},
\oauthor{\bsnm{Whiteson}, \binits{S.}}:
{The} {StarCraft} {Multi}-{Agent} {Challenge}.
CoRR
\textbf{abs/1902.04043}
(2019)
\end{botherref}
\endbibitem

\bibitem[\protect\citeauthoryear{Zhong et~al.}{2024}]{zhong2023heterogeneous}
\begin{barticle}
\bauthor{\bsnm{Zhong}, \binits{Y.}},
\bauthor{\bsnm{Kuba}, \binits{J.G.}},
\bauthor{\bsnm{Feng}, \binits{X.}},
\bauthor{\bsnm{Hu}, \binits{S.}},
\bauthor{\bsnm{Ji}, \binits{J.}},
\bauthor{\bsnm{Yang}, \binits{Y.}}:
\batitle{Heterogeneous-agent reinforcement learning}.
\bjtitle{Journal of Machine Learning Research}
\bvolume{25}(\bissue{32}),
\bfpage{1}--\blpage{67}
(\byear{2024})
\end{barticle}
\endbibitem

\bibitem[\protect\citeauthoryear{Kirkpatrick et~al.}{2017}]{kirkpatrick2017overcoming}
\begin{barticle}
\bauthor{\bsnm{Kirkpatrick}, \binits{J.}},
\bauthor{\bsnm{Pascanu}, \binits{R.}},
\bauthor{\bsnm{Rabinowitz}, \binits{N.}},
\bauthor{\bsnm{Veness}, \binits{J.}},
\bauthor{\bsnm{Desjardins}, \binits{G.}},
\bauthor{\bsnm{Rusu}, \binits{A.A.}},
\bauthor{\bsnm{Milan}, \binits{K.}},
\bauthor{\bsnm{Quan}, \binits{J.}},
\bauthor{\bsnm{Ramalho}, \binits{T.}},
\bauthor{\bsnm{Grabska-Barwinska}, \binits{A.}}, \betal:
\batitle{Overcoming catastrophic forgetting in neural networks}.
\bjtitle{Proceedings of the national academy of sciences}
\bvolume{114}(\bissue{13}),
\bfpage{3521}--\blpage{3526}
(\byear{2017})
\end{barticle}
\endbibitem

\bibitem[\protect\citeauthoryear{Li and He}{2023}]{li2023multiagent}
\begin{botherref}
\oauthor{\bsnm{Li}, \binits{H.}},
\oauthor{\bsnm{He}, \binits{H.}}:
Multiagent trust region policy optimization.
IEEE Transactions on Neural Networks and Learning Systems
(2023)
\end{botherref}
\endbibitem

\bibitem[\protect\citeauthoryear{Gu et~al.}{2021}]{gu2021multi}
\begin{botherref}
\oauthor{\bsnm{Gu}, \binits{S.}},
\oauthor{\bsnm{Kuba}, \binits{J.G.}},
\oauthor{\bsnm{Wen}, \binits{M.}},
\oauthor{\bsnm{Chen}, \binits{R.}},
\oauthor{\bsnm{Wang}, \binits{Z.}},
\oauthor{\bsnm{Tian}, \binits{Z.}},
\oauthor{\bsnm{Wang}, \binits{J.}},
\oauthor{\bsnm{Knoll}, \binits{A.}},
\oauthor{\bsnm{Yang}, \binits{Y.}}:
Multi-agent constrained policy optimisation.
arXiv preprint arXiv:2110.02793
(2021)
\end{botherref}
\endbibitem

\bibitem[\protect\citeauthoryear{Tucker et~al.}{2018}]{tucker2018mirage}
\begin{bchapter}
\bauthor{\bsnm{Tucker}, \binits{G.}},
\bauthor{\bsnm{Bhupatiraju}, \binits{S.}},
\bauthor{\bsnm{Gu}, \binits{S.}},
\bauthor{\bsnm{Turner}, \binits{R.}},
\bauthor{\bsnm{Ghahramani}, \binits{Z.}},
\bauthor{\bsnm{Levine}, \binits{S.}}:
\bctitle{The mirage of action-dependent baselines in reinforcement learning}.
In: \bbtitle{International Conference on Machine Learning},
pp. \bfpage{5015}--\blpage{5024}
(\byear{2018}).
\bcomment{PMLR}
\end{bchapter}
\endbibitem

\bibitem[\protect\citeauthoryear{Chung et~al.}{2021}]{chung2021beyond}
\begin{bchapter}
\bauthor{\bsnm{Chung}, \binits{W.}},
\bauthor{\bsnm{Thomas}, \binits{V.}},
\bauthor{\bsnm{Machado}, \binits{M.C.}},
\bauthor{\bsnm{Le~Roux}, \binits{N.}}:
\bctitle{Beyond variance reduction: Understanding the true impact of baselines on policy optimization}.
In: \bbtitle{International Conference on Machine Learning},
pp. \bfpage{1999}--\blpage{2009}
(\byear{2021}).
\bcomment{PMLR}
\end{bchapter}
\endbibitem

\bibitem[\protect\citeauthoryear{Munos et~al.}{2016}]{munos2016safe}
\begin{botherref}
\oauthor{\bsnm{Munos}, \binits{R.}},
\oauthor{\bsnm{Stepleton}, \binits{T.}},
\oauthor{\bsnm{Harutyunyan}, \binits{A.}},
\oauthor{\bsnm{Bellemare}, \binits{M.}}:
Safe and efficient off-policy reinforcement learning.
Advances in neural information processing systems
\textbf{29}
(2016)
\end{botherref}
\endbibitem

\bibitem[\protect\citeauthoryear{Kuba et~al.}{2021}]{kuba2021settling}
\begin{barticle}
\bauthor{\bsnm{Kuba}, \binits{J.G.}},
\bauthor{\bsnm{Wen}, \binits{M.}},
\bauthor{\bsnm{Meng}, \binits{L.}},
\bauthor{\bsnm{Zhang}, \binits{H.}},
\bauthor{\bsnm{Mguni}, \binits{D.}},
\bauthor{\bsnm{Wang}, \binits{J.}},
\bauthor{\bsnm{Yang}, \binits{Y.}}, \betal:
\batitle{Settling the variance of multi-agent policy gradients}.
\bjtitle{Advances in Neural Information Processing Systems}
\bvolume{34},
\bfpage{13458}--\blpage{13470}
(\byear{2021})
\end{barticle}
\endbibitem

\bibitem[\protect\citeauthoryear{Wang et~al.}{2023}]{wang2023order}
\begin{bchapter}
\bauthor{\bsnm{Wang}, \binits{X.}},
\bauthor{\bsnm{Tian}, \binits{Z.}},
\bauthor{\bsnm{Wan}, \binits{Z.}},
\bauthor{\bsnm{Wen}, \binits{Y.}},
\bauthor{\bsnm{Wang}, \binits{J.}},
\bauthor{\bsnm{Zhang}, \binits{W.}}:
\bctitle{Order matters: Agent-by-agent policy optimization}.
In: \bbtitle{The Eleventh International Conference on Learning Representations}
(\byear{2023}).
\burl{https://openreview.net/forum?id=Q-neeWNVv1}
\end{bchapter}
\endbibitem

\bibitem[\protect\citeauthoryear{Sunehag et~al.}{2018}]{sunehag2017value}
\begin{bchapter}
\bauthor{\bsnm{Sunehag}, \binits{P.}},
\bauthor{\bsnm{Lever}, \binits{G.}},
\bauthor{\bsnm{Gruslys}, \binits{A.}},
\bauthor{\bsnm{Czarnecki}, \binits{W.M.}},
\bauthor{\bsnm{Zambaldi}, \binits{V.}},
\bauthor{\bsnm{Jaderberg}, \binits{M.}},
\bauthor{\bsnm{Lanctot}, \binits{M.}},
\bauthor{\bsnm{Sonnerat}, \binits{N.}},
\bauthor{\bsnm{Leibo}, \binits{J.Z.}},
\bauthor{\bsnm{Tuyls}, \binits{K.}},
\bauthor{\bsnm{Graepel}, \binits{T.}}:
\bctitle{Value-decomposition networks for cooperative multi-agent learning based on team reward}.
In: \bbtitle{Proceedings of the 17th International Conference on Autonomous Agents and MultiAgent Systems}.
\bsertitle{AAMAS '18},
pp. \bfpage{2085}--\blpage{2087}.
\bpublisher{International Conference on Autonomous Agents and Multiagent Systems},
\blocation{Richland, SC}
(\byear{2018})
\end{bchapter}
\endbibitem

\bibitem[\protect\citeauthoryear{Neu et~al.}{2017}]{neu2017unified}
\begin{botherref}
\oauthor{\bsnm{Neu}, \binits{G.}},
\oauthor{\bsnm{Jonsson}, \binits{A.}},
\oauthor{\bsnm{Gómez}, \binits{V.}}:
A unified view of entropy-regularized markov decision processes.
CoRR
\textbf{abs/1705.07798}
(2017)
\end{botherref}
\endbibitem

\bibitem[\protect\citeauthoryear{Lan}{2023}]{lan2023policy}
\begin{barticle}
\bauthor{\bsnm{Lan}, \binits{G.}}:
\batitle{Policy mirror descent for reinforcement learning: Linear convergence, new sampling complexity, and generalized problem classes}.
\bjtitle{Mathematical programming}
\bvolume{198}(\bissue{1}),
\bfpage{1059}--\blpage{1106}
(\byear{2023})
\end{barticle}
\endbibitem

\bibitem[\protect\citeauthoryear{Beck}{2017}]{beck2017first}
\begin{bbook}
\bauthor{\bsnm{Beck}, \binits{A.}}:
\bbtitle{First-order Methods in Optimization}.
\bpublisher{SIAM-Society for Industrial and Applied Mathematics},
\blocation{Philadelphia, PA, USA}
(\byear{2017})
\end{bbook}
\endbibitem

\bibitem[\protect\citeauthoryear{Agarwal et~al.}{2021}]{agarwal2021theory}
\begin{barticle}
\bauthor{\bsnm{Agarwal}, \binits{A.}},
\bauthor{\bsnm{Kakade}, \binits{S.M.}},
\bauthor{\bsnm{Lee}, \binits{J.D.}},
\bauthor{\bsnm{Mahajan}, \binits{G.}}:
\batitle{On the theory of policy gradient methods: Optimality, approximation, and distribution shift}.
\bjtitle{The Journal of Machine Learning Research}
\bvolume{22}(\bissue{1}),
\bfpage{4431}--\blpage{4506}
(\byear{2021})
\end{barticle}
\endbibitem

\bibitem[\protect\citeauthoryear{Schulman et~al.}{2017}]{schulman2017equivalence}
\begin{botherref}
\oauthor{\bsnm{Schulman}, \binits{J.}},
\oauthor{\bsnm{Chen}, \binits{X.}},
\oauthor{\bsnm{Abbeel}, \binits{P.}}:
Equivalence between policy gradients and soft q-learning.
arXiv preprint arXiv:1704.06440
(2017)
\end{botherref}
\endbibitem

\end{thebibliography}



\end{document}